\documentclass[11 pt]{article}
\usepackage{style}

\title{\LARGE\bfseries Natural Policy Gradient as Doubly Smoothed Policy Iteration: A Bellman-Operator Framework}
\author{
Phalguni Nanda$^1$ and Zaiwei Chen$^2$\\
{\small \textit{
Edwardson School of Industrial Engineering, Purdue University}}\\
{\small $^1$\href{mailto:nanda14@purdue.edu}{\texttt{nanda14@purdue.edu}}; $^2$\href{mailto:chen5252@purdue.edu}{\texttt{chen5252@purdue.edu}} 
}
}
\date{}
\begin{document}

\maketitle
\begin{abstract}
In this work, we show that natural policy gradient, a core algorithm in reinforcement learning, admits an exact formulation as a smoothed and averaged form of policy iteration. Specifically, we introduce \emph{doubly smoothed policy iteration} (DSPI), a Bellman-operator framework in which each policy is obtained by applying a regularized greedy step to a weighted average of past $Q$-functions. DSPI includes policy iteration, dual-averaged policy iteration, natural policy gradient, and more general policy dual averaging methods as special cases.

Using only monotonicity and contraction of smoothed Bellman operators, we prove distribution-free global geometric convergence of DSPI. Consequently, standard natural policy gradient and policy dual averaging achieve an iteration complexity of $\mathcal{O}((1-\gamma)^{-1}\log((1-\gamma)^{-1}\epsilon^{-1}))$
for computing an $\epsilon$-optimal policy, without modifying the MDP, adding regularization beyond the mirror map inherent in the update, or using adaptive, trajectory-dependent stepsizes. For the unregularized greedy case, corresponding to dual-averaged policy iteration, we also prove finite termination. The same Bellman-operator framework further extends to discounted MDPs with linear function approximation and stochastic shortest path problems.
\end{abstract}

\section{Introduction}\label{sec:intro}
In the last decade, reinforcement learning (RL) \citep{sutton2018reinforcement} has emerged as a principled framework for large-scale sequential decision making, with applications ranging from autonomous vehicles \citep{yurtsever2020survey} and robotics \citep{deeprl_robotics} to large language models \citep{brown2020language}. Mathematically, RL problems are typically modeled as Markov decision processes (MDPs) \citep{puterman2014markov}. The seminal work of \cite{bellman1957dynamic} showed that solving an MDP reduces to finding a fixed point of the Bellman equation. By leveraging key properties of the Bellman operator, most notably contraction and monotonicity, a range of principled algorithms have been developed, including value iteration (VI) \cite{bellman1957dynamic}, policy iteration (PI) \cite{howard1964dynamic}, and their variants \citep{puterman2014markov,bertsekas1995dynamic,bertsekas1996neuro}.

In the modern era of sequential decision making, it was recognized that an MDP can be formulated as a continuous optimization problem over the policy space. From this perspective, the policy gradient (PG) method was introduced \cite{sutton1999policy}. Building on this viewpoint and further exploiting the geometry of the policy space, the natural policy gradient (NPG) method, along with more general approaches such as policy mirror ascent (PMA) and policy dual averaging (PDA), was developed \citep{kakade2001natural,lan2023policy,ju2022policy}. Over time, practical variants of these gradient-based methods, such as trust region policy optimization \citep{schulman2015trust} and proximal policy optimization \citep{schulman2017proximal}, have become dominant approaches in large-scale applications \citep{deeprl_robotics,brown2020language}.

The empirical success of NPG and its variants has motivated extensive theoretical work on their convergence properties. Existing approaches for studying NPG can be broadly categorized into two groups. One prominent line of work adopts a gradient-based optimization viewpoint, leveraging tools such as the policy gradient theorem, the performance difference lemma, the Polyak \L{}ojasiewicz condition \cite{sutton1999policy,agarwal2021theory}, and techniques from continuous optimization \citep{lan2023policy,xiao2022convergence}. A second line of work views NPG as an approximation of PI. Indeed, the original NPG paper \cite[Theorem 2]{kakade2001natural} observed that, as the stepsize approaches infinity, NPG recovers PI. Following this perspective, NPG has been analyzed as an approximation of PI by leveraging properties of the Bellman operator and separately bounding the approximation error \cite{chen2025API,bhandari2021linear}.

Despite substantial progress, existing analyses still leave open a conceptual gap. The defining structural property of discounted MDPs is the $\ell_\infty$-norm contraction of the Bellman operator, which underlies the design and analysis of classical methods such as VI and PI. This property is not naturally visible when NPG is analyzed as a continuous optimization method over the policy space. Consequently, many optimization-based analyses either do not establish global geometric convergence \cite{agarwal2021theory} or achieve it by introducing additional regularization into the MDP or the algorithm to induce curvature \cite{cen2022fast,lan2023policy,shani2020adaptive}. In contrast, approaches that view NPG as an approximation of PI \cite{bhandari2018finite,chen2025API} exploit Bellman operator properties and obtain distribution-free geometric convergence without additional regularization, but require stepsizes chosen adaptively based on the algorithm trajectory to control the approximation error. See Section~\ref{subsec:literature} for further details.

More fundamentally, a unified perspective connecting classical dynamic programming methods based on Bellman iteration, such as PI, with modern gradient-based methods, such as NPG and PDA, remains lacking at both the algorithmic and analytical levels. In this paper, we address this gap by introducing \textit{doubly smoothed policy iteration} (DSPI), a unified framework that includes PI, dual-averaged PI, NPG, and PDA as special cases. This perspective enables a unified analysis that yields strong guarantees for a broad class of algorithms. We summarize our contributions below.

\begin{itemize}
    \item \textbf{Doubly Smoothed Policy Iteration.} We introduce DSPI, in which the policy is updated via a regularized greedy step with respect to a weighted average of past $Q$-functions. This structure admits a compact Bellman operator formulation (cf. Algorithm~\ref{alg:DSPI}, Line~4). We show that DSPI captures a broad class of existing algorithms, including PI, dual-averaged PI, and, most notably, NPG and PDA. Crucially, for NPG and PDA, interpreting these gradient-based methods as smoothed and averaged variants of PI allows the updates to be expressed via the smoothed Bellman operator, thereby enabling a principled analysis based on its properties.

    \item \textbf{A Unified Convergence Analysis.} We show that DSPI converges asymptotically whenever the stepsizes are non-summable. Moreover, under a constant stepsize, DSPI achieves distribution-free global geometric convergence. In particular,
\begin{align*}
    \|V^{\pi_k}-V^*\|_\infty 
    \leq (1-(1-\gamma)\beta)^{k-1}
    \left(\gamma\|V^*-V^{\pi_0}\|_\infty + 1\right), \quad \forall\,k \geq 1,
\end{align*}
where $V^{\pi_0}$, $V^{\pi_k}$, and $V^*$ denote the value functions of the initial policy, the $k$-th iterate, and an optimal policy, respectively, $\gamma$ is the discount factor, and $\beta$ is the constant stepsize. See Theorem~\ref{thm:DSPI} for details. As byproducts,
\begin{itemize}
    \item this result implies an iteration complexity of $\mathcal{O}((1-\gamma)^{-1}\log((1-\gamma)^{-1}\epsilon^{-1}))$ for NPG and PDA to achieve $\|V^{\pi_k}-V^*\|_\infty \leq \epsilon$ (cf. Theorem~\ref{thm:NPG}), matching the state of the art without requiring additional regularization of the MDP or the algorithm, or the use of adaptive stepsizes;
    \item it also enables us to establish finite termination of dual-averaged PI; in particular, an iteration complexity of $\mathcal{O}(mn(1-\gamma)^{-1}\log((1-\gamma)^{-1}))$ for finding an optimal policy (cf. Theorem~\ref{thm:strong_polynomial}), where $n$ and $m$ denote the sizes of the state and action spaces, respectively.
\end{itemize}
Moreover, the algorithm and analysis extend to settings such as discounted MDPs with linear function approximation, where we obtain the same convergence rate up to a function approximation error (cf. Appendix~\ref{sec:NPG-LFA}), and stochastic shortest path problems, where we establish analogous geometric convergence (cf. Appendix~\ref{sec:SSP}).
\end{itemize}

\subsection{Related Literature}\label{subsec:literature}
Since we focus on the MDP setting rather than the model-free RL setting, we restrict our discussion below to PI and NPG, rather than results that broadly analyze actor--critic algorithms. 

\textbf{Policy Iteration.} The PI method was first proposed in \cite{howard1964dynamic}. The global geometric convergence of PI was shown in \cite{howard1964dynamic,scherrer2016improved,puterman2014markov}. The proof has two main steps: (1) showing that the performance of the policies is monotonically improving, and (2) leveraging the contraction property of the Bellman operator to establish a contractive recursion. This approach also inspires our analysis of DSPI. Beyond global geometric convergence, it was shown that PI can terminate in a finite number of iterations \cite{scherrer2016improved,ye2011simplex}. The key property that enabled such strong results is that PI gradually eliminates sub-optimal actions. Such a property also plays a key role in showing the finite termination of dual-averaged PI in this work. Other variants of PI, including simplex PI, modified PI, and incremental PI, are analyzed in \cite{scherrer2016improved,ye2011simplex,bertsekas1996neuro,bertsekas1995dynamic}.

\textbf{Policy Gradient.} The PG method was first proposed in \cite{sutton1999policy}, which treats a weighted sum of the value function as a scalar objective and performs projected gradient ascent in the policy space. A comprehensive analysis of PG was carried out in \cite{agarwal2021theory} and subsequent works. Beyond the setting of finite state-action space MDPs, PG methods for more general MDPs (e.g., continuous state-action spaces with potentially unbounded costs) have also been studied in \cite{gupta2026operator}. Since the focus of this work is on NPG and more general PDA methods, we do not delve further into the literature on the convergence-rate analysis of vanilla PG.

\textbf{Natural Policy Gradient.}
NPG, proposed in \cite{kakade2001natural}, can be viewed as PG with a preconditioner. Alternatively, it can be interpreted as PMA \cite{lan2023policy,xiao2022convergence} or PDA \cite{ju2022policy}, with the divergence-generating function chosen as the Shannon entropy. The majority of existing analyses of NPG suffer from at least one of the following limitations: (1) lack of global geometric convergence \cite{agarwal2021theory,shani2020adaptive}; (2) dependence on the initial distribution or a concentrability coefficient \cite{bhandari2021linear,xiao2022convergence,lan2023policy,alfano2023novel}; (3) additional regularization of the MDP or the algorithm \cite{shani2020adaptive,cen2022fast,lan2023policy,li2024homotopic,zhan2023policy,xiao2022convergence}; or (4) reliance on adaptive stepsizes that depend on the policy trajectory \cite{bhandari2021linear,chen2025API}. See Table \ref{table:1} from Appendix \ref{app:comparison} for more details. The only exception is \cite{ju2026strongly}, which provides the state-of-the-art analysis of NPG from a continuous optimization perspective\footnote{The results in \cite{ju2026strongly} were developed for general PMA, which covers NPG as a special case.}, but their choice of stepsize depends on the Bellman error (also called the advantage gap) of the initial policy, which is not required in our result.

\section{Background}
Consider an infinite horizon discounted MDP defined by the tuple $\mathcal{M} = (\mathcal{S}, \mathcal{A}, p, \mathcal{R}, \gamma)$, where $\mathcal{S}$ and $\mathcal{A}$ denote the finite state and action spaces, respectively, $\{p(\cdot \mid s, a)\}_{(s,a)\in\mathcal{S}\times\mathcal{A}}$ denotes the transition kernel, $\mathcal{R}:\mathcal{S}\times\mathcal{A}\to[0,1]$ is the reward function, and $\gamma \in (0,1)$ is the discount factor. Throughout, we denote $n = |\mathcal{S}|$ and $m = |\mathcal{A}|$. 

Given a policy $\pi:\mathcal{S}\to\Delta(\mathcal{A})$, where $\Delta(\mathcal{A})$ denotes the set of probability distributions supported on $\mathcal{A}$, the $Q$-function $Q^\pi:\mathcal{S}\times\mathcal{A}\to\mathbb{R}$ is defined as 
\begin{align*}
    Q^\pi(s,a)=\mathbb{E}_\pi\left[\sum_{k=0}^\infty \gamma^k \mathcal{R}(S_k, A_k) \,\middle|\, S_0 = s, A_0 = a \right],\quad \forall\,(s,a)\in\mathcal{S}\times \mathcal{A},
\end{align*}
where the expectation $\mathbb{E}_\pi[\ \cdot \ ]$ is taken with respect to the randomness in both the action selection $A_k \sim \pi(\cdot \mid S_k)$ and the state transitions $S_{k+1} \sim p(\cdot \mid S_k, A_k)$. With $Q^\pi$ defined above, the value function $V^\pi:\mathcal{S}\to\mathbb{R}$ is  defined as $V^\pi(s)=\sum_{a\in\mathcal{A}}\pi(a|s)Q^\pi(s,a)$ for all $s\in\mathcal{S}$. Since the MDP has finitely many states and actions, $Q^\pi$ and $V^\pi$ can be equivalently viewed as vectors in $\mathbb{R}^{mn}$ and $\mathbb{R}^n$, respectively.

A policy $\pi^*$ is \emph{optimal} if $V^* := V^{\pi^*} \geq V^\pi$ for all $\pi$, or equivalently, $Q^* := Q^{\pi^*} \geq Q^\pi$ for all $\pi$. Throughout this paper, inequalities between vectors are understood componentwise. For notational simplicity, for any vector $x \in \mathbb{R}^{mn}$ (e.g., representing a policy or a $Q$-function), we denote by $x(s) \in \mathbb{R}^m$ the vector whose $a$-th entry is $x(s,a)$.

\subsection{The Bellman Equation and Policy Iteration}\label{subsec:Bellman}
The key to solving an MDP lies in the Bellman equation. We next present the Bellman equations for policy evaluation and policy optimization. To facilitate the connection between PI and NPG developed later in this paper, we express these equations in terms of $Q$-functions.

Given a policy $\pi$, its $Q$-function $Q^\pi$ is the unique solution to the Bellman equation $Q = \mathcal{H}^\pi(Q)$, where $\mathcal{H}^\pi:\mathbb{R}^{mn}\to\mathbb{R}^{mn}$ is the Bellman operator associated with policy $\pi$, defined as 
\begin{align*}
    [\mathcal{H}^\pi(Q)](s,a) = \mathcal{R}(s,a) + \gamma \sum_{s'\in\mathcal{S}} p(s'\mid s,a)
    \sum_{a'\in\mathcal{A}} \pi(a'\mid s') Q(s',a'),\quad \forall\,(s,a)\in\mathcal{S}\times\mathcal{A}.
\end{align*}
Similarly, the optimal $Q$-function $Q^*$ is the unique solution to the Bellman optimality equation $Q = \mathcal{H}(Q)$, where $\mathcal{H}:\mathbb{R}^{mn}\to\mathbb{R}^{mn}$ is the Bellman optimality operator defined as 
\begin{align*}
    [\mathcal{H}(Q)](s,a) = \mathcal{R}(s,a) + \gamma \sum_{s'\in\mathcal{S}} p(s'\mid s,a)
    \max_{a'\in\mathcal{A}} Q(s',a'),\quad \forall\,(s,a)\in\mathcal{S}\times\mathcal{A}.
\end{align*}
Moreover, once $Q^*$ is obtained, an optimal policy $\pi^*$ can be computed by choosing actions greedily with respect to $Q^*$. The operators $\mathcal{H}^\pi$ and $\mathcal{H}$ enjoy several useful properties. In particular, both are contraction mappings with respect to the $\ell_\infty$ norm with contraction factor $\gamma$, and are monotone in the sense that $F(Q_1)\le F(Q_2)$ whenever $Q_1\le Q_2$, where $F$ denotes either $\mathcal{H}^\pi$ or $\mathcal{H}$ \citep{puterman2014markov,bertsekas1996neuro}. One should not confuse the definition of monotonicity here with the notion of monotonicity in variational inequalities \cite{ryu2022large}. These two properties enable the design and analysis of several classical algorithms, such as VI and PI.

When expressed in terms of $Q$-functions, the PI algorithm can be compactly written as
\begin{align}\label{algo:PI:Bellman}
    \mathcal{H}^{\pi_{k+1}}(Q^{\pi_k})=\mathcal{H}(Q^{\pi_k}).
\end{align}
Note that \eqref{algo:PI:Bellman} does not define a unique update. In particular, $\pi_{k+1}$ can be any policy such that, when its associated Bellman operator $\mathcal{H}^{\pi_{k+1}}$ is applied to the previous $Q$-function $Q^{\pi_k}$, the result coincides with applying the Bellman optimality operator to $Q^{\pi_k}$. In view of the definition of the Bellman operators, one concrete way to implement the PI update in \eqref{algo:PI:Bellman} is
\begin{align}\label{eq:PI:explicit}
    \pi_{k+1}(s)\in\argmax_{\mu\in\Delta(\mathcal{A})}\{\mu^\top Q^{\pi_k}(s)\},\quad \forall\,s\in\mathcal{S},
\end{align}
where, according to our notation, $\pi_{k+1}(s)$ (respectively, $Q^{\pi_k}(s)$) denotes the $m$-dimensional vector whose $a$-th entry is $\pi_{k+1}(a\mid s)$ (respectively, $Q^{\pi_k}(s,a)$).

The global geometric convergence of PI is well established in the literature \cite{puterman2014markov}. The analysis proceeds in two main steps: (1) using the monotonicity of the Bellman operators to show that the policies are monotonically improving, i.e., $Q^{\pi_k} \leq Q^{\pi_{k+1}}$; (2) using the contraction property of the Bellman operators to show that the convergence error $\|Q^*-Q^{\pi_k}\|_\infty$ admits a contractive recursion. For completeness, we provide an analysis of PI in Appendix~\ref{ap:PI_analysis}.

\subsection{Natural Policy Gradient and Policy Dual Averaging}
Since an optimal policy maximizes the value function $V^\pi(s)$ uniformly over all states $s$, solving an MDP is equivalent to solving an optimization problem over the policy space with the scalar objective
$V^\pi(\rho) := \sum_{s\in\mathcal{S}} \rho(s) V^\pi(s)$,
where $\rho \in \Delta(\mathcal{S})$ can be interpreted as the initial state distribution. This viewpoint naturally motivates gradient-based algorithms. In particular, the vanilla PG method performs projected gradient ascent over the policy space \citep{sutton1999policy}, while the NPG method performs gradient ascent under a geometry induced by the Fisher information matrix \citep{kakade2001natural}. 

\begin{algorithm}[ht]\caption{Natural Policy Gradient}\label{alg:PDA}
	\begin{algorithmic}[1]
		\STATE \textbf{Input:} Initialize $\pi_0$ as the uniform policy and $\theta_0=\mathbf{0}\in\mathbb{R}^{mn}$.
		\FOR{$k=0,1,2,3,\cdots$}
        \STATE $\theta_{k+1}=\theta_k+\alpha_k Q^{\pi_k}$   
        \STATE $\pi_{k+1}(s)=\argmax_{\mu\in\Delta(\mathcal{A})}
    \left\{\mu^\top \theta_{k+1}(s)+h(\mu)\right\}$ for any $s\in\mathcal{S}$.    
		\ENDFOR
	\end{algorithmic}
\end{algorithm} 

There are many equivalent formulations of the NPG update. In this paper, to facilitate its connection with PI, we adopt the NPG update rule from \cite{agarwal2021theory}\footnote{It corresponds to Q-NPG in \cite[Page 25]{agarwal2021theory} when using a tabular representation.}, presented in Algorithm~\ref{alg:PDA}, where $h : \Delta(\mathcal{A}) \to \mathbb{R}$ is the entropy function defined by $h(\mu) = -\sum_{a \in \mathcal{A}} \mu(a)\log \mu(a)$. Note that NPG is a special case of PDA \citep{ju2022policy}. Specifically, replacing $h(\cdot)$ in Algorithm~\ref{alg:PDA}, Line~4, with the negative of any valid divergence-generating function yields PDA.

\section{Doubly Smoothed Policy Iteration}\label{sec:DSPI}
To present DSPI, we begin by introducing the concept of smoothed Bellman operators. Let $\nu:\Delta(\mathcal{A})\to \mathbb{R}$ be a bounded, non-negative, and concave function; examples include the Shannon entropy, the Tsallis entropy, the shifted negative squared norm, and the zero function. Given a scalar $\eta\geq 0$, let the smoothed Bellman optimality operator $\mathcal{H}_{\eta}:\mathbb{R}^{mn}\to \mathbb{R}^{mn}$ be defined as 
\begin{align*}
    [\mathcal{H}_{\eta}(Q)](s,a) = \mathcal{R}(s,a)+
    \gamma \sum_{s'\in\mathcal{S}} p(s'\mid s,a) \max_{\mu\in\Delta(\mathcal{A})} (\mu^\top Q(s')+\eta \nu(\mu)),\quad\forall\,(s,a)\in\mathcal{S}\times\mathcal{A}.
\end{align*}
Given a policy $\pi$ and a scalar $\eta\geq 0$, let $\mathcal{H}_\eta^\pi:\mathbb{R}^{mn}\to \mathbb{R}^{mn}$ be defined as 
\begin{align*}
    [\mathcal{H}_\eta^\pi(Q)](s,a) = \mathcal{R}(s,a) + \gamma \sum_{s'\in\mathcal{S}} p(s'\mid s,a)
    (\pi(s')^\top Q(s')+\eta \nu(\pi(s'))),\quad\forall\,(s,a)\in\mathcal{S}\times\mathcal{A}.
\end{align*}
Note that $\mathcal{H}_{\eta}$ (respectively, $\mathcal{H}_\eta^\pi$) reduces to $\mathcal{H}$ (respectively, $\mathcal{H}^\pi$) when $\eta=0$. Moreover, the smoothed Bellman operators are also contraction mappings and are monotonic (cf. Appendix \ref{ap:smooth_Bellman_properties}).

With the smoothed Bellman operators in place, DSPI is presented in Algorithm \ref{alg:DSPI}. 
\begin{algorithm}[ht]\caption{Doubly Smoothed Policy Iteration}\label{alg:DSPI}
	\begin{algorithmic}[1]
		\STATE \textbf{Input:} Initialize $\pi_0(s)\in \argmax_{\mu\in\Delta(\mathcal{A})}\nu(\mu)$ for all $s\in\mathcal{S}$ and $\bar{Q}_0=\mathbf{0}$.
		\FOR{$k=0,1,2,3,\cdots$}
        \STATE Maintain a running average of past $Q$-functions: 
        \begin{align*}
            \bar{Q}_{k+1}=(1-\beta_k)\bar{Q}_k+\beta_k Q^{\pi_k},
        \end{align*}
        where $\beta_k\in [0,1]$ is the stepsize.
        \STATE Choose $\pi_{k+1}$ such that  
        \begin{align*}
            \mathcal{H}_{\eta_k}^{\pi_{k+1}}(\bar{Q}_{k+1})=\mathcal{H}_{\eta_k}(\bar{Q}_{k+1}),
        \end{align*}
        where $\eta_k=\tau \prod_{j=1}^k(1-\beta_j)$ with $\tau\geq 0$ being a tunable parameter.
		\ENDFOR
	\end{algorithmic}
\end{algorithm}
Similar to PI, Line 4 of Algorithm \ref{alg:DSPI} in general does not define a unique update. As long as $\pi_{k+1}$ is chosen such that, when its associated smoothed Bellman operator $\mathcal{H}_{\eta_k}^{\pi_{k+1}}$ is applied to the running average $\bar{Q}_{k+1}$, the result coincides with that of applying the smoothed Bellman optimality operator $\mathcal{H}_{\eta_k}$ to $\bar{Q}_{k+1}$, it is a valid update. A concrete way to implement this step is given by
\begin{align}\label{eq:DSPI_explicit}
        \pi_{k+1}(s)\in \argmax_{\mu \in \Delta(\mathcal{A})}
    \left\{
        \mu^\top \bar{Q}_{k+1}(s)
        +\tau \prod_{j=1}^k(1-\beta_j)\nu(\mu)
    \right\},\quad \forall\,s\in\mathcal{S}.
\end{align}
While \eqref{eq:DSPI_explicit} is more explicit, the Bellman operator viewpoint in Line~4 of Algorithm~\ref{alg:DSPI} more naturally captures the fundamental nature of the update.

Compared with PI presented in \eqref{algo:PI:Bellman} (or more explicitly, \eqref{eq:PI:explicit}), DSPI introduces two levels of \textit{smoothing}. First, it updates the policy using a weighted average of all past $Q$-functions rather than relying solely on the most recent one. Second, it replaces the greedy policy update with a softmax-based update, where $\nu(\cdot)$ acts as a regularization term. These two smoothing mechanisms motivate the name \emph{doubly smoothed policy iteration}. Note that the stepsize $\beta_k$ simultaneously controls the weights in computing $\bar{Q}_{k+1}$ and the degree of smoothing through the scaling $\eta_k$. As $\beta_k$ increases from zero to one, $\bar{Q}_{k+1}$ places more weight on the last iterate $Q^{\pi_k}$ and the effect of smoothing fades away. In the extreme case where $\beta_k\equiv 1$, DSPI reduces to PI.

\textbf{Natural Policy Gradient.}
In addition to capturing PI, we next show that DSPI serves as a unified algorithmic framework encompassing a broad class of algorithms, starting with NPG.

\begin{proposition}\label{prop:equivalence}
    In Algorithm~\ref{alg:DSPI}, when choosing $\beta_k = \alpha_k / \sum_{i=0}^k \alpha_i$, $\tau=1/\alpha_0$, and $\nu(\cdot)=h(\cdot)$, the sequence of policies generated by Algorithm~\ref{alg:DSPI} with Line 4 implemented via \eqref{eq:DSPI_explicit} is identical to that generated by Algorithm~\ref{alg:PDA}.
\end{proposition}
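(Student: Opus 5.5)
The plan is an induction on $k$ showing that the NPG dual variable and the DSPI running average are positive scalar multiples of each other,
\[
\bar{Q}_{k}=\frac{\theta_{k}}{A_{k-1}},\qquad A_{k}:=\sum_{i=0}^{k}\alpha_i ,
\]
for all $k\ge 1$. The regularization weight in DSPI should then match the corresponding rescaling of the entropy term. Because $\argmax$ is invariant under multiplication of the objective by a positive constant, the two policy updates will select the same policy at every step.

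\textbf{Step 1: the dual variables.} At $k=0$ we have $\beta_0=\alpha_0/\alpha_0=1$, so $\bar{Q}_1=Q^{\pi_0}$ and $\theta_1=\alpha_0 Q^{\pi_0}$. Hence $\bar{Q}_1=\theta_1/A_0$. In both algorithms $\pi_0$ is the uniform policy, which is the maximizer of $h$.

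For the inductive step, suppose $\pi_k$ coincides in the two algorithms and $\bar{Q}_k=\theta_k/A_{k-1}$. Using $1-\beta_k=A_{k-1}/A_k$, we get
\[
\bar{Q}_{k+1}=\frac{A_{k-1}}{A_k}\cdot\frac{\theta_k}{A_{k-1}}+\frac{\alpha_k}{A_k}Q^{\pi_k}=\frac{\theta_k+\alpha_kQ^{\pi_k}}{A_k}=\frac{\theta_{k+1}}{A_k}.
\]
The same $Q^{\pi_k}$ is added in both algorithms precisely because $\pi_k$ agrees.

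\textbf{Step 2: the smoothing weight.} The product telescopes:
\[
\prod_{j=1}^k(1-\beta_j)=\prod_{j=1}^k\frac{A_{j-1}}{A_j}=\frac{A_0}{A_k}=\frac{\alpha_0}{A_k}.
\]
With $\tau=1/\alpha_0$ this gives $\eta_k=1/A_k$.

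\textbf{Step 3: the policy update.} Substituting Steps 1 and 2 into \eqref{eq:DSPI_explicit}, the DSPI objective at state $s$ becomes
\[
\mu^\top\theta_{k+1}(s)/A_k+h(\mu)/A_k=\frac{1}{A_k}\left(\mu^\top\theta_{k+1}(s)+h(\mu)\right).
\]
Since $A_k>0$, its maximizer over $\Delta(\mathcal{A})$ is the same as that of Line 4 of Algorithm~\ref{alg:PDA}. The entropy $h$ is strictly concave on the simplex, so this maximizer is unique (it is the softmax of $\theta_{k+1}(s)$), and the "$\in$" in \eqref{eq:DSPI_explicit} is therefore an equality. This gives $\pi_{k+1}$ identical in the two algorithms and closes the induction.

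The argument is essentially bookkeeping. The points that need care are:
\begin{itemize}
\item the indexing of the product in $\eta_k$, which starts at $j=1$ and so omits $\beta_0=1$;
\item the requirement $\alpha_0>0$, needed for $\tau$ to be defined and for $A_k>0$;
\item the uniqueness of the entropic maximizer, which is what makes "identical sequences" meaningful.
\end{itemize}
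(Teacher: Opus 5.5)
Your proposal is correct and follows essentially the same route as the paper. Both show by induction that $\bar{Q}_{k+1}=\theta_{k+1}/\sum_{i=0}^{k}\alpha_i$ and $\prod_{j=1}^{k}(1-\beta_j)=\alpha_0/\sum_{i=0}^{k}\alpha_i$, and then use invariance of the maximizer under positive scaling together with uniqueness of the entropic maximizer. Your version is slightly more careful in that it carries agreement of $\pi_k$ inside the induction, so the same $Q^{\pi_k}$ is fed to both algorithms; the paper leaves this point implicit.
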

\begin{proof}[Proof of Proposition~\ref{prop:equivalence}]
Since both algorithms initialize at the uniform policy, it suffices to show that their update equations generate identical policies thereafter. By Line 3 of Algorithm \ref{alg:PDA}, we have
$\theta_{k+1} = \sum_{i=0}^{k} \alpha_i Q^{\pi_i}$ for all $k \ge 0$.
As a result, Line 4 of Algorithm \ref{alg:PDA} becomes
\begin{align*}
    \pi_{k+1}(s)=\,& \argmax_{\mu \in \Delta(\mathcal{A})}
    \left\{
        \mu^\top \left(\sum_{i=0}^{k} \alpha_i Q^{\pi_i}(s)\right)
        +  h(\mu)
    \right\}\\
    =\,& \argmax_{\mu \in \Delta(\mathcal{A})}
    \left\{
        \mu^\top \left(\frac{\sum_{i=0}^{k} \alpha_i Q^{\pi_i}(s)}{\sum_{i=0}^{k} \alpha_i}\right)
        + \frac{\alpha_0\tau}{\sum_{i=0}^{k} \alpha_i} h(\mu)
    \right\},\quad\forall\, s \in \mathcal{S},
\end{align*}
where the last equality follows from (i) the $\argmax$ operator being indifferent to constant scaling and (ii) $\tau=1/\alpha_0$. In view of the above equality and Line~4 of Algorithm~\ref{alg:DSPI} (which is equivalent to \eqref{eq:DSPI_explicit} since the Shannon entropy $h(\cdot)$ is strongly concave), it remains to show that
$\bar{Q}_{k+1} = \sum_{i=0}^{k} \alpha_i Q^{\pi_i} / \sum_{i=0}^{k} \alpha_i$
and
$\prod_{j=1}^k (1 - \beta_j) = \alpha_0 / \sum_{i=0}^k \alpha_i$
for all $k\geq 0$, which we prove by induction.

For $k=0$, since $\beta_0=1$, the base case clearly holds.
Suppose that the desired equalities hold for some $k\geq 0$. Then, by the update equation in Line~3 of Algorithm~\ref{alg:DSPI}, we have
\begin{align*}
    \bar{Q}_{k+2}
    =(1-\beta_{k+1})\bar{Q}_{k+1}+\beta_{k+1}Q^{\pi_{k+1}}
    =\frac{(\sum_{i=0}^{k}\alpha_i)\bar{Q}_{k+1}+\alpha_{k+1}Q^{\pi_{k+1}}}{\sum_{i=0}^{k+1}\alpha_i}
    =\frac{\sum_{i=0}^{k+1}\alpha_iQ^{\pi_i}}{\sum_{i=0}^{k+1}\alpha_i},
\end{align*}
where the last equality follows from the induction hypothesis. Similarly, we have
\begin{align*}
    \prod_{j=1}^{k+1} (1 - \beta_j)
    = \prod_{j=1}^{k} (1 - \beta_j) \cdot (1 - \beta_{k+1})
    = \frac{\alpha_0}{\sum_{i=0}^k \alpha_i}
      \left(1 - \frac{\alpha_{k+1}}{\sum_{i=0}^{k+1} \alpha_i}\right)
    = \frac{\alpha_0}{\sum_{i=0}^{k+1} \alpha_i}.
\end{align*}
The induction is complete.
\end{proof}

Proposition~\ref{prop:equivalence} is conceptually important, as it reveals that the widely used NPG algorithm, although originally developed from a gradient-based continuous optimization viewpoint, can be interpreted as a smoothed and averaged variant of classical PI. This perspective allows us to build on Bellman’s foundational insights, such as the contraction and monotonicity of the Bellman operator, to establish strong convergence guarantees for NPG via the analysis of DSPI.

\textbf{Policy Dual Averaging.} Since NPG is a special case of PDA when the divergence-generating function is chosen as the negative entropy \cite{ju2026strongly}, by setting $\nu(\cdot)$ to a shifted negative divergence-generating function, i.e., $\nu(\mu)=C-\omega(\mu)$ for a bounded convex divergence-generating function $\omega$, where $C:=\max_{\mu}\omega(\mu)$ ensures that $\nu(\mu)\geq 0$, DSPI also captures PDA. The proof follows identically to that of Proposition~\ref{prop:equivalence} and is therefore omitted.

\textbf{Dual-Averaged Policy Iteration.} DSPI is more general than PDA, as it allows choosing $\nu(\cdot)\equiv 0$. In this case, Algorithm~\ref{alg:DSPI} updates the policy greedily based on a weighted average of historical $Q$-functions. We refer to this update as dual-averaged PI. One should not confuse dual-averaged PI with incremental PI \cite{puterman2014markov}, where the averaging is performed in the policy space, whereas in dual-averaged PI the averaging is performed in the $Q$-function space.

\section{Analysis}\label{sec:analysis}
This section presents our main theoretical results. We begin with the convergence rates of DSPI in Section \ref{subsec:DSPI}. Due to its unified nature, this result immediately implies the distribution-free global geometric convergence of NPG and PDA in Section \ref{subsec:NPG}, and enables us to establish the finite termination of dual-averaged PI in Section \ref{subsec:DPI}.

\subsection{Convergence Rates of Doubly Smoothed Policy Iteration}\label{subsec:DSPI}
The following theorem presents the asymptotic convergence and convergence rates of the DSPI algorithm. The proof of Theorem~\ref{thm:DSPI} is presented in Section~\ref{sec:proof}. 

\begin{theorem}\label{thm:DSPI}
    Consider $\{\pi_k\}$ generated by Algorithm~\ref{alg:DSPI}. 
    \begin{enumerate}[(1)]
        \item When $\beta_k\in(0,1]$ and $\sum_{k=0}^\infty \beta_k = \infty$, we have $\lim_{k \to \infty} \|V^* - V^{\pi_k}\|_\infty = 0$.
        \item When choosing $\beta_0 = 1$, $\beta_k \equiv \beta \in (0,1]$ for all $k \geq 1$, we have
        \begin{align*}
            \|V^* - V^{\pi_k}\|_\infty
            \le (1 - (1 - \gamma)\beta)^{k-1}
            \left( \gamma \|V^* - V^{\pi_0}\|_\infty + \tau \nu_{\max} \right),
            \quad \forall\, k \geq 1,
        \end{align*}
        where $\nu_{\max} := \max_{\mu \in \Delta(\mathcal{A})} \nu(\mu)$.
    \end{enumerate}
\end{theorem}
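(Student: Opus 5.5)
The plan follows the two-step template for PI (monotone improvement, then a contractive recursion), with the convex-combination structure of Line~3 and the geometric decay of $\eta_k$ absorbing the two smoothings. Throughout I use the properties of the smoothed Bellman operators from Appendix~\ref{ap:smooth_Bellman_properties}:
\begin{itemize}
\item monotonicity and $\gamma$-contraction of $\mathcal{H}_\eta$ and $\mathcal{H}_\eta^\pi$;
\item the sandwich $\mathcal{H}^\pi\le\mathcal{H}^\pi_\eta\le\mathcal{H}^\pi+\gamma\eta\nu_{\max}\mathbf{1}$, which holds because $0\le\nu\le\nu_{\max}$;
\item the analogous sandwich $\mathcal{H}\le\mathcal{H}_\eta\le\mathcal{H}+\gamma\eta\nu_{\max}\mathbf{1}$.
\end{itemize}
The key identity is $\eta_k=(1-\beta_k)\eta_{k-1}$ for $k\ge1$, which lets the regularizer split along with the averaging.

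\textbf{Step 1 (a superadditivity inequality).} Define the auxiliary sequence $W_{k}:=\mathcal{H}_{\eta_{k-1}}(\bar Q_{k})=\mathcal{H}^{\pi_k}_{\eta_{k-1}}(\bar Q_k)$. Write $\bar Q_{k+1}=(1-\beta_k)\bar Q_k+\beta_kQ^{\pi_k}$ and $\eta_k\nu=(1-\beta_k)\eta_{k-1}\nu$. Lower bound the inner maximum in $\mathcal{H}_{\eta_k}(\bar Q_{k+1})(s')$ by plugging in $\mu=\pi_k(s')$. This gives
\[
W_{k+1}\ \ge\ (1-\beta_k)\,\mathcal{H}^{\pi_k}_{\eta_{k-1}}(\bar Q_k)+\beta_k\,\mathcal{H}^{\pi_k}(Q^{\pi_k})=(1-\beta_k)W_k+\beta_kQ^{\pi_k}.
\]
This plays the role of the PI improvement step. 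The case $k=0$ ($\beta_0=1$, $\bar Q_1=Q^{\pi_0}$) is handled directly.

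\textbf{Step 2 (relating $Q^{\pi_k}$ to $\bar Q_k$).} I want a lower bound on $Q^{\pi_k}$ in terms of $\bar Q_k$ that loses only a $\gamma$-contraction. Iterating $\mathcal{H}^{\pi_k}$ and using monotonicity, I aim to show $Q^{\pi_k}\ge \mathcal{H}^{\pi_k}(\bar Q_k)$. Then
\[
\mathcal{H}^{\pi_k}(\bar Q_k)\ge W_k-\gamma\eta_{k-1}\nu_{\max}\ge\mathcal{H}(\bar Q_k)-\gamma\eta_{k-1}\nu_{\max}.
\]
Hence $Q^*-Q^{\pi_k}\le\mathcal{H}(Q^*)-\mathcal{H}(\bar Q_k)+\gamma\eta_{k-1}\nu_{\max}\le\gamma\|Q^*-\bar Q_k\|_\infty+\gamma\eta_{k-1}\nu_{\max}$.

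The inequality $Q^{\pi_k}\ge\mathcal{H}^{\pi_k}(\bar Q_k)$ needs $\mathcal{H}^{\pi_k}(\bar Q_k)\ge \bar Q_k$ up to controlled slack. This in turn needs an inductive invariant such as $\bar Q_k\le W_k$, or $\bar Q_k\le Q^{\pi_k}+c_k\mathbf 1$ with $c_k$ proportional to $\eta_{k-1}$. I would prove this invariant jointly with Step~1 by induction. Since $\bar Q_{k+1}$ and the lower bound on $W_{k+1}$ are the same convex combination of $(\bar Q_k\text{ or }W_k)$ and $Q^{\pi_k}$, the invariant should propagate.

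\textbf{Main obstacle.} I expect the delicate part to be exactly this monotonicity invariant. Because of the regularizer, the improvement is only approximate, and the error terms must scale like $\eta_k$ rather than accumulate. If a clean $\bar Q_k\le W_k$ fails, my fallback is to run the argument with regularized $Q$-functions $Q^{\pi}_{\eta}$ (fixed points of $\mathcal{H}^\pi_\eta$), for which monotone improvement is exact. I would then convert back using $0\le Q^\pi_\eta-Q^\pi\le\gamma\eta\nu_{\max}/(1-\gamma)$.

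\textbf{Step 3 (recursion and conclusion).} Combine Step~2 with $Q^*-\bar Q_{k+1}=(1-\beta_k)(Q^*-\bar Q_k)+\beta_k(Q^*-Q^{\pi_k})$, using the sign information from the invariant so that the sup-norm of the positive part is controlled. Together with $\eta_k=\tau(1-\beta)^k$ this yields
\[
\|Q^*-\bar Q_{k+1}\|_\infty\le(1-(1-\gamma)\beta_k)\|Q^*-\bar Q_k\|_\infty+\beta_k\gamma\eta_{k-1}\nu_{\max}.
\]
Since $\eta_{k-1}$ decays as $(1-\beta)^{k}\le(1-(1-\gamma)\beta)^{k}$, unrolling the recursion gives the stated geometric rate. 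The initial term is $\gamma\|V^*-V^{\pi_0}\|_\infty+\tau\nu_{\max}$, coming from $\bar Q_1=Q^{\pi_0}$ and $\|Q^*-Q^{\pi_0}\|_\infty\le\gamma\|V^*-V^{\pi_0}\|_\infty$. Finally, pass to $V$ via $\|V^*-V^{\pi_k}\|_\infty\le\|Q^*-Q^{\pi_k}\|_\infty$ together with Step~2.

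For part~(1), the same recursion with general $\beta_k$ and $\eta_k\to0$ applies. Non-summability gives $\prod_k(1-(1-\gamma)\beta_k)=0$ and $\eta_k\to0$, and a standard Robbins--Siegmund/Toeplitz-type lemma yields convergence.
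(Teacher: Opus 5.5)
Your overall architecture (monotone improvement, then a contractive recursion for $\|Q^*-\bar Q_k\|_\infty$, then translation to $V$ and unrolling) is the paper's. Your Step~1 is correct. Granting $\bar Q_k\le Q^{\pi_k}$, your Steps~2--3 reproduce the paper's recursion.

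\textbf{Gap 1: the monotonicity step.} The proposal leaves exactly that monotonicity unproved, and the invariant you suggest is too weak. The invariant $\bar Q_k\le W_k$ does propagate: the base case is $\bar Q_1=Q^{\pi_0}\le\mathcal{H}^{\pi_0}_{\eta_0}(Q^{\pi_0})\le W_1$ since $\nu\ge0$, and Step~1 does the rest. However, $W_k=\mathcal{H}^{\pi_k}_{\eta_{k-1}}(\bar Q_k)$ contains the regularizer. Iterating therefore gives only $\bar Q_k\le Q^{\pi_k}_{\eta_{k-1}}\le Q^{\pi_k}+\frac{\gamma\eta_{k-1}\nu_{\max}}{1-\gamma}\mathbf{1}$, not $Q^{\pi_k}\ge\mathcal{H}^{\pi_k}(\bar Q_k)$. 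Your fallback's conversion $Q^\pi_\eta-Q^\pi\le\gamma\eta\nu_{\max}/(1-\gamma)$ loses the same factor. Carried through, $\tau\nu_{\max}$ becomes $\tau\nu_{\max}/(1-\gamma)$. That suffices for part~(1), but not for the stated bound in part~(2).

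The missing idea is a second greedy comparison on the \emph{old} average: $\mathcal{H}^{\pi_{k+1}}_{\eta_{k-1}}(\bar Q_k)\le\mathcal{H}_{\eta_{k-1}}(\bar Q_k)=W_k$. Apply your Step~1 affine splitting to $\pi_{k+1}$ instead of $\pi_k$; the regularizer terms match because $\eta_k=(1-\beta_k)\eta_{k-1}$. This gives
\[
\beta_k\mathcal{H}^{\pi_{k+1}}(Q^{\pi_k})=W_{k+1}-(1-\beta_k)\mathcal{H}^{\pi_{k+1}}_{\eta_{k-1}}(\bar Q_k)\ge W_{k+1}-(1-\beta_k)W_k\ge\beta_k Q^{\pi_k},
\]
where the last inequality is your Step~1. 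Hence $Q^{\pi_k}\le\mathcal{H}^{\pi_{k+1}}(Q^{\pi_k})$, so $Q^{\pi_k}\le Q^{\pi_{k+1}}$, and $\bar Q_k\le Q^{\pi_k}$ holds exactly. This is Lemma~\ref{le:monotone}. At $k=0$, exact improvement also needs $f(\pi_1)\le f(\pi_0)$, i.e., the initialization $\pi_0(s)\in\arg\max_\mu\nu(\mu)$, which you never use.

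\textbf{Gap 2: the translation to $V$.} Your final step, $\|V^*-V^{\pi_k}\|_\infty\le\|Q^*-Q^{\pi_k}\|_\infty$, is false. Take a state $1$ with two actions of rewards $1$ and $0$, both leading to an absorbing zero-reward state. Then $Q^\pi=Q^*$ for every $\pi$, yet $V^*(1)-V^{\pi}(1)=\pi(a_2\mid 1)>0$ for every entropy-smoothed iterate. The translation must instead pass through the greedy step at $\bar Q_k$:
\[
V^*(s)-V^{\pi_k}(s)\le\max_\mu\mu^\top Q^*(s)-\pi_k(s)^\top\bar Q_k(s)\le\|Q^*-\bar Q_k\|_\infty+\eta_{k-1}\nu_{\max},
\]
which again uses $\bar Q_k\le Q^{\pi_k}$.

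\textbf{Unrolling.} Bounding $(1-\beta)^{i-1}\le(1-(1-\gamma)\beta)^{i-1}$ termwise costs a factor $k$. You need the exact sum
\[
\gamma\beta\tau\nu_{\max}\sum_{i=1}^k(1-\beta)^{i-1}(1-(1-\gamma)\beta)^{k-i}=\tau\nu_{\max}\bigl[(1-(1-\gamma)\beta)^k-(1-\beta)^k\bigr].
\]
Its $-(1-\beta)^k$ term cancels the $\eta_k\nu_{\max}=\tau(1-\beta)^k\nu_{\max}$ from the translation, producing exactly $\tau\nu_{\max}$. Separately, the sign $Q^*-\bar Q_{k+1}\ge 0$ needs no invariant, since $\bar Q_{k+1}$ is a convex combination of the $Q^{\pi_i}\le Q^*$.
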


Theorem \ref{thm:DSPI} (1) states that $V^{\pi_k}$ asymptotically converges to the optimal value function as long as the stepsizes are non-summable. Theorem \ref{thm:DSPI} (2) characterizes global geometric convergence, which implies the iteration complexity presented below.
\begin{corollary}
   For any $\epsilon>0$, choose $\beta_0=1$, $\beta_k=1/2$ for $k\geq1$, and choose $\tau$ such that $\tau\nu_{\max}\leq 1$ in Algorithm~\ref{alg:DSPI}. Then, we have $\|V^*-V^{\pi_k}\|_\infty\le \epsilon$ provided that $k\geq 2(1-\gamma)^{-1}\log(\epsilon^{-1}(1-\gamma)^{-1})$.
\end{corollary}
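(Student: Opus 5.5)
The corollary follows directly from Theorem~\ref{thm:DSPI}~(2), so the plan is to plug in the parameters and bound the resulting geometric term. With $\beta_0=1$ and $\beta_k\equiv\beta=1/2$ for $k\ge 1$, part~(2) gives, for all $k\ge1$,
\begin{align*}
    \|V^*-V^{\pi_k}\|_\infty\le \Bigl(1-\tfrac{1-\gamma}{2}\Bigr)^{k-1}\bigl(\gamma\|V^*-V^{\pi_0}\|_\infty+\tau\nu_{\max}\bigr).
\end{align*}

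First I bound the prefactor. Since rewards lie in $[0,1]$, every value function takes values in $[0,(1-\gamma)^{-1}]$, so $\|V^*-V^{\pi_0}\|_\infty\le (1-\gamma)^{-1}$. Combined with $\tau\nu_{\max}\le 1$, this gives
\begin{align*}
    \gamma\|V^*-V^{\pi_0}\|_\infty+\tau\nu_{\max}\le \frac{\gamma}{1-\gamma}+1=\frac{1}{1-\gamma}.
\end{align*}

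Next I use $1-x\le e^{-x}$ to get $\bigl(1-\tfrac{1-\gamma}{2}\bigr)^{k-1}\le \exp\bigl(-(k-1)(1-\gamma)/2\bigr)$. It then suffices that $\exp\bigl(-(k-1)(1-\gamma)/2\bigr)\le \epsilon(1-\gamma)$, that is, $k-1\ge 2(1-\gamma)^{-1}\log\bigl(\epsilon^{-1}(1-\gamma)^{-1}\bigr)$.

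The only delicate point is the off-by-one. The stated condition is $k\ge 2(1-\gamma)^{-1}\log(\cdot)$, not $k-1\ge$ that quantity, so I need to save one factor of $1-\tfrac{1-\gamma}{2}$ somewhere. There are two ways to handle this.

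\begin{itemize}
\item Tighten the prefactor. Bound $\gamma/(1-\gamma)+1$ more carefully and absorb the extra factor, noting that $\bigl(1-\tfrac{1-\gamma}{2}\bigr)^{-1}\le 2$.
\item Dispose of the degenerate case directly. If $\epsilon(1-\gamma)\ge 1$, then $\epsilon\ge(1-\gamma)^{-1}$, and the claim is trivial because $\|V^*-V^{\pi_k}\|_\infty\le(1-\gamma)^{-1}\le\epsilon$ for every $k$. Otherwise the logarithm is positive and I work with the bound above.
\end{itemize}

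If the off-by-one still remains after this, the intended reading is presumably up to a constant, i.e., $k\ge 1+2(1-\gamma)^{-1}\log(\cdot)$. I would verify that the stated constant holds, and if it does not, state the bound with that additive $1$, which does not affect the complexity order $\mathcal{O}\bigl((1-\gamma)^{-1}\log((1-\gamma)^{-1}\epsilon^{-1})\bigr)$.
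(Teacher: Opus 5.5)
Your computation is the one the paper intends. The paper gives no separate proof and reads the corollary off Theorem~\ref{thm:DSPI}~(2). Your argument is correct as far as it goes: it proves $\|V^*-V^{\pi_k}\|_\infty\le\epsilon$ for $k\ge 1+2(1-\gamma)^{-1}\log(\epsilon^{-1}(1-\gamma)^{-1})$.

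The gap is that you leave the off-by-one unresolved, and neither proposed repair can close it. Once the prefactor is bounded by $\gamma/(1-\gamma)+1=(1-\gamma)^{-1}$, there is no slack left to absorb the factor $1-(1-\gamma)/2=(1+\gamma)/2$; invoking $(1-(1-\gamma)/2)^{-1}\le 2$ only gives $2\epsilon$. The degenerate-case split handles only $\epsilon\ge(1-\gamma)^{-1}$. In fact no repair exists, because the statement with the stated threshold is false.

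Here is a counterexample. Take a single state $s$ with self-loops, actions $a_1,a_2$ with $\mathcal{R}(s,a_1)=0.9$ and $\mathcal{R}(s,a_2)=0$, and $\gamma=1/2$. Use the admissible regularizer $\nu(\mu)=\mu(a_2)$ (linear, hence concave; nonnegative; bounded), so $\nu_{\max}=1$, and take $\tau=1$.
\begin{enumerate}
\item The initialization forces $\pi_0=\delta_{a_2}$, so $V^{\pi_0}(s)=0$.
\item Then $\bar Q_1=Q^{\pi_0}$ with $Q^{\pi_0}(s,a_1)=0.9$ and $Q^{\pi_0}(s,a_2)=0$, and $\eta_0=1$.
\item Line~4 forces $\pi_1(s)\in\argmax_{\mu}\{0.9\,\mu(a_1)+\mu(a_2)\}=\{\delta_{a_2}\}$.
\end{enumerate}
Hence $\|V^*-V^{\pi_1}\|_\infty=1.8$, which is consistent with the bound $1.9$ from Theorem~\ref{thm:DSPI}~(2). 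For $\epsilon=1.6$, the stated threshold is $4\log(1.25)\approx 0.89\le 1$. So $k=1$ qualifies, yet the claimed conclusion fails.

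You should therefore commit to your fallback rather than ``verify'' it. The correct statement carries the additive $1$, which is harmless for the $\mathcal{O}((1-\gamma)^{-1}\log(\epsilon^{-1}(1-\gamma)^{-1}))$ complexity, and your argument already proves it.

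If you want to keep the paper's exact threshold, you must restrict $\epsilon$. Set $x=(1-\gamma)/2$ and $L=\log(\epsilon^{-1}(1-\gamma)^{-1})$. The inequality $-\log(1-x)\ge x+x^2/2$ shows that $(1-x)^{k-1}\le e^{-L}=\epsilon(1-\gamma)$ for all $k\ge L/x$ as soon as $L\ge 2+x$. So the stated threshold is valid for $\epsilon\le e^{-5/2}(1-\gamma)^{-1}$, and trivially for $\epsilon\ge(1-\gamma)^{-1}$, but not uniformly in between.
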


\subsection{Convergence Rates of Natural Policy Gradient and Policy Dual Averaging}\label{subsec:NPG}
As Proposition \ref{prop:equivalence} shows NPG as a special case of the DSPI algorithm, 
Theorem \ref{thm:DSPI} allows us to establish the geometric convergence of NPG below.

\begin{theorem}\label{thm:NPG}
    Consider $\{\pi_k\}$ generated by Algorithm~\ref{alg:PDA}. 
    When choosing $\alpha_0=\log(m)$ and $\alpha_{k} = \beta\alpha_0/(1-\beta)^{k}$ for all $k\geq 1$, where $\beta\in (0,1)$ can be arbitrary, we have 
        \begin{align*}
            \|V^* - V^{\pi_k}\|_\infty
            \le (1 - (1 - \gamma)\beta)^{k-1}
            \left( \gamma \|V^* - V^{\pi_0}\|_\infty + 1\right),
            \quad \forall\, k \geq 1.
        \end{align*}
\end{theorem}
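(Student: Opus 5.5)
The plan is to reduce Theorem~\ref{thm:NPG} to Theorem~\ref{thm:DSPI}(2) through Proposition~\ref{prop:equivalence}. Proposition~\ref{prop:equivalence} says that Algorithm~\ref{alg:PDA} with stepsizes $\{\alpha_k\}$ produces the same policies as Algorithm~\ref{alg:DSPI}. For this we take $\nu=h$, $\tau=1/\alpha_0$, and $\beta_k=\alpha_k/\sum_{i=0}^k\alpha_i$. So it suffices to check two things: that the prescribed $\alpha_k$ give $\beta_0=1$ and $\beta_k\equiv\beta$ for $k\ge 1$, and that $\tau\nu_{\max}= 1$.

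The first condition is immediate, since $\beta_0=\alpha_0/\alpha_0=1$. For $k\ge 1$ we compute the partial sums $S_k:=\sum_{i=0}^k\alpha_i$ with $\alpha_i=\beta\alpha_0(1-\beta)^{-i}$ for $i\ge1$. A geometric sum gives
\begin{align*}
S_k=\alpha_0+\beta\alpha_0\sum_{i=1}^k(1-\beta)^{-i}=\alpha_0(1-\beta)^{-k}.
\end{align*}
We verify this by induction: if $S_k=\alpha_0(1-\beta)^{-k}$, then
\begin{align*}
S_{k+1}=\alpha_0(1-\beta)^{-k}+\beta\alpha_0(1-\beta)^{-(k+1)}=\alpha_0(1-\beta)^{-(k+1)}\bigl((1-\beta)+\beta\bigr)=\alpha_0(1-\beta)^{-(k+1)}.
\end{align*}
Hence
\begin{align*}
\beta_k=\alpha_k/S_k=\beta\alpha_0(1-\beta)^{-k}\big/\bigl(\alpha_0(1-\beta)^{-k}\bigr)=\beta\quad\text{for all }k\ge1,
\end{align*}
which is exactly the constant stepsize regime of Theorem~\ref{thm:DSPI}(2), with $\beta\in(0,1)\subset(0,1]$.

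For the second condition, $\nu=h$ is the Shannon entropy. It is bounded, non-negative and concave on $\Delta(\mathcal{A})$, so it is an admissible choice of $\nu$. Its maximum is $\nu_{\max}=\log m$, attained at the uniform distribution. This also confirms that the NPG initialization (the uniform policy) matches the DSPI initialization $\pi_0(s)\in\argmax_\mu\nu(\mu)$. With $\alpha_0=\log m$ we get $\tau\nu_{\max}=\log(m)/\log(m)=1$. Substituting into the bound of Theorem~\ref{thm:DSPI}(2) yields the claimed inequality for all $k\ge1$.

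There is no real obstacle here; the only points needing care are these:
\begin{itemize}
\item DSPI Line~4 must be implemented via \eqref{eq:DSPI_explicit}. This holds because the entropy is strictly concave, so the argmax is unique and coincides with the NPG update, as noted in the proof of Proposition~\ref{prop:equivalence}.
\item The degenerate case $m=1$ must be handled. There $\alpha_0=0$ is problematic, but every policy is optimal, so the bound holds trivially.
\end{itemize}
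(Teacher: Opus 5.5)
Your proposal is correct and follows essentially the same route as the paper's proof in Appendix~\ref{pf:thm:NPG}: you reduce to Theorem~\ref{thm:DSPI}(2) via Proposition~\ref{prop:equivalence}, check $\beta_0=1$ and $\beta_k=\beta$ through the geometric partial sum $\sum_{i=0}^k\alpha_i=\alpha_0(1-\beta)^{-k}$, and use $\tau\nu_{\max}=\log(m)/\log(m)=1$ for the entropy. Your side remarks are valid details that the paper leaves implicit: that the uniform initialization maximizes $h$, and that the case $m=1$ (where $\alpha_0=0$) is trivial.
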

\begin{remark}
The geometrically increasing stepsize sequence $\{\alpha_k\}$ has also been used in prior works to establish geometric convergence of NPG \cite{ju2026strongly,chen2025API,bhandari2021linear,lan2023policy}. Notably, Proposition~\ref{prop:equivalence} shows that the same policy sequence can be implemented via Algorithm~\ref{alg:DSPI} using the normalized recursion $\bar Q_{k+1}=(1-\beta)\bar Q_k+\beta Q^{\pi_k}$,
together with the decaying smoothing parameter $\eta_k=\tau(1-\beta)^k$. Thus, the prescribed schedule does not require storing exponentially large variables.
\end{remark}
The proof of Theorem \ref{thm:NPG} follows by combining Proposition \ref{prop:equivalence} and Theorem \ref{thm:DSPI}, and is given in Appendix \ref{pf:thm:NPG}. As a consequence, we obtain the following iteration complexity for NPG.

\begin{corollary}\label{co:NPG}
    For any $\epsilon>0$, by choosing $\alpha_0=\log(m)$ and $\alpha_k=\log(m)\cdot 2^{k-1}$ for all $k\geq 1$ in Algorithm \ref{alg:PDA}, we have $\|V^*-V^{\pi_k}\|_\infty\le \epsilon$ provided that $k\geq 2(1-\gamma)^{-1}\log(\epsilon^{-1}(1-\gamma)^{-1})$.
\end{corollary}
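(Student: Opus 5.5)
This is a direct specialization of Theorem~\ref{thm:NPG}, which in turn comes from Proposition~\ref{prop:equivalence} and Theorem~\ref{thm:DSPI}(2). The plan is to show that the prescribed stepsizes are exactly the schedule of Theorem~\ref{thm:NPG} with $\beta=1/2$, and then solve the resulting geometric bound for $k$.

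\textbf{Identifying the schedule.} With $\beta=1/2$, the schedule of Theorem~\ref{thm:NPG} reads $\alpha_k=\beta\alpha_0/(1-\beta)^k=\tfrac12\log(m)\,2^k=\log(m)\,2^{k-1}$ for $k\ge1$. This is exactly the choice in the corollary. As a sanity check, I would also verify it through Proposition~\ref{prop:equivalence} directly. We have $\sum_{i=0}^k\alpha_i=\log(m)(1+1+2+\cdots+2^{k-1})=\log(m)2^k$, so $\beta_k=\alpha_k/\sum_{i\le k}\alpha_i=1/2$ for $k\ge1$ and $\beta_0=1$. Moreover $\tau=1/\alpha_0=1/\log m$, and since $\nu_{\max}=\max_\mu h(\mu)=\log m$, we get $\tau\nu_{\max}=1$. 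This confirms the constant ``$+1$'' in Theorem~\ref{thm:NPG}.

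\textbf{Bounding the right-hand side.} Applying Theorem~\ref{thm:NPG} with $\beta=1/2$ gives
\begin{align*}
\|V^*-V^{\pi_k}\|_\infty\le \Bigl(1-\tfrac{1-\gamma}{2}\Bigr)^{k-1}\bigl(\gamma\|V^*-V^{\pi_0}\|_\infty+1\bigr).
\end{align*}
Rewards lie in $[0,1]$, so all value functions take values in $[0,1/(1-\gamma)]$. Hence $\gamma\|V^*-V^{\pi_0}\|_\infty+1\le \gamma/(1-\gamma)+1=1/(1-\gamma)$. Using $1-x\le e^{-x}$, the bound becomes at most $(1-\gamma)^{-1}\exp(-(1-\gamma)(k-1)/2)$.

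\textbf{Solving for $k$.} The error is at most $\epsilon$ once $(k-1)\ge 2(1-\gamma)^{-1}\log(\epsilon^{-1}(1-\gamma)^{-1})$.

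The only real obstacle is this off-by-one: the theorem's exponent is $k-1$, while the corollary's threshold is written for $k$. I would handle it by sharpening the prefactor rather than changing the threshold. Either one extracts one extra contraction factor $(1-(1-\gamma)/2)$ from the structure of the bound, or one checks that $\gamma\|V^*-V^{\pi_0}\|_\infty+1\le 1/(1-\gamma)$ leaves enough slack. If neither works cleanly, I would fall back on the more careful estimate $\log\bigl(1/(1-(1-\gamma)/2)\bigr)\ge (1-\gamma)/2$ together with the $\gamma$-factor in the initial term. The remainder is routine algebra.
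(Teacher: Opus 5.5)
Your reduction is the intended one. The paper gives no separate proof of this corollary and reads it off Theorem~\ref{thm:NPG} with $\beta=1/2$. Your checks are correct: $\beta_k=1/2$ for $k\ge1$, $\tau\nu_{\max}=1$, and $\gamma\|V^*-V^{\pi_0}\|_\infty+1\le(1-\gamma)^{-1}$.

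The gap is the last step, which you defer. This route gives $\|V^*-V^{\pi_k}\|_\infty\le\epsilon$ only for $k-1\ge 2(1-\gamma)^{-1}\log(\epsilon^{-1}(1-\gamma)^{-1})$, and none of your three patches removes the shift. Your fallback is not a sharper estimate: $\log\bigl(1/(1-(1-\gamma)/2)\bigr)\ge(1-\gamma)/2$ is literally $1-x\le e^{-x}$. The $\gamma$ in the initial term was already used to get $(1-\gamma)^{-1}$. Nor is there a spare contraction factor. By Appendix~\ref{ap:algebra}, the smoothing terms sum exactly to $E_2+E_3=\tau\nu_{\max}(1-(1-\gamma)\beta)^{k-1}$ at iteration $k$. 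So the analysis delivers exactly $(1-(1-\gamma)/2)^{k-1}(\gamma\|V^*-V^{\pi_0}\|_\infty+1)$. Absorbing one more factor $1-(1-\gamma)/2$ would require $\gamma\|V^*-V^{\pi_0}\|_\infty+1\le(1+\gamma)/(2(1-\gamma))$.

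That inequality fails in general, because $\|V^*-V^{\pi_0}\|_\infty$ can be arbitrarily close to $(1-\gamma)^{-1}$. Take a state where one of $m$ actions earns reward $1$ and stays, while the others move to a zero-reward absorbing state. There the uniform $\pi_0$ has $V^*-V^{\pi_0}=(1-\gamma)^{-1}-(m-\gamma)^{-1}$. Concretely, suppose $\log(\epsilon^{-1}(1-\gamma)^{-1})\in(0,(1-\gamma)/2]$. Then the stated threshold admits $k=1$. Yet Theorem~\ref{thm:NPG} only certifies $\|V^*-V^{\pi_1}\|_\infty\le\gamma\|V^*-V^{\pi_0}\|_\infty+1$. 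In this example that bound tends to $(1-\gamma)^{-1}>\epsilon$ as $m\to\infty$.

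What your argument actually proves is the corollary with threshold $k\ge 1+2(1-\gamma)^{-1}\log(\epsilon^{-1}(1-\gamma)^{-1})$. Equivalently, for $k\ge1$ satisfying the stated threshold, it gives $\|V^*-V^{\pi_k}\|_\infty\le e^{(1-\gamma)/2}\epsilon\le\sqrt{e}\,\epsilon$. This follows because $-\frac{1-\gamma}{2}(k-1)\le-\log(\epsilon^{-1}(1-\gamma)^{-1})+\frac{1-\gamma}{2}$. Deriving the corollary from Theorem~\ref{thm:NPG}, as the paper implicitly does, hits exactly the same off-by-one. The complexity is still $\mathcal{O}((1-\gamma)^{-1}\log((1-\gamma)^{-1}\epsilon^{-1}))$, so nothing substantive is lost. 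But the literal statement is not routine algebra from the available bounds. You should state the shifted threshold (or the $\sqrt{e}\,\epsilon$ accuracy) explicitly rather than assert that the missing slack can be found.
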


Theorem \ref{thm:NPG} and Corollary \ref{co:NPG} show that NPG enjoys global geometric convergence, measured by the optimality gap of the last iterate $\pi_k$. Moreover, the convergence bounds have several important features, which we discuss below.

\textit{Distribution-Free Convergence Rates.} In the existing literature on NPG, convergence rates typically depend on the initial distribution $\rho$, either explicitly or implicitly through a distribution mismatch coefficient \cite{mei2020global,shani2020adaptive,xiao2022convergence,yuan2023linear,bhandari2024global,bhandari2021linear,li2022first,cayci2024convergence}. In contrast, our convergence bound is expressed in terms of the $\ell_\infty$ norm of the value-function gap and is free of such dependencies. 

\textit{No Additional Regularization.} While our iteration complexity matches the state of the art, our result is obtained without modifying the MDP or the algorithm. Several existing analyses establishing geometric convergence for NPG either regularize the MDP \cite{geist2019theory,cen2022fast,agarwal2021theory,lan2023policy,zhan2023policy,cayci2024convergence} or introduce an additional strongly convex regularizer into the algorithm \cite{lan2023policy,li2024homotopic,lan2023block}. In contrast, we do not rely on any such modifications; the only regularization arises from the entropy term inherent in the standard NPG formulation. Our DSPI viewpoint, together with its Bellman-operator formulation (cf. Line~4 of Algorithm~\ref{alg:DSPI}), allows us to directly exploit the contraction property of the Bellman operator, thereby eliminating the need for additional curvature-inducing regularization.

\textit{A Simple Prespecified Stepsize Schedule.} Our results show that NPG converges at a geometric rate with a simple stepsize schedule $\alpha_k=\beta\alpha_0/(1-\beta)^k$, where $\beta\in (0,1)$ is arbitrary. In contrast, existing results that view NPG as an approximation of PI \cite{bhandari2021linear,chen2025API} require adaptive stepsizes that depend on the value of $\pi_k(a_k^* \mid s)$, where $a_k^*\in\argmax_{a\in\mathcal{A}} Q^{\pi_k}(s,a)$. Similarly, analyses that study NPG from an optimization viewpoint require stepsizes that depend on the Bellman error (also known as the advantage gap) of the initial policy \cite[Theorems~3.4 and~3.5]{ju2026strongly}.

\textbf{Policy Dual Averaging.} Similarly, by choosing $\nu(\mu)=C-\omega(\mu)$,
where $\omega$ is a bounded divergence-generating function and
$C:=\max_{\mu\in\Delta(\mathcal A)}\omega(\mu)$, Algorithm~\ref{alg:DSPI}
captures PDA. 
Theorem~\ref{thm:DSPI} therefore implies global geometric convergence for PDA. The results and proofs are identical to those of Theorem~\ref{thm:NPG} and are therefore omitted.

\subsection{Finite Termination of Dual-Averaged Policy Iteration}\label{subsec:DPI}
When setting $\nu(\cdot)\equiv 0$ in Algorithm \ref{alg:DSPI}, the DSPI algorithm reduces to the dual-averaged PI, which updates the policies greedily with respect to the averaged $Q$-functions. As a special case of DSPI, the dual-averaged PI also enjoys geometric convergence (cf.\ Theorem \ref{thm:DSPI} with $\nu_{\max}\equiv 0$). Moreover, we show that the algorithm terminates in a finite number of iterations with an optimal policy. This is presented in our next result.

\begin{theorem}\label{thm:strong_polynomial}
Let $\{\pi_k\}$ be generated by Algorithm~\ref{alg:DSPI} with $\nu(\cdot)\equiv 0$ and a deterministic $\pi_0$, where Line~4 selects a deterministic greedy policy according to a fixed tie-breaking rule. When $\beta_0 = 1$ and $\beta_k \equiv \beta \in (0,1)$ for all $k \geq 1$, the algorithm terminates after at most
$n(m-1)\left\lceil \beta^{-1}(1-\gamma)^{-1} \log\left( 2(1-\gamma)^{-1} \right) \right\rceil$
iterations with an optimal policy, where $\lceil x \rceil$ denotes the smallest integer greater than or equal to $x$.
\end{theorem}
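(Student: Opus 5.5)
We adapt the action-elimination argument for PI of \cite{scherrer2016improved,ye2011simplex} to the averaged setting, using the geometric convergence and monotone improvement from the analysis of Theorem~\ref{thm:DSPI} with $\nu\equiv 0$ (so $\eta_k\equiv 0$ and $\mathcal{H}_{\eta_k}=\mathcal{H}$). First we record the facts we need from that analysis: $Q^{\pi_k}\le Q^{\pi_{k+1}}\le Q^*$ for all $k$, and hence $\bar Q_{k+1}\le Q^{\pi_k}$, since $\bar Q_{k+1}$ is a convex combination of $Q^{\pi_0},\dots,Q^{\pi_k}$. We also use the rate
\begin{align*}
\|Q^*-Q^{\pi_k}\|_\infty\le (1-(1-\gamma)\beta)^{k}\,C_0,
\end{align*}
with $C_0\le (1-\gamma)^{-1}$, together with the same bound for $\|Q^*-\bar Q_{k+1}\|_\infty$ (possibly up to an index shift). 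If the Theorem~\ref{thm:DSPI} proof only gives these for $V$, we will rederive the $Q$ versions from the same contraction recursion.

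Next we fix a non-optimal policy at some stage and use the performance-difference identity. Let $A^*(s,a):=Q^*(s,a)-V^*(s)\le 0$. For any deterministic $\pi$,
\begin{align*}
V^*-V^{\pi}=(I-\gamma P^{\pi})^{-1}(-A^*_\pi),
\end{align*}
where $A^*_\pi(s)=A^*(s,\pi(s))$. Evaluate this at $\pi=\pi_{k_0}$, and let $s_0$ be the state maximizing $-A^*(s,\pi_{k_0}(s))$. Since $(I-\gamma P)^{-1}\le (1-\gamma)^{-1}$ entrywise in row sums, and its diagonal is at least $1$, we get
\begin{align*}
-A^*(s_0,a_0)\ \ge\ (1-\gamma)\|V^*-V^{\pi_{k_0}}\|_\infty,\qquad a_0:=\pi_{k_0}(s_0).
\end{align*}
Now suppose $\pi_k(s_0)=a_0$ at some later $k$. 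Then $V^*(s_0)-V^{\pi_k}(s_0)\ge -A^*(s_0,a_0)$, because the diagonal entry of $(I-\gamma P^{\pi_k})^{-1}$ is at least $1$ and all other terms are nonnegative. This yields
\begin{align*}
(1-\gamma)\|V^*-V^{\pi_{k_0}}\|_\infty\le \|V^*-V^{\pi_k}\|_\infty\le (1-(1-\gamma)\beta)^{k-k_0}\cdot \text{(const)}\cdot\|V^*-V^{\pi_{k_0}}\|_\infty.
\end{align*}
The delicate point is that the rate in Theorem~\ref{thm:DSPI} restarted at $k_0$ involves $\bar Q_{k_0}$ and not only $V^{\pi_{k_0}}$. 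We therefore need a ``restarted'' version of the recursion, $\|Q^*-Q^{\pi_{k}}\|_\infty\le (1-(1-\gamma)\beta)^{k-k_0}\,c\,\|Q^*-\bar Q_{k_0}\|_\infty$, and we will anchor the elimination argument on the quantity $\|Q^*-\bar Q_{k_0+1}\|_\infty$ instead. Concretely, $-A^*(s_0,a_0)$ is compared to the gap of $\bar Q_{k_0+1}$: since $\pi_{k_0+1}$ is greedy with respect to $\bar Q_{k_0+1}\le Q^{\pi_{k_0}}$, we use $\pi_{k_0+1}$ as the reference policy and redo the computation with it. We would choose the reference carefully so that the constant is at most $2$. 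Then once $(1-(1-\gamma)\beta)^{t}\cdot 2/(1-\gamma)<1$, i.e.\ $t\ge \lceil\beta^{-1}(1-\gamma)^{-1}\log(2(1-\gamma)^{-1})\rceil$, action $a_0$ is never again chosen at $s_0$.

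Finally we count. Split the iterations into blocks of length $T=\lceil\beta^{-1}(1-\gamma)^{-1}\log(2(1-\gamma)^{-1})\rceil$. If the policy at the start of a block is not optimal, some suboptimal state-action pair $(s_0,a_0)$ is used by that policy and is permanently eliminated after the block. Each state has at most $m-1$ suboptimal actions that can be eliminated, and every state keeps at least one optimal action, so at most $n(m-1)$ blocks can occur. After these, every action the iterates use satisfies $A^*=0$, so the policy is optimal. With a fixed tie-breaking rule, it then stays optimal, which gives termination.

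The main obstacle is the restart step. We need a bound that is geometric in $k-k_0$ with constant $2/(1-\gamma)$ relative to the gap of the specific policy at time $k_0$, despite the memory carried in $\bar Q_{k_0}$. The first attempt will be to use the monotonicity chain $Q^{\pi_{k_0}}\le \bar Q_{k}$-dominating bounds: for $k>k_0$, $\bar Q_k\ge (1-\beta)^{k-k_0}\bar Q_{k_0}+(1-(1-\beta)^{k-k_0})Q^{\pi_{k_0}}$. This bounds $\|Q^*-\bar Q_k\|_\infty$ by a combination of $\|Q^*-Q^{\pi_{k_0}}\|_\infty$ and $\|Q^*-\bar Q_{k_0}\|_\infty$. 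We then absorb the second term by anchoring at the averaged gap, as described above.
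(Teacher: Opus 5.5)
\textbf{The restart step you flag is a genuine gap, and no choice of anchor can close it.} The reason is that the per-block claim you need fails for dual-averaged PI: it is not true that some new suboptimal action is eliminated in every window of length $T=\lceil\beta^{-1}(1-\gamma)^{-1}\log(2(1-\gamma)^{-1})\rceil$.

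Anchoring at time $k_0$ requires $\|Q^*-\bar Q_{k_0+1}\|_\infty$ to be within a fixed factor of the advantage $-A^*$ of some action still in use. At $k_0=0$ this holds because $\bar Q_1=Q^{\pi_0}$, so $\|Q^*-Q^{\pi_0}\|_\infty\le\frac{\gamma}{1-\gamma}\max_s(-A^*(s,\pi_0(s)))$. For $k_0\ge1$, however, the average still puts weight $(1-\beta)^{k_0}$ on $Q^{\pi_0}$. Hence $\|Q^*-\bar Q_{k_0+1}\|_\infty$ can exceed every advantage used from time $k_0$ on by an arbitrary factor. Your inequality $\bar Q_k\ge(1-\beta)^{k-k_0}\bar Q_{k_0}+(1-(1-\beta)^{k-k_0})Q^{\pi_{k_0}}$ only makes that memory term decay like $(1-\beta)^{k-k_0}$, which costs a number of iterations logarithmic in this ratio. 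Switching the reference to $\pi_{k_0+1}$, or to any later policy, cannot help, because all of those policies may use only actions with tiny advantage.

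\textbf{A counterexample.} Take states $s_1,s_2,s_3$ with two actions each:
\begin{itemize}
\item $s_3$ is absorbing with reward $c$;
\item at $s_2$, action $a^*$ self-loops with reward $c+\epsilon$, and $b'$ moves to $s_1$ with reward $0$;
\item at $s_1$, action $a$ moves to $s_2$ and $b$ moves to $s_3$, both with reward $0$.
\end{itemize}
The optimal actions are $a^*$ at $s_2$ and $a$ at $s_1$, the latter strictly by $\gamma\epsilon/(1-\gamma)$.

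Start from $\pi_0(s_1)=b$, $\pi_0(s_2)=b'$. By induction, $\pi_k(s_2)=a^*$ for all $k\ge1$, $\bar Q_k(s_1,b)=\gamma c/(1-\gamma)$, and $\bar Q_k(s_1,a)=\gamma\bar V_k(s_2)$, where $\bar V_k$ is the same weighted average of the $V^{\pi_i}$ and
\[
\bar V_k(s_2)=\frac{c+\epsilon}{1-\gamma}-(1-\beta)^{k-1}\frac{(1-\gamma^2)c+\epsilon}{1-\gamma}.
\]
Hence $\pi_k(s_1)=b$, which is suboptimal, as long as $(1-\beta)^{k-1}((1-\gamma^2)c+\epsilon)>\epsilon$. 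That lasts about $\log(c/\epsilon)/\log\frac{1}{1-\beta}$ iterations, which is unbounded as $\epsilon\downarrow 0$ with $n=3$, $m=2$ fixed. For instance, with $\gamma=\beta=c=1/2$ and $\epsilon=10^{-9}$, $\pi_{29}$ is still suboptimal, while the stated bound is $18$.

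\textbf{Consequences.} The statement as written is false. The paper's proof has the same hole as your proposal: Lemma~\ref{le:DPI_certificate_decrease} is anchored at $\pi_0$ via $\bar Q_1=Q^{\pi_0}$, and the closing phrase ``every $k^*$ iterations'' silently restarts the argument at $\pi_{k^*}$, where $\bar Q_{k^*+1}\neq Q^{\pi_{k^*}}$.

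What your ingredients do prove is a gap-dependent termination bound. Since $\bar Q_k\le Q^*$ and $\pi_k$ is greedy for $\bar Q_k$, every action used at time $k\ge1$ satisfies
\[
-A^*(s,\pi_k(s))\le\|Q^*-\bar Q_k\|_\infty\le(1-(1-\gamma)\beta)^{k-1}\|Q^*-Q^{\pi_0}\|_\infty .
\]
So $\pi_k$ is optimal once the right-hand side drops below the smallest nonzero $-A^*$.
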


\begin{remark}
When $\gamma$ is treated as fixed, this yields a strongly polynomial bound in the numbers of states and actions, in the standard sense used for discounted MDP policy-iteration analyses \cite{ye2011simplex,scherrer2014approximate}. More generally, the bound is polynomial in $n$, $m$, and the effective horizon $(1-\gamma)^{-1}$.
\end{remark}

The proof of Theorem~\ref{thm:strong_polynomial} is given in Appendix~\ref{sec:pf:strong_polynomial}. Following \cite{scherrer2014approximate}, the key idea is to show that dual-averaged PI eliminates at least one suboptimal action at some state every $k^*$ iterations, where $k^*=\lceil \beta^{-1}(1-\gamma)^{-1}\log(2(1-\gamma)^{-1}) \rceil$. Since there are at most $n(m-1)$ suboptimal actions, dual-averaged PI finds an optimal policy in at most $n(m-1)k^*$ iterations.

\subsection{Extensions}\label{subsec:extensions}
Although we have focused on discounted MDPs with a tabular representation, the generality of our framework allows both the algorithm and its analysis to extend naturally to other settings. We discuss these extensions below.

\textbf{Natural Policy Gradient with Linear Function Approximation.} To overcome the curse of dimensionality, algorithms often incorporate function approximation (e.g., approximate dynamic programming). In Appendix~\ref{sec:NPG-LFA}, we show that NPG with linear function approximation also admits a DSPI formulation, and we establish distribution-free global geometric convergence up to a function approximation error.

\textbf{Natural Policy Gradient for Stochastic Shortest Path Problems.} Beyond the discounted setting, our framework also extends to stochastic shortest path problems, a special case of undiscounted MDPs. Classical results \cite{bertsekas1996neuro} show that, under mild assumptions, the Bellman operator is monotone and contractive with respect to a weighted $\ell_\infty$ norm. Leveraging the DSPI formulation, we show that NPG for stochastic shortest path problems enjoys distribution-free global geometric convergence. See Appendix~\ref{sec:SSP} for more details.

\section{Proof of Theorem \ref{thm:DSPI}}\label{sec:proof}
Our proof proceeds in two steps: we first establish monotonic improvement of the $Q$-functions, and then derive and solve a contractive recursion for the convergence error. 

\begin{lemma}\label{le:monotone}
    Algorithm~\ref{alg:DSPI} satisfies $Q^{\pi_k} \le Q^{\pi_{k+1}}$ for all $k \ge 0$.
\end{lemma}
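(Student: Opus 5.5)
The plan is to mimic the classical policy-improvement argument for PI: show that $\mathcal{H}^{\pi_{k+1}}(Q^{\pi_k}) \ge Q^{\pi_k}$ and then use monotonicity of $\mathcal{H}^{\pi_{k+1}}$. The key step reduces this to the statewise inequality
\[
\pi_{k+1}(s)^\top Q^{\pi_k}(s) \;\ge\; \pi_k(s)^\top Q^{\pi_k}(s).
\]
This inequality should hold at every state $s$ that can be reached in one step, i.e., with $p(s\mid s',a')>0$ for some $(s',a')$. Only such states enter $\mathcal{H}^{\pi}$, and hence $Q^{\pi}$, so the policy at the remaining states is irrelevant.

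Given the statewise inequality, we get $\mathcal{H}^{\pi_{k+1}}(Q^{\pi_k}) \ge \mathcal{H}^{\pi_k}(Q^{\pi_k}) = Q^{\pi_k}$. Applying the monotone operator $\mathcal{H}^{\pi_{k+1}}$ repeatedly gives $(\mathcal{H}^{\pi_{k+1}})^t(Q^{\pi_k}) \ge Q^{\pi_k}$ for all $t$. By the $\gamma$-contraction, the left side converges to the fixed point $Q^{\pi_{k+1}}$, which yields the lemma.

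To prove the statewise inequality, I would first unpack Line~4. The identity $\mathcal{H}^{\pi_{k+1}}_{\eta_k}(\bar Q_{k+1}) = \mathcal{H}_{\eta_k}(\bar Q_{k+1})$ is an equality of expectations. Each summand satisfies $\pi_{k+1}(s)^\top \bar Q_{k+1}(s) + \eta_k\nu(\pi_{k+1}(s)) \le \max_\mu(\cdots)$. Hence at every reachable state $\pi_{k+1}(s)$ maximizes
\[
f_{k+1}(\mu) := \mu^\top \bar Q_{k+1}(s) + \eta_k \nu(\mu).
\]
The same holds for $\pi_k$ with $f_k(\mu) := \mu^\top \bar Q_k(s) + \eta_{k-1}\nu(\mu)$ for $k\ge1$. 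Then I would use the recursions $\bar Q_{k+1} = (1-\beta_k)\bar Q_k + \beta_k Q^{\pi_k}$ and $\eta_k = (1-\beta_k)\eta_{k-1}$ for $k \ge 1$. These give the decomposition
\[
f_{k+1}(\mu) = (1-\beta_k) f_k(\mu) + \beta_k \mu^\top Q^{\pi_k}(s).
\]
Optimality of $\pi_{k+1}$ gives $f_{k+1}(\pi_{k+1}(s)) \ge f_{k+1}(\pi_k(s))$. Optimality of $\pi_k$ gives $f_k(\pi_k(s)) \ge f_k(\pi_{k+1}(s))$, and $1-\beta_k \ge 0$. Subtracting yields $\beta_k\bigl(\pi_{k+1}(s)-\pi_k(s)\bigr)^\top Q^{\pi_k}(s) \ge (1-\beta_k)\bigl(f_k(\pi_k(s)) - f_k(\pi_{k+1}(s))\bigr) \ge 0$. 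Dividing by $\beta_k>0$ finishes the step.

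The base case $k=0$ needs separate treatment, since $\eta_0 = \tau$ is not $(1-\beta_0)\eta_{-1}$. There $\bar Q_1 = \beta_0 Q^{\pi_0}$, and $\pi_1(s)$ maximizes $\beta_0 \mu^\top Q^{\pi_0}(s) + \tau\nu(\mu)$. Since $\pi_0(s) \in \argmax \nu$, we have $\tau\nu(\pi_0(s)) \ge \tau\nu(\pi_1(s))$. Comparing the objectives at $\pi_1(s)$ and $\pi_0(s)$ then gives the claim when $\beta_0>0$.

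I expect the main obstacle to be the bookkeeping rather than any deep estimate. One issue is checking the exact indexing of $\eta_k$ against $\eta_{k-1}$, so that the regularization terms cancel through the convex combination. Another is the degenerate case $\beta_k=0$, where the argument gives no information. In that case $\bar Q$ and $\eta$ are unchanged, so I would assume $\beta_k\in(0,1]$ as in Theorem~\ref{thm:DSPI}, or alternatively require that ties in Line~4 be broken consistently so that $\pi_{k+1}=\pi_k$.
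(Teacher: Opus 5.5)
Your proof is correct and is essentially the paper's argument. The paper writes $Q^{\pi_k} = \beta_k^{-1}\mathcal{H}^{\pi_k}_{\eta_k}(\bar{Q}_{k+1}) - \beta_k^{-1}(1-\beta_k)\mathcal{H}^{\pi_k}_{\eta_{k-1}}(\bar{Q}_k)$, the operator-level form of your identity $f_{k+1}(\mu) = (1-\beta_k)f_k(\mu) + \beta_k\mu^\top Q^{\pi_k}(s)$, also relying on $\eta_k=(1-\beta_k)\eta_{k-1}$; it then uses the same two optimality comparisons, and handles the base case through $\nu(\pi_0(s))=\nu_{\max}$ as you do.

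The only difference is that the paper works with operator inequalities ($\mathcal{H}^{\pi}_{\eta}(Q)\le\mathcal{H}_{\eta}(Q)$ for every $\pi$), so it never needs your step of extracting statewise optimality at reachable states. In return, your version makes explicit what the paper tacitly assumes: $\beta_k>0$, since the paper divides by $\beta_k$ and takes $\beta_0=1$.
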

\begin{remark}
    By our convention, $Q^{\pi_k}\leq Q^{\pi_{k+1}}$ means that $Q^{\pi_k}(s,a)\leq Q^{\pi_{k+1}}(s,a)$ for all $(s,a)$. This vector-version monotonic improvement lemma is stronger than those in the literature on NPG, where monotonicity is established only for the scalar objective $V^{\pi_k}(\rho)$ \cite{agarwal2021theory}. This stronger form is important for leveraging the contraction property of the Bellman operator in the next step to derive a contractive recursion for the convergence error.
\end{remark}
\begin{proof}[Proof of Lemma \ref{le:monotone}]
    It suffices to show that $Q^{\pi_k} \le \mathcal{H}^{\pi_{k+1}}(Q^{\pi_k})$ for all $k \ge 0$. Once this is established, repeatedly applying $\mathcal{H}^{\pi_{k+1}}$ to both sides and using its monotonicity yields $Q^{\pi_k} \le [\mathcal{H}^{\pi_{k+1}}]^n(Q^{\pi_k})$ for all $n \ge 0$. Taking $n \to \infty$ gives $Q^{\pi_k} \le Q^{\pi_{k+1}}$.

For simplicity of notation, define $f:\mathbb{R}^{mn}\to\mathbb{R}^{mn}$ by
$[f(\pi)](s,a)=\gamma \sum_{s'\in\mathcal{S}} p(s'\mid s,a)\nu(\pi(s'))$ for all $(s,a)$. Then, for any policy $\pi$ and $Q\in\mathbb{R}^{mn}$, we have $\mathcal{H}_\eta^\pi(Q)=\mathcal{H}^\pi(Q)+\eta f(\pi)$. 
We now show that $Q^{\pi_k} \le \mathcal{H}^{\pi_{k+1}}(Q^{\pi_k})$ for the two cases $k=0$ and $k\geq 1$.

\begin{itemize}
    \item For $k=0$, since $\beta_0=1$, we have $\bar{Q}_1=Q^{\pi_0}$. It follows that $Q^{\pi_0}
    = \mathcal{H}^{\pi_0}(\bar{Q}_1)
    = \mathcal{H}_{\eta_0}^{\pi_0}(\bar{Q}_1) -\eta_0 f(\pi_0)$.
    Since $\mathcal{H}_{\eta_0}^{\pi_0}(\bar{Q}_1)\leq \mathcal{H}_{\eta_0}(\bar{Q}_1)=\mathcal{H}_{\eta_0}^{\pi_1}(\bar{Q}_1)$ and
    $f(\pi_1)\leq f(\pi_0)$, where the latter follows from initializing
    $\pi_0(s)\in\argmax_{\mu\in\Delta(\mathcal{A})}\nu(\mu)$ for all $s\in\mathcal{S}$, we further obtain
    \begin{align*}
        Q^{\pi_0}
        \le \mathcal{H}_{\eta_0}^{\pi_1}(\bar{Q}_1) - \eta_0 f(\pi_1)
        = \mathcal{H}^{\pi_1}(\bar{Q}_1)
        = \mathcal{H}^{\pi_1}(Q^{\pi_0}).
    \end{align*}
    \item 
    For $k\ge 1$, since $Q^{\pi_k}=\mathcal{H}^{\pi_k}(Q^{\pi_k})$ and $Q^{\pi_k} = (\bar{Q}_{k+1} - (1-\beta_k)\bar{Q}_k)/\beta_k$ (cf. Line~3 of Algorithm~\ref{alg:DSPI}), we have
    \begin{align*}
        Q^{\pi_k}
        =\,& \mathcal{H}_{\eta_{k-1}}^{\pi_k}\!\left(\frac{\bar{Q}_{k+1} - (1-\beta_k)\bar{Q}_k}{\beta_k}\right)
        - \eta_{k-1} f(\pi_k)\\
        =\,& \frac{1}{\beta_k}\mathcal{H}_{\eta_{k-1}}^{\pi_k}(\bar{Q}_{k+1})
        - \frac{(1-\beta_k)}{\beta_k}\mathcal{H}_{\eta_{k-1}}^{\pi_k}(\bar{Q}_k)
        - \eta_{k-1} f(\pi_k)\\
        =\,& \frac{1}{\beta_k}\mathcal{H}_{\eta_k}^{\pi_k}(\bar{Q}_{k+1})
        - \frac{(1-\beta_k)}{\beta_k}\mathcal{H}_{\eta_{k-1}}^{\pi_k}(\bar{Q}_k)
        + \left(\frac{(1-\beta_k)\eta_{k-1} - \eta_{k}}{\beta_k}\right) f(\pi_k)\\
        =\,& \frac{1}{\beta_k}\mathcal{H}_{\eta_k}^{\pi_k}(\bar{Q}_{k+1})
        - \frac{(1-\beta_k)}{\beta_k}\mathcal{H}_{\eta_{k-1}}^{\pi_k}(\bar{Q}_k),
    \end{align*}
    where the second equality follows from $\mathcal{H}_{\eta_{k-1}}^{\pi_k}$ being an affine operator, and the last equality follows from $\eta_k=(1-\beta_k)\eta_{k-1}$.

    To proceed, observe that by Line~4 of Algorithm~\ref{alg:DSPI}, we have
    $\mathcal{H}_{\eta_k}^{\pi_k}(\bar{Q}_{k+1}) \le \mathcal{H}_{\eta_k}^{\pi_{k+1}}(\bar{Q}_{k+1})$
    and
    $\mathcal{H}_{\eta_{k-1}}^{\pi_k}(\bar{Q}_k) \ge \mathcal{H}_{\eta_{k-1}}^{\pi_{k+1}}(\bar{Q}_k)$.
    Therefore,
    \begin{align*}
        Q^{\pi_k}
        \le\,& \frac{1}{\beta_k}\mathcal{H}_{\eta_k}^{\pi_{k+1}}(\bar{Q}_{k+1})
        - \frac{(1-\beta_k)}{\beta_k}\mathcal{H}_{\eta_{k-1}}^{\pi_{k+1}}(\bar{Q}_k)\\
        =\,& \mathcal{H}^{\pi_{k+1}}\!\left(\frac{\bar{Q}_{k+1} - (1-\beta_k)\bar{Q}_k}{\beta_k}\right)
        + \left(\frac{\eta_k-(1-\beta_k)\eta_{k-1}}{\beta_k}\right)f(\pi_{k+1})\\
        =\,& \mathcal{H}^{\pi_{k+1}}(Q^{\pi_k}),
    \end{align*}
    where the last equality follows from Algorithm \ref{alg:DSPI}, Line 3, and $\eta_k=(1-\beta_k)\eta_{k-1}$.
\end{itemize}
\end{proof}

With Lemma~\ref{le:monotone} in hand, we next derive a contractive recursion for $\bar{Q}_k$. 
\begin{lemma}\label{le:contraction_recursion}
    Algorithm~\ref{alg:DSPI} satisfies
    \begin{align*}
        \|Q^*-\Bar{Q}_{k+1}\|_\infty
        \leq (1-(1-\gamma)\beta_k)\|Q^*-\Bar{Q}_k\|_\infty
        +\gamma\beta_k\eta_{k-1}\nu_{\max},\quad \forall\,k\geq 1.
    \end{align*}
\end{lemma}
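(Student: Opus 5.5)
The plan is to split $Q^*-\bar{Q}_{k+1}$ using Line~3 of Algorithm~\ref{alg:DSPI}, so that only the new term $Q^*-Q^{\pi_k}$ needs to be controlled. Specifically,
\begin{align*}
    Q^*-\bar{Q}_{k+1}=(1-\beta_k)(Q^*-\bar{Q}_k)+\beta_k(Q^*-Q^{\pi_k}).
\end{align*}
By the triangle inequality, it then suffices to show
\begin{align*}
    \|Q^*-Q^{\pi_k}\|_\infty\le \gamma\|Q^*-\bar{Q}_k\|_\infty+\gamma\eta_{k-1}\nu_{\max}.
\end{align*}
Indeed, plugging this in gives $(1-\beta_k+\gamma\beta_k)\|Q^*-\bar{Q}_k\|_\infty+\gamma\beta_k\eta_{k-1}\nu_{\max}$, which is exactly the claimed bound.

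The first step is to show that $\bar{Q}_k\le Q^{\pi_k}$ for $k\ge 1$. Unrolling Line~3 with $\bar{Q}_0=\mathbf{0}$ expresses $\bar{Q}_k$ as a nonnegative combination of $Q^{\pi_0},\dots,Q^{\pi_{k-1}}$. Because $\beta_0=1$, as used in the proof of Lemma~\ref{le:monotone} (which takes $\bar{Q}_1=Q^{\pi_0}$), the weights sum to one, so this is in fact a convex combination. By Lemma~\ref{le:monotone}, each of $Q^{\pi_0},\dots,Q^{\pi_{k-1}}$ is at most $Q^{\pi_k}$, and hence $\bar{Q}_k\le Q^{\pi_k}$.

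The second step is a sandwich bound on $Q^{\pi_k}$. From above, $Q^{\pi_k}\le Q^*$ by optimality. From below, I chain the fixed-point identity, monotonicity, and the definition of $f$ from the proof of Lemma~\ref{le:monotone}:
\begin{align*}
    Q^{\pi_k}=\mathcal{H}^{\pi_k}(Q^{\pi_k})\ge \mathcal{H}^{\pi_k}(\bar{Q}_k)=\mathcal{H}^{\pi_k}_{\eta_{k-1}}(\bar{Q}_k)-\eta_{k-1}f(\pi_k).
\end{align*}
Line~4 at iteration $k-1$ gives $\mathcal{H}^{\pi_k}_{\eta_{k-1}}(\bar{Q}_k)=\mathcal{H}_{\eta_{k-1}}(\bar{Q}_k)$. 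Since $\nu\ge 0$, we have $\mathcal{H}_{\eta_{k-1}}(\bar{Q}_k)\ge\mathcal{H}(\bar{Q}_k)$, and $0\le f(\pi_k)\le\gamma\nu_{\max}$ componentwise. Together these yield $Q^{\pi_k}\ge \mathcal{H}(\bar{Q}_k)-\gamma\eta_{k-1}\nu_{\max}\mathbf{1}$.

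Combining the two bounds with $Q^*=\mathcal{H}(Q^*)$ gives
\begin{align*}
    0\le Q^*-Q^{\pi_k}\le \mathcal{H}(Q^*)-\mathcal{H}(\bar{Q}_k)+\gamma\eta_{k-1}\nu_{\max}\mathbf{1}.
\end{align*}
Because the difference is nonnegative, its sup norm is at most the sup norm of the right-hand side. The $\gamma$-contraction of $\mathcal{H}$ then gives the desired bound on $\|Q^*-Q^{\pi_k}\|_\infty$. The main subtlety I expect is this lower bound on $Q^{\pi_k}$. It requires matching the index of the smoothing parameter ($\eta_{k-1}$ pairs with $\bar{Q}_k$ and $\pi_k$), and it requires the two-sided sandwich so that a one-sided inequality converts into an $\ell_\infty$ bound. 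The rest of the argument is bookkeeping.
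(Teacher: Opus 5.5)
Your proof is correct and uses the same ingredients as the paper: $\bar Q_k\le Q^{\pi_k}$ from Lemma~\ref{le:monotone}, monotonicity of $\mathcal{H}^{\pi_k}$, Line~4 at iteration $k-1$ together with $\nu\ge 0$ to pass from $\mathcal{H}^{\pi_k}_{\eta_{k-1}}$ to $\mathcal{H}$, the bound $f(\pi_k)\le\gamma\nu_{\max}\mathbf{1}$, and the $\gamma$-contraction of $\mathcal{H}$. The only cosmetic difference is the order of steps: the paper carries the whole recursion componentwise and uses nonnegativity of $Q^*-\bar Q_{k+1}$ at the end, whereas you apply the triangle inequality first and use nonnegativity of $Q^*-Q^{\pi_k}$, exactly as the paper does in its linear-function-approximation analogue (Lemma~\ref{le:contraction_recursion-LFA}).
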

\begin{proof}[Proof of Lemma \ref{le:contraction_recursion}]
    An immediate implication of Lemma~\ref{le:monotone} is that $\bar{Q}_k \le Q^{\pi_k}$ for all $k \ge 0$, since $\bar{Q}_k$ is a weighted average of a monotonically increasing sequence.

    Let $\mathbf{1}$ denote the all-ones vector. 
    Using Algorithm~\ref{alg:DSPI}, Line~3, we obtain for all $k\geq 1$ that
    \begin{subequations}\label{eqeq}
\begin{align}
        Q^*-\Bar{Q}_{k+1}
        =\,&(1-\beta_k)(Q^*-\Bar{Q}_k)+\beta_k(Q^*-Q^{\pi_k})\nonumber\\
        =\,&(1-\beta_k)(Q^*-\Bar{Q}_k)
        +\beta_k(\mathcal{H}(Q^*)-\mathcal{H}^{\pi_k}(Q^{\pi_k}))\label{eqeq:a}\\
        \leq\,&(1-\beta_k)(Q^*-\Bar{Q}_k)
        +\beta_k(\mathcal{H}(Q^*)-\mathcal{H}^{\pi_k}(\Bar{Q}_k))\label{eqeq:b}\\
        =\,&(1-\beta_k)(Q^*-\Bar{Q}_k)
        +\beta_k(\mathcal{H}(Q^*)-\mathcal{H}_{\eta_{k-1}}^{\pi_k}(\Bar{Q}_k))
        +\beta_k\eta_{k-1}f(\pi_k)\nonumber\\
        \leq\,&(1-\beta_k)(Q^*-\Bar{Q}_k)
        +\beta_k(\mathcal{H}(Q^*)-\mathcal{H}(\Bar{Q}_k))
        +\beta_k\eta_{k-1}f(\pi_k)\label{eqeq:c}\\
        \leq\,&(1-\beta_k)\|Q^*-\Bar{Q}_k\|_\infty\mathbf{1}
        +\beta_k\|\mathcal{H}(Q^*)-\mathcal{H}(\Bar{Q}_k)\|_\infty\mathbf{1}
        +\gamma\beta_k\eta_{k-1}\nu_{\max}\mathbf{1}\nonumber\\
        \leq\,&\bigl(1-(1-\gamma)\beta_k\bigr)\|Q^*-\Bar{Q}_k\|_\infty\mathbf{1}
        +\gamma\beta_k\eta_{k-1}\nu_{\max}\mathbf{1},\label{eqeq:d}
    \end{align}
\end{subequations}
    where \eqref{eqeq:a} follows from the Bellman equations, \eqref{eqeq:b} follows from $\bar{Q}_k \le Q^{\pi_k}$ and $\mathcal{H}^{\pi_k}$ being a monotonic operator, \eqref{eqeq:c} follows from $\mathcal{H}(\bar{Q}_k)\leq \mathcal{H}_{\eta_{k-1}}(\bar{Q}_k)=\mathcal{H}_{\eta_{k-1}}^{\pi_{k}}(\bar{Q}_k)$, and \eqref{eqeq:d} follows from $\mathcal{H}$ being a contraction mapping with respect to $\|\cdot\|_\infty$. 
   Since the vector $Q^*-\Bar{Q}_{k+1}$ has non-negative entries, the desired inequality follows.
\end{proof}

\begin{proof}[Proof of Theorem \ref{thm:DSPI}]
    Repeatedly applying Lemma~\ref{le:contraction_recursion}, we obtain a bound on $\|Q^* - \bar{Q}_k\|_\infty$. The final step of the proof is to translate this bound into one on $\|V^* - V^{\pi_k}\|_\infty$, and then solve the resulting recursion. This step involves only algebraic manipulations, and we defer the details to Appendix~\ref{ap:translation}. The proof is complete after this step.
\end{proof}

\begin{remark}
    In view of the proof of Theorem~\ref{thm:DSPI}, although DSPI captures gradient-based policy optimization algorithms (e.g., NPG and PDA), its analysis differs fundamentally from approaches based on optimization techniques. It relies directly on Bellman’s core principles: the monotonicity of the Bellman operator yields monotonic improvement of the policies (cf. Lemma~\ref{le:monotone}), and its contraction property induces a contractive recursion for the convergence error (cf. Lemma~\ref{le:contraction_recursion}).
\end{remark}

\section{Conclusion}
We introduce DSPI, an algorithmic framework that bridges classical dynamic programming algorithms, such as PI and dual-averaged PI, with modern policy optimization methods, such as NPG and PDA. This perspective enables a unified analysis based on monotonicity and contraction properties of the Bellman operator, yielding distribution-free global geometric convergence for NPG and PDA, and establishing finite termination for dual-averaged PI. The framework also extends to discounted MDPs with linear function approximation and to stochastic shortest path problems.

Looking ahead, our current analysis assumes access to the exact $Q$-function $Q^\pi$ for each policy iterate, which isolates the policy-improvement dynamics. Extending DSPI to the model-free RL setting, where $Q^\pi$ must be estimated from sampled trajectories (e.g., via temporal-difference learning or Monte Carlo rollouts \cite{sutton2018reinforcement}), is therefore a natural next step. While finite-time guarantees for such policy evaluation methods are well understood \cite{srikant2019finite,bhandari2018finite,chen2024lyapunov}, combining them with our framework to obtain sharp sample complexity results for model-free RL remains nontrivial. In particular, determining whether these variants can achieve minimax-optimal dependence on all problem parameters, namely $\mathcal{O}(\epsilon^{-2}(1-\gamma)^{-3}mn)$ \cite{azar2013minimax}, is, to the best of our knowledge, still broadly open in the existing analysis of actor-critic methods, and is an important direction for future work.

\bibliographystyle{apalike}
\bibliography{references}

\newpage

\begin{center}
    {\LARGE\bfseries Appendices}
\end{center}

\appendix

\section{Related Work on Natural Policy Gradient}
\label{app:comparison}

\begin{table}[ht]
\caption{Summary of existing analyses of natural policy gradient.}
\label{table:1}
\centering
\small
\setlength{\tabcolsep}{4pt}
\renewcommand{\arraystretch}{1.5}
\begin{tabular}{
!{\vrule width 1.5pt}
>{\centering\arraybackslash}p{4 cm}
| >{\centering\arraybackslash}p{2.5cm}
| >{\centering\arraybackslash}p{2.3 cm}
| >{\centering\arraybackslash}p{2.5 cm}
| >{\centering\arraybackslash}p{2.4cm}
!{\vrule width 1.5pt}
}
\thickhline
&
\textbf{Global Geometric Convergence} &
\textbf{Distribution Free} &
\textbf{Free of Regularization} &
\textbf{Free of Adaptive Stepsize} \\
\thickhline
\citet{agarwal2021theory} & \xmark & \cmark & \cmark & \cmark \\
\hline
\citet{shani2020adaptive} & \xmark & \cmark & \xmark & \cmark \\
\hline
\citet{bhandari2021linear}\footnotemark[3] & \cmark & \xmark & \cmark & \xmark \\
\hline
\citet{cen2022fast} & \cmark & \cmark & \xmark & \cmark \\
\hline
\citet{lan2023policy} & \cmark & \cmark & \xmark & \cmark \\
\hline
\citet{li2024homotopic} & \cmark & \cmark & \xmark & \cmark \\
\hline
\citet{zhan2023policy} & \cmark & \cmark & \xmark & \cmark \\
\hline
\citet{xiao2022convergence} & \cmark & \xmark & \xmark & \cmark \\ 
\hline
\citet{khodadadian2021linear} & \cmark & \cmark & \cmark & \xmark \\
\hline
\citet{chen2025API} & \cmark & \cmark & \cmark & \xmark \\
\hline
\citet{ju2026strongly} & \cmark & \cmark & \cmark & \cmark\footnotemark[4] \\
\hline
\rowcolor{gray!15} 
\textbf{This Work} & \cmark & \cmark & \cmark & \cmark \\
\thickhline
\end{tabular}
\begin{flushleft}
\footnotesize
We compare our work (last row) with existing results in terms of (i) global geometric convergence of the last iterate, (ii) dependence of the guarantees on the initial distribution (explicitly or via concentrability coefficients), (iii) the use of regularization in the MDP or the algorithm, and (iv) the need for trajectory-dependent stepsizes. 
\end{flushleft}
\end{table}
\footnotetext[3]{Global geometric convergence is established under two regimes: (i) using line search for stepsize selection where the convergence bounds depend explicitly on the initial distribution, and (ii) without such dependence using adaptive stepsizes that depend on policy updates.}
\footnotetext[4]{While \cite{ju2026strongly} does not use trajectory-dependent adaptive stepsizes, its choice of stepsize depends on the Bellman error (also called the advantage gap) of the initial policy, which is not required in our result (cf. Theorem~\ref{thm:NPG}).}

\section{Proofs of All Technical Results}\label{ap:discounted}
\subsection{The Geometric Convergence of Policy Iteration}\label{ap:PI_analysis}
\begin{theorem}\label{thm:PI}
    PI \eqref{algo:PI:Bellman} satisfies $\|V^*-V^{\pi_k}\|_\infty \le \gamma^k \|V^*-V^{\pi_0}\|_\infty$ for all $k\geq 0$.
\end{theorem}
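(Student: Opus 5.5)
We follow the classical two-step argument for PI outlined in Section~\ref{subsec:Bellman}: first establish vector-wise monotone improvement, then extract a contraction through the Bellman optimality operator $\mathcal{H}$.

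\textbf{Step 1 (monotone improvement).} Since $Q^{\pi_k}=\mathcal{H}^{\pi_k}(Q^{\pi_k})\le \mathcal{H}(Q^{\pi_k})=\mathcal{H}^{\pi_{k+1}}(Q^{\pi_k})$ by \eqref{algo:PI:Bellman}, we can repeatedly apply the monotone operator $\mathcal{H}^{\pi_{k+1}}$. This gives $Q^{\pi_k}\le [\mathcal{H}^{\pi_{k+1}}]^N(Q^{\pi_k})\to Q^{\pi_{k+1}}$, so $Q^{\pi_k}\le Q^{\pi_{k+1}}$. This is the $\eta\equiv 0$, $\beta\equiv1$ case of Lemma~\ref{le:monotone}, and it can be invoked directly.

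\textbf{Step 2 (contraction in $Q$).} Using $Q^*=\mathcal{H}(Q^*)$ and Step~1, for $k\ge0$ we have
\begin{align*}
0\le Q^*-Q^{\pi_{k+1}} = \mathcal{H}(Q^*)-\mathcal{H}^{\pi_{k+1}}(Q^{\pi_{k+1}})\le \mathcal{H}(Q^*)-\mathcal{H}^{\pi_{k+1}}(Q^{\pi_k}) = \mathcal{H}(Q^*)-\mathcal{H}(Q^{\pi_k}).
\end{align*}
Monotonicity of $\mathcal{H}^{\pi_{k+1}}$ together with Step~1 justifies the middle inequality, and the last equality is the PI update. Taking sup norms and using the $\gamma$-contraction of $\mathcal{H}$ gives $\|Q^*-Q^{\pi_{k+1}}\|_\infty\le\gamma\|Q^*-Q^{\pi_k}\|_\infty$.

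\textbf{Step 3 (translation to $V$).} This is the only delicate step. A naive bound $\|Q^*-Q^{\pi_k}\|_\infty\le \gamma^k\|Q^*-Q^{\pi_0}\|_\infty$ leaves $Q$-gaps rather than $V$-gaps at both ends. Note that $Q^*-Q^{\pi}=\gamma P(V^*-V^\pi)$, where $P$ is the transition matrix. This gives $\|Q^*-Q^{\pi_0}\|_\infty\le\gamma\|V^*-V^{\pi_0}\|_\infty$.

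For the other end, $V^{\pi_k}=\mathcal{H}$-type expressions must be handled carefully, so we argue directly in $V$. For $k\ge1$,
\[
V^{\pi_k}(s)=\pi_k(s)^\top Q^{\pi_k}(s)\ge \pi_k(s)^\top Q^{\pi_{k-1}}(s)=\max_a Q^{\pi_{k-1}}(s,a)\ge \max_a Q^{*}(s,a)-\|Q^*-Q^{\pi_{k-1}}\|_\infty,
\]
which yields $0\le V^*-V^{\pi_k}\le \|Q^*-Q^{\pi_{k-1}}\|_\infty\mathbf{1}$.

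Combining the three steps,
\[
\|V^*-V^{\pi_k}\|_\infty\le \gamma^{k-1}\|Q^*-Q^{\pi_0}\|_\infty\le\gamma^k\|V^*-V^{\pi_0}\|_\infty
\]
for $k\ge1$, and the case $k=0$ is trivial.
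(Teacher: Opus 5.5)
Your proposal is correct and follows the paper's proof step for step. The paper uses the same three ingredients: monotone improvement by iterating $\mathcal{H}^{\pi_{k+1}}$, the one-step contraction $\|Q^*-Q^{\pi_{k+1}}\|_\infty\le\gamma\|Q^*-Q^{\pi_k}\|_\infty$ through $\mathcal{H}$, and the two translations $0\le V^*-V^{\pi_k}\le\|Q^*-Q^{\pi_{k-1}}\|_\infty\mathbf{1}$ and $\|Q^*-Q^{\pi_0}\|_\infty\le\gamma\|V^*-V^{\pi_0}\|_\infty$.

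One caveat, which the paper's proof shares: the equality $\pi_k(s)^\top Q^{\pi_{k-1}}(s)=\max_a Q^{\pi_{k-1}}(s,a)$ requires $\pi_k$ to be greedy at every state, as in \eqref{eq:PI:explicit}. The operator identity \eqref{algo:PI:Bellman} alone only forces greediness at states reachable with positive probability from some $(s,a)$.
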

\begin{proof}[Proof of Theorem \ref{thm:PI}]
Our proof follows the two-step approach outlined in the last paragraph of Section~\ref{subsec:Bellman}.
\begin{itemize}
    \item \textbf{Step 1: Monotonic Improvement:} Using the Bellman equation, we have
$Q^{\pi_k} = \mathcal{H}^{\pi_k}(Q^{\pi_k}) \le \mathcal{H}(Q^{\pi_k}) = \mathcal{H}^{\pi_{k+1}}(Q^{\pi_k})$,
where the last equality follows from the update rule~\eqref{algo:PI:Bellman}. Repeatedly applying $\mathcal{H}^{\pi_{k+1}}$ to both sides of
$Q^{\pi_k} \le \mathcal{H}^{\pi_{k+1}}(Q^{\pi_k})$ and using its monotonicity yields
$Q^{\pi_k} \le [\mathcal{H}^{\pi_{k+1}}]^n(Q^{\pi_k})$ for all $n \ge 0$.
Letting $n \to \infty$ gives
$Q^{\pi_k} \le Q^{\pi_{k+1}}$.

\item \textbf{Step 2: A Contractive Recursion:} Since $Q^{\pi_k} \le Q^{\pi_{k+1}}$ and $\mathcal{H}^{\pi_{k+1}}$ is monotonic, we have
\begin{align*}
    Q^* - Q^{\pi_{k+1}}
    = \mathcal{H}(Q^*) - \mathcal{H}^{\pi_{k+1}}(Q^{\pi_{k+1}})
    \le \mathcal{H}(Q^*) - \mathcal{H}^{\pi_{k+1}}(Q^{\pi_k})
    = \mathcal{H}(Q^*) - \mathcal{H}(Q^{\pi_k}).
\end{align*}
Since both sides are non-negative vectors, we obtain
\begin{align*}
    \|Q^* - Q^{\pi_{k+1}}\|_\infty
    \le \|\mathcal{H}(Q^*) - \mathcal{H}(Q^{\pi_k})\|_\infty
    \le \gamma \|Q^* - Q^{\pi_k}\|_\infty,\quad \forall\,k\geq 0.
\end{align*}
Solving this recursion yields $\|Q^* - Q^{\pi_k}\|_\infty \le \gamma^k\|Q^* - Q^{\pi_0}\|_\infty$ for all $k \ge 0$.
\end{itemize}
To translate the $Q$-function gap $\|Q^{\pi_k} - Q^*\|_\infty$to the value function gap $\|V^{\pi_k} - V^*\|_\infty$, observe that on the one hand, we have
\begin{align*}
    V^*(s) - V^{\pi_k}(s)
    =\,& \max_{a \in \mathcal{A}} Q^*(s,a) - \pi_k(s)^\top Q^{\pi_k}(s) \\
    \le\,& \max_{a \in \mathcal{A}} Q^*(s,a) - \pi_k(s)^\top Q^{\pi_{k-1}}(s)
    \tag{$Q^{\pi_{k-1}} \le Q^{\pi_k}$ for all $k \ge 1$} \\
    =\,& \max_{a \in \mathcal{A}} Q^*(s,a) - \max_{a \in \mathcal{A}} Q^{\pi_{k-1}}(s,a)
    \tag{update rule \eqref{algo:PI:Bellman}} \\
    \le\,& \|Q^* - Q^{\pi_{k-1}}\|_\infty .
\end{align*}
On the other hand, we have
\begin{align*}
    Q^*(s,a) - Q^{\pi_0}(s,a)
    = \gamma \sum_{s' \in \mathcal{S}} p(s' \mid s,a)
    \bigl(V^*(s') - V^{\pi_0}(s')\bigr)
    \le \gamma \|V^* - V^{\pi_0}\|_\infty .
\end{align*}
Combining the above two inequalities with the convergence rate of $\|Q^* - Q^{\pi_k}\|_\infty$ established in Step~2 yields $\|V^* - V^{\pi_k}\|_\infty
    \le \gamma^k \|V^* - V^{\pi_0}\|_\infty$ for all $k\geq 0$.
\end{proof}

\subsection{Properties of the Smoothed Bellman Operators}\label{ap:smooth_Bellman_properties}
\begin{lemma}\label{le:smooth_Bellman_properties}
    For any policy $\pi$ and $\eta \geq 0$, let $F$ denote either $\mathcal{H}_\eta$ or $\mathcal{H}_\eta^\pi$. Then, for any $Q_1, Q_2 \in \mathbb{R}^{mn}$, we have
\begin{enumerate}[(1)]
    \item $\|F(Q_1) - F(Q_2)\|_\infty \leq \gamma \|Q_1 - Q_2\|_\infty$;
    \item $F(Q_1) \leq F(Q_2)$ whenever $Q_1 \leq Q_2$.
\end{enumerate}
\end{lemma}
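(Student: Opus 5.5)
The plan is to reduce both properties to statements about the per-state scalar maps that appear inside the operators. For fixed $s'$, define
\[
g_\eta(x) := \max_{\mu\in\Delta(\mathcal{A})}\bigl(\mu^\top x + \eta\,\nu(\mu)\bigr)
\qquad\text{and}\qquad
g^\pi_\eta(x) := \pi(s')^\top x + \eta\,\nu(\pi(s')),
\qquad x\in\mathbb{R}^m .
\]
Then
\[
[F(Q)](s,a) = \mathcal{R}(s,a) + \gamma\sum_{s'} p(s'\mid s,a)\, g(Q(s')),
\]
where $g$ is either $g_\eta$ or $g^\pi_\eta$. Since $p(\cdot\mid s,a)$ is a probability vector, it suffices to show two scalar facts for every $x,y\in\mathbb{R}^m$:
\[
|g(x)-g(y)|\le \|x-y\|_\infty,
\qquad\text{and}\qquad
g(x)\le g(y)\ \text{whenever } x\le y .
\]
The maximum in $g_\eta$ is attained, because $\nu$ is concave and bounded on the compact simplex. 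If upper semicontinuity is a concern, I will write a supremum instead; every inequality below goes through unchanged.

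\textbf{The policy operator $\mathcal{H}_\eta^\pi$.} The regularization term $\eta\,\nu(\pi(s'))$ does not depend on $x$, so it cancels in differences. This gives
\[
g^\pi_\eta(x)-g^\pi_\eta(y) = \pi(s')^\top(x-y).
\]
Because $\pi(s')$ is a probability vector, this quantity is bounded in absolute value by $\|x-y\|_\infty$, and it is nonpositive whenever $x\le y$.

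\textbf{The optimality operator $\mathcal{H}_\eta$.} I will use the elementary inequality $\sup_\mu A(\mu)-\sup_\mu B(\mu)\le \sup_\mu\,(A(\mu)-B(\mu))$. Taking $A(\mu)=\mu^\top x+\eta\nu(\mu)$ and $B(\mu)=\mu^\top y+\eta\nu(\mu)$, the $\eta\nu(\mu)$ terms cancel, so
\[
g_\eta(x)-g_\eta(y)\le \max_\mu \mu^\top(x-y)\le \|x-y\|_\infty .
\]
Swapping the roles of $x$ and $y$ gives the bound on the absolute value. For monotonicity, suppose $x\le y$. Then $\mu^\top x+\eta\nu(\mu)\le \mu^\top y+\eta\nu(\mu)$ for every $\mu$, since $\mu\ge 0$. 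Taking the maximum over $\mu$ on both sides preserves the inequality.

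\textbf{Lifting to the operators.} Multiplying the scalar bounds by $\gamma$ and averaging over $s'$ with weights $p(s'\mid s,a)$ gives, for each $(s,a)$,
\[
\bigl|[F(Q_1)](s,a) - [F(Q_2)](s,a)\bigr| \le \gamma\max_{s'}\|Q_1(s')-Q_2(s')\|_\infty = \gamma\|Q_1-Q_2\|_\infty .
\]
Taking the maximum over $(s,a)$ yields (1). The same averaging applied to the monotonicity of $g$ yields (2). There is no real obstacle here. The only step requiring care is the sup-difference inequality, which is exactly what allows the regularizer to cancel regardless of its form; this is why only boundedness of $\nu$ is needed, and concavity is not.
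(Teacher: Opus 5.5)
Your proof is correct and takes essentially the same route as the paper's. Both arguments reduce to each next state $s'$, use $|\max_\mu A(\mu)-\max_\mu B(\mu)|\le \max_\mu |A(\mu)-B(\mu)|$ so the $\eta\nu(\mu)$ terms cancel, and get monotonicity by comparing pointwise inside the maximum; you also write out the $\mathcal{H}_\eta^\pi$ case, which the paper only calls ``similar.'' (Small aside: concavity and boundedness alone do not guarantee the maximum is attained, since a concave $\nu$ need not be upper semicontinuous on the boundary of the simplex, but your fallback to a supremum covers this and nothing else changes.)
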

\begin{proof}[Proof of Lemma \ref{le:smooth_Bellman_properties}]
    We prove the result only for the smoothed Bellman optimality operator $\mathcal{H}_\eta$. The proof for $\mathcal{H}_\eta^\pi$ follows by a similar argument.
    \begin{enumerate}[(1)]
        \item For any $(s,a)$, we have
        \begin{align*}
            &\left|[\mathcal{H}_\eta(Q_1)](s,a) - [\mathcal{H}_\eta(Q_2)](s,a)\right| \\
            =\,& \gamma \left|\sum_{s' \in \mathcal{S}} p(s' \mid s,a)
            \left(
            \max_{\mu \in \Delta(\mathcal{A})} \{\mu^\top Q_1(s') + \eta \nu(\mu)\}
            - \max_{\mu \in \Delta(\mathcal{A})} \{\mu^\top Q_2(s') + \eta \nu(\mu)\}
            \right)\right| \\
            \leq\,& \gamma \sum_{s' \in \mathcal{S}} p(s' \mid s,a)
            \max_{\mu \in \Delta(\mathcal{A})}
            \left|\mu^\top (Q_1(s') - Q_2(s'))\right| \\
            \leq\,& \gamma \|Q_1 - Q_2\|_\infty.
        \end{align*}
        Since the above bound holds for all $(s,a)$, we obtain
        \begin{align*}
            \|\mathcal{H}_\eta(Q_1) - \mathcal{H}_\eta(Q_2)\|_\infty
            \leq \gamma \|Q_1 - Q_2\|_\infty.
        \end{align*}

        \item For any $(s,a)$, we have
        \begin{align*}
            [\mathcal{H}_\eta(Q_1)](s,a)
            =\,& \mathcal{R}(s,a) + \gamma \sum_{s' \in \mathcal{S}} p(s' \mid s,a)
            \max_{\mu \in \Delta(\mathcal{A})} \{\mu^\top Q_1(s') + \eta \nu(\mu)\} \\
            \leq\,& \mathcal{R}(s,a) + \gamma \sum_{s' \in \mathcal{S}} p(s' \mid s,a)
            \max_{\mu \in \Delta(\mathcal{A})} \{\mu^\top Q_2(s') + \eta \nu(\mu)\} \\
            =\,& [\mathcal{H}_\eta(Q_2)](s,a).
        \end{align*}
    \end{enumerate}
\end{proof}

\subsection{Proof of Theorem \ref{thm:NPG}}\label{pf:thm:NPG}
The proof follows by combining Proposition~\ref{prop:equivalence} with Theorem~\ref{thm:DSPI}. By the definition of $\alpha_k$, we have $\beta_0 = \alpha_0 / \alpha_0 = 1$, and for every $k \geq 1$,
\begin{align*}
    \beta_{k} 
    = \frac{\alpha_k}{\sum_{i=0}^k \alpha_i} 
    = \frac{\beta\alpha_0/(1-\beta)^{k}}{\alpha_0 + \sum_{i=1}^{k}\beta\alpha_0/(1-\beta)^{i}} 
    = \frac{\beta\alpha_0/(1-\beta)^{k}}{\alpha_0 + \alpha_0 \left((1-\beta)^{-k}-1\right)} 
    = \beta.
\end{align*}
Further, we choose $\alpha_0 = \log(m)$, $\tau = 1/\alpha_0$, and $\nu(\cdot) = h(\cdot)$. Thus, by Theorem~\ref{thm:DSPI}, for every $k \geq 1$,
\begin{align*}
    \|V^* - V^{\pi_k}\|_\infty 
    \leq\;& (1 - (1 - \gamma)\beta)^{k-1} \left( \gamma \|V^* - V^{\pi_0}\|_\infty + \frac{\max_{\mu\in\Delta(\mathcal{A})}h(\mu)}{\log(m)} \right) \\
    =\;& (1 - (1 - \gamma)\beta)^{k-1} \left( \gamma \|V^* - V^{\pi_0}\|_\infty + 1 \right).
\end{align*}

\subsection{Details in the Proof of Theorem~\ref{thm:DSPI}}\label{ap:translation}
Repeatedly apply Lemma \ref{le:contraction_recursion} yields
\begin{align}\label{eq:barQ_gap}
    \|Q^*-\Bar{Q}_{k+1}\|_\infty \leq\prod_{j=1}^{k}\bigl(1-(1-\gamma)\beta_j\bigr) \|Q^*-\bar{Q}_{1}\|_\infty + \gamma\nu_{\max}\sum_{i=1}^{k}\eta_{i-1}\beta_{i} \prod_{j=i+1}^{k}\bigl(1-(1-\gamma)\beta_{j}\bigr)
\end{align}
To translate this bound into a bound on the value function gap, we use the following lemma.
\begin{lemma}\label{le:translation}
    Algorithm~\ref{alg:DSPI} satisfies:
    \begin{enumerate}[(1)]
        \item $\|V^*-V^{\pi_{k+1}}\|_\infty\leq \|Q^*-\Bar{Q}_{k+1}\|_\infty +\eta_k\nu_{\max}$ for all $k\geq 0$;
        \item $\|Q^*-\bar{Q}_1\|_\infty\leq \gamma\|V^*-V^{\pi_0}\|_\infty$.
    \end{enumerate}
\end{lemma}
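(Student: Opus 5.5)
For part (1), we mimic the translation step in the proof of Theorem~\ref{thm:PI}, replacing $Q^{\pi_{k-1}}$ by the running average $\bar{Q}_{k+1}$. The main obstacle is the smoothing term: $\pi_{k+1}$ maximizes $\mu^\top \bar{Q}_{k+1}(s)+\eta_k\nu(\mu)$, not $\mu^\top\bar{Q}_{k+1}(s)$ alone. We handle it by keeping $\eta_k\nu(\pi_{k+1}(s))$ attached to the maximized objective and bounding it only at the end, using $0\le \nu\le\nu_{\max}$.

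Fix $s$. Since $V^*\ge V^{\pi_{k+1}}$, it suffices to bound $V^*(s)-V^{\pi_{k+1}}(s)$ from above. Lemma~\ref{le:monotone} gives a monotonically increasing sequence $Q^{\pi_0}\le Q^{\pi_1}\le\cdots$. As in the proof of Lemma~\ref{le:contraction_recursion}, $\bar{Q}_{k+1}$ is a convex combination of $Q^{\pi_0},\dots,Q^{\pi_k}$ (because $\bar{Q}_1=Q^{\pi_0}$), so $\bar{Q}_{k+1}\le Q^{\pi_k}\le Q^{\pi_{k+1}}$. Therefore
\begin{align*}
V^{\pi_{k+1}}(s)=\pi_{k+1}(s)^\top Q^{\pi_{k+1}}(s)\ge \pi_{k+1}(s)^\top\bar{Q}_{k+1}(s).
\end{align*}
To use the update rule, we need the Bellman form in Line~4 to give pointwise optimality at every state $s$. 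Under the explicit implementation \eqref{eq:DSPI_explicit}, or more generally at every state reachable with positive transition probability, this holds. At any state that is never reached, $\pi_{k+1}$ does not affect the quantities involved, so I would state the bound at states where the maximization holds, or simply assume \eqref{eq:DSPI_explicit}. With pointwise optimality and $\nu\ge0$,
\begin{align*}
\pi_{k+1}(s)^\top\bar{Q}_{k+1}(s)&=\max_{\mu}\{\mu^\top\bar{Q}_{k+1}(s)+\eta_k\nu(\mu)\}-\eta_k\nu(\pi_{k+1}(s))\\
&\ge \max_a\bar{Q}_{k+1}(s,a)-\eta_k\nu_{\max}.
\end{align*}
Combining the two displays,
\begin{align*}
V^*(s)-V^{\pi_{k+1}}(s)\le \max_a Q^*(s,a)-\max_a\bar{Q}_{k+1}(s,a)+\eta_k\nu_{\max}\le\|Q^*-\bar{Q}_{k+1}\|_\infty+\eta_k\nu_{\max}.
\end{align*}
Taking the maximum over $s$ gives part (1).

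For part (2), $\beta_0=1$ gives $\bar{Q}_1=Q^{\pi_0}$. By the Bellman equations for $Q^*$ and $Q^{\pi_0}$,
\begin{align*}
Q^*(s,a)-Q^{\pi_0}(s,a)=\gamma\sum_{s'}p(s'\mid s,a)\bigl(V^*(s')-V^{\pi_0}(s')\bigr).
\end{align*}
The left-hand side is nonnegative and the right-hand side is at most $\gamma\|V^*-V^{\pi_0}\|_\infty$, which yields part (2).
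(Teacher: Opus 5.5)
Your proof is correct and matches the paper's argument step for step. Monotonicity gives $V^{\pi_{k+1}}(s)\ge \pi_{k+1}(s)^\top\bar{Q}_{k+1}(s)$, the explicit update \eqref{eq:DSPI_explicit} together with $0\le\nu\le\nu_{\max}$ lowers this to $\max_a\bar{Q}_{k+1}(s,a)-\eta_k\nu_{\max}$, and part (2) is the same one-line Bellman-equation computation.

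Your caveat about pointwise optimality is apt, since the paper also invokes \eqref{eq:DSPI_explicit} at this step. However, your remark that $\pi_{k+1}$ "does not affect the quantities involved" at a state with no incoming transitions is not right: $\pi_{k+1}(s)$ still determines $V^{\pi_{k+1}}(s)$ at that state. So the $\ell_\infty$ bound genuinely requires \eqref{eq:DSPI_explicit}, and restricting the bound to some states would not give the statement.
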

\begin{proof}[Proof of Lemma \ref{le:translation}]
    The proof uses the relation between the value function and corresponding $Q$-function, and the update rule of Algorithm \ref{alg:DSPI}. 
    \begin{enumerate}[(1)]
    \item For any $k\geq 0$ and $s\in\mathcal{S}$, we have
    \begin{align*}
        V^*(s)-V^{\pi_{k+1}}(s)
        =\,&\max_{\mu\in \Delta(\mathcal{A})}\mu^\top Q^*(s)-\pi_{k+1}(s)^\top Q^{\pi_{k+1}}(s)\\
        \leq\,&\max_{\mu\in \Delta(\mathcal{A})}\mu^\top Q^*(s)-\pi_{k+1}(s)^\top \Bar{Q}_{k+1}(s)\tag{Lemma \ref{le:monotone}}\\
        =\,&\max_{\mu\in \Delta(\mathcal{A})}\mu^\top Q^*(s)
        -\max_{\mu\in\Delta(\mathcal{A})}\left\{\mu^\top \Bar{Q}_{k+1}(s)+\eta_k\nu(\mu)\right\}
        +\eta_k\nu(\pi_{k+1}(s))\tag{Eq. \eqref{eq:DSPI_explicit}}\\
        \leq\,&\max_{\mu\in \Delta(\mathcal{A})}\mu^\top (Q^*(s)-\Bar{Q}_{k+1}(s))
        +\eta_k\nu(\pi_{k+1}(s))\\
        \leq\,&\|Q^*-\Bar{Q}_{k+1}\|_\infty 
        +\eta_k\nu_{\max}.
    \end{align*}
    Since the previous inequality holds for all $s\in\mathcal{S}$ and $V^*-V^{\pi_{k+1}}\geq 0$, we have
    \begin{align*}
        \|V^*-V^{\pi_{k+1}}\|_\infty
        \leq \|Q^*-\Bar{Q}_{k+1}\|_\infty +\eta_k\nu_{\max},\quad \forall\,k\geq 0.
    \end{align*}
    \item Since $\beta_0=1$, we have $\bar{Q}_1=Q^{\pi_0}$. Now, for any $(s,a)$, observe that
    \begin{align*}
        Q^*(s,a)-Q^{\pi_0}(s,a)=\gamma \sum_{s'\in\mathcal{S}}p(s'\mid s,a) (V^*(s')- V^{\pi_0}(s'))
        \leq \gamma \|V^*-V^{\pi_0}\|_\infty,
    \end{align*}
    which implies $\|Q^*-Q^{\pi_0}\|_\infty\leq \gamma \|V^*-V^{\pi_0}\|_\infty$. 
    \end{enumerate}
\end{proof}
Combining Lemma \ref{le:translation} with \eqref{eq:barQ_gap}, we have for every $k \geq 0$ that
\begin{align}\label{eq:final_recursion1}
    \|V^*-V^{\pi_{k+1}}\|_\infty
    \leq\,& \prod_{j=1}^{k}(1-(1-\gamma)\beta_j)\gamma \|V^*-V^{\pi_0}\|_\infty \nonumber\\
    & + \gamma\nu_{\max}\sum_{i=1}^{k}\beta_{i}\eta_{i-1}\prod_{j=i+1}^{k}(1-(1-\gamma)\beta_{j})+\eta_{k}\nu_{\max}.
\end{align}
We now use the preceding inequality to establish both asymptotic convergence and convergence rates for DSPI.

\subsubsection{The Asymptotic Convergence}\label{ap:convergence}
Since $\eta_k = \tau\prod_{j=1}^{k}(1-\beta_j)$ for all $k \geq 1$, we can write (\ref{eq:final_recursion1}) as 
\begin{align*}
    \|V^*-V^{\pi_{k+1}}\|_\infty\leq\,& \prod_{j=1}^{k}(1-(1-\gamma)\beta_j)\gamma \|V^*-V^{\pi_0}\|_\infty\\
    &+\gamma\nu_{\max}\tau\sum_{i=1}^{k}\beta_i\prod_{j=1}^{i-1}(1-\beta_j)\prod_{j=i+1}^{k}(1-(1-\gamma)\beta_{j})+\nu_{\max}\tau\prod_{j=1}^{k}(1-\beta_j)\\
    \leq \,&\prod_{j=1}^{k}(1-(1-\gamma)\beta_j)\gamma \|V^*-V^{\pi_0}\|_\infty\\
    &+\gamma\tau\nu_{\max}\prod_{j=1}^{k}(1-(1-\gamma)\beta_{j})\sum_{i=1}^{k}\frac{\beta_i}{1-(1-\gamma)\beta_i}+\nu_{\max}\tau\prod_{j=1}^{k}(1-\beta_j)\\
    \leq \,&\exp\left(-(1-\gamma)\sum_{i=1}^{k}\beta_i\right)\gamma \|V^*-V^{\pi_0}\|_\infty\\
    &+\gamma\tau\nu_{\max}\exp\left(-(1-\gamma)\sum_{i=1}^{k}\beta_{i}\right)\left(\sum_{i=1}^{k}\beta_i\right)+\nu_{\max}\tau\exp\left(-\sum_{i=1}^{k}\beta_{i}\right),
\end{align*}
where the last inequality follows from $\beta_k\in (0,1]$ and $1+x\leq e^x$ for any $x\in\mathbb{R}$. Since $\lim_{x\rightarrow \infty}e^{-x}=0$ and $\lim_{x\rightarrow \infty}xe^{-x}=0$, as long as $\sum_{k=0}^{\infty}\beta_k=\infty$, we have $\lim_{k\rightarrow\infty}\|V^*-V^{\pi_k}\|_\infty=0$.

\subsubsection{Convergence Rates}\label{ap:algebra}
We begin with Eq. \eqref{eq:final_recursion1} restated below:
\begin{align*}
    \|V^*-V^{\pi_{k+1}}\|_\infty \leq\,& \underbrace{\prod_{j=1}^{k}(1-(1-\gamma)\beta_j)\gamma \|V^*-V^{\pi_0}\|_\infty}_{:=E_1} \nonumber\\
    & +\underbrace{\gamma\nu_{\max}\sum_{i=1}^{k}\beta_{i}\eta_{i-1}\prod_{j=i+1}^{k}(1-(1-\gamma)\beta_{j})}_{:=E_2}+\underbrace{\eta_{k}\nu_{\max}}_{:=E_3}.
\end{align*}
When using $\beta_0=1$ and $\beta_k\equiv\beta\in (0,1]$ for all $k\geq 1$, we have
\begin{align*}
    E_1=(1-(1-\gamma)\beta)^k\gamma \|V^*-V^{\pi_0}\|_\infty,\quad \text{and}\quad E_3=\tau(1-\beta)^k\nu_{\max}.
\end{align*}
For the term $E_2$, we have
\begin{align*}
    E_2=\,&\gamma\nu_{\max}\sum_{i=1}^{k}\beta_{i}\eta_{i-1}\prod_{j=i+1}^{k}(1-(1-\gamma)\beta_{j})\\
    =\,&\gamma\beta\nu_{\max}\tau\sum_{i=1}^{k}(1-\beta)^{i-1}(1-(1-\gamma)\beta)^{k-i}\\
    =\,&\gamma\beta\nu_{\max}\tau (1-(1-\gamma)\beta)^{k-1}\sum_{i=1}^{k}\left(\frac{1-\beta}{1-(1-\gamma)\beta}\right)^{i-1}\\
    =\,&\gamma\beta\nu_{\max}\tau(1-(1-\gamma)\beta)^{k-1}\frac{1-\left(\frac{1-\beta}{1-(1-\gamma)\beta}\right)^k}{1-\left(\frac{1-\beta}{1-(1-\gamma)\beta}\right)}\\
    =\,&\nu_{\max}\tau\left[(1-(1-\gamma)\beta)^{k}-(1-\beta)^{k}\right].
\end{align*}
Altogether, we have
\begin{align*}
    \|V^*-V^{\pi_{k+1}}\|_\infty\leq E_1+E_2+E_3\leq (1-(1-\gamma)\beta)^k\left(\gamma\|V^*-V^{\pi_0}\|_\infty+\nu_{\max}\tau\right),\quad \forall\,k\geq 0.
\end{align*}

\subsection{Proof of Theorem \ref{thm:strong_polynomial}}\label{sec:pf:strong_polynomial}
As discussed, dual-averaged PI (DPI) is a particular case of DSPI corresponding to $\nu(\cdot)\equiv 0$; we present it below for clarity. Moreover, we restrict the iteration to deterministic policies in order to establish finite termination.
\begin{algorithm}[ht]\caption{Dual-Averaged Policy Iteration}\label{alg:DPI}
	\begin{algorithmic}[1]
		\STATE \textbf{Input:} Initialize $\pi_0$ at a deterministic policy and $\Bar{Q}^0=\mathbf{0}$
		\FOR{$k=0,1,2,\cdots$}
        \STATE Maintain a running average of past $Q$-functions: $\bar{Q}_{k+1}=(1-\beta_k)\bar{Q}_k+\beta_k Q^{\pi_k}$,
        where $\beta_k\in (0,1]$ is the stepsize.
        \STATE Choose a deterministic $\pi_{k+1}$ such that  $\mathcal{H}^{\pi_{k+1}}(\bar{Q}_{k+1})=\mathcal{H}(\bar{Q}_{k+1})$.
		\ENDFOR
	\end{algorithmic}
\end{algorithm}

The key to showing that DPI terminates in a finite number of iterations is to identify a $k^* \in \mathbb{N}$ such that, every $k^*$ iterations, at least one suboptimal action is eliminated in some state. To this end, we first construct a certificate that detects when two deterministic policies differ; the following lemma provides such a certificate.

\begin{lemma}\label{le:DPI_certificate}
    Let $\pi_0$ and $\pi_1$ be two deterministic policies. 
    Let $(s,a)\in\mathcal{S}\times\mathcal{A}$ be such that $Q^*(s,a)-[\mathcal{H}^{\pi_0}(Q^*)](s,a) \neq Q^*(s,a)-[\mathcal{H}^{\pi_1}(Q^*)](s,a)$. 
    Then there exists $s'\in\mathcal{S}$ for which $\pi_1(a_0'|s') = 0$, where $a_0'\in\mathcal{A}$ is such that $\pi_0(a_0'|s')=1$. 
\end{lemma}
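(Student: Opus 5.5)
We argue by contraposition: suppose that for every state $s'$ the policy $\pi_1$ puts probability one on the action chosen by $\pi_0$. We then show that the two residuals agree at every $(s,a)$.

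\textbf{Step 1: the policies coincide.} Both policies are deterministic. For each $s'$, let $a_0'$ denote the unique action with $\pi_0(a_0'\mid s')=1$. The negation of the conclusion says $\pi_1(a_0'\mid s')\neq 0$ for every $s'\in\mathcal{S}$. Since $\pi_1$ is deterministic, $\pi_1(a_0'\mid s')\in\{0,1\}$, so in fact $\pi_1(a_0'\mid s')=1$. Hence $\pi_1(s')=\pi_0(s')$ as vectors in $\Delta(\mathcal{A})$ for all $s'$, that is, $\pi_0=\pi_1$.

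\textbf{Step 2: the residuals coincide.} By the definition of $\mathcal{H}^\pi$, the map $\pi\mapsto \mathcal{H}^\pi(Q^*)$ depends on $\pi$ only through the vectors $\pi(s')$, via
\begin{align*}
[\mathcal{H}^{\pi}(Q^*)](s,a)=\mathcal{R}(s,a)+\gamma\sum_{s'}p(s'\mid s,a)\,\pi(s')^\top Q^*(s').
\end{align*}
Since $\pi_0=\pi_1$, we get $\mathcal{H}^{\pi_0}(Q^*)=\mathcal{H}^{\pi_1}(Q^*)$. Therefore $Q^*(s,a)-[\mathcal{H}^{\pi_0}(Q^*)](s,a)=Q^*(s,a)-[\mathcal{H}^{\pi_1}(Q^*)](s,a)$ for every $(s,a)$, which contradicts the hypothesis.

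\textbf{Sharper direct form.} The argument can also be made local. The difference of the two residuals at $(s,a)$ equals
\begin{align*}
\gamma\sum_{s'}p(s'\mid s,a)\bigl(\pi_1(s')-\pi_0(s')\bigr)^\top Q^*(s').
\end{align*}
If this is nonzero, some $s'$ with $p(s'\mid s,a)>0$ satisfies $\pi_1(s')\neq\pi_0(s')$, and determinism then forces $\pi_1(a_0'\mid s')=0$. This version is likely more useful later, since it places $s'$ in the support of $p(\cdot\mid s,a)$.

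\textbf{Where care is needed.} There is no real obstacle. The only point requiring attention is the use of determinism to turn ``$\pi_1(a_0'\mid s')\neq0$'' into ``$\pi_1(a_0'\mid s')=1$'', and hence to conclude that the policies agree at $s'$.
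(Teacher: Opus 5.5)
Your proof is correct, but your main argument takes a more global route than the paper. You argue by contraposition: if $\pi_1(a_0'\mid s')\neq 0$ at every state, determinism forces $\pi_1=\pi_0$, so $\mathcal{H}^{\pi_0}(Q^*)=\mathcal{H}^{\pi_1}(Q^*)$ identically.

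The paper argues directly and locally, which is essentially your ``sharper direct form''. It writes the difference of residuals as a $p(\cdot\mid s,a)$-weighted sum and selects an $s'$ at which $\pi_0(s')^\top Q^*(s')\neq \pi_1(s')^\top Q^*(s')$. It then uses determinism to reduce this to $Q^*(s',a_0')\neq Q^*(s',a_1')$, whence $a_0'\neq a_1'$.

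What each route buys:
\begin{itemize}
\item Your global argument makes transparent that the lemma, as literally stated, only says that different residuals force the deterministic policies to differ somewhere. The $s'$ it produces carries no further information: it may be unreachable from $(s,a)$, and $a_0'$ and $a_1'$ may have equal $Q^*$-values there.
\item The local argument gives extra information about $s'$. It can be taken with $p(s'\mid s,a)>0$, and it satisfies $Q^*(s',a_0')\neq Q^*(s',a_1')$.
\end{itemize}

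Your sharper form records only $\pi_1(s')\neq\pi_0(s')$ and drops this $Q^*$-gap. The gap is the piece most relevant to action elimination. Running the same local argument with the strict inequality $Q^*(s,a)-[\mathcal{H}^{\pi_1}(Q^*)](s,a)<Q^*(s,a)-[\mathcal{H}^{\pi_0}(Q^*)](s,a)$ picks an $s'$ with $Q^*(s',a_1')>Q^*(s',a_0')$. That is, $a_0'$ is strictly suboptimal at $s'$.

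Even so, with either proof the state $s'$ depends on $\pi_1$. So when the lemma is later applied with $\pi_1=\pi_k$ for all $k\ge k^*$, it does not by itself give a single pair $(s',a_0')$ that is eliminated for every such $k$.
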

\begin{proof}[Proof of Lemma \ref{le:DPI_certificate}]
    By the definition of the Bellman operators $\mathcal{H}^{\pi_0}$ and $\mathcal{H}^{\pi_1}$, the condition 
    $Q^*(s,a)-[\mathcal{H}^{\pi_0}(Q^*)](s,a) \neq Q^*(s,a)-[\mathcal{H}^{\pi_1}(Q^*)](s,a)$ 
    is equivalent to 
    \begin{align*}
        \sum_{s'\in\mathcal{S}}p(s'|s,a)\sum_{a'\in\mathcal{A}}\pi_0(a'|s') Q^*(s',a') \neq 
        \sum_{s'\in\mathcal{S}}p(s'|s,a)\sum_{a'\in\mathcal{A}}\pi_1(a'|s') Q^*(s',a').
    \end{align*}
    This implies that there exists $s'\in\mathcal{S}$ for which 
    \begin{align*}
        \sum_{a'\in\mathcal{A}}\pi_0(a'|s') Q^*(s',a') \neq \sum_{a'\in\mathcal{A}}\pi_1(a'|s') Q^*(s',a').
    \end{align*}
    Let $a_1'\in\mathcal{A}$ be such that $\pi_1(a_1'|s') = 1$. Then the above inequality reduces to 
    $Q^*(s',a_0') \neq Q^*(s',a_1')$, which implies $a_0' \neq a_1'$. Hence, $\pi_1(a_0'|s') = 0$. 
\end{proof}
Our next result gives a bound on the certificate corresponding to the policies generated by the DPI algorithm.
\begin{lemma}\label{le:DPI_certificate_decrease}
    Let $\{\pi_k\}$ be the sequence of policies generated by Algorithm \ref{alg:DPI}. 
    Then, for every $k \geq 1$,
    \begin{align*}
        \|Q^* - \mathcal{H}^{\pi_k}(Q^*)\|_{\infty} \leq \frac{2\gamma}{1-\gamma} \prod_{i=1}^{k-1}\bigl(1-(1-\gamma)\beta_i\bigr)\|Q^{*} - \mathcal{H}^{\pi_0}(Q^*)\|_{\infty}.
    \end{align*}
    As a result, when choosing $\beta_0 = 1$ and $\beta_k = \beta \in (0,1]$ for every $k \geq 1$, we have
    \begin{align*}
        \|Q^* - \mathcal{H}^{\pi_k}(Q^*)\|_{\infty} < \|Q^{*} - \mathcal{H}^{\pi_0}(Q^*)\|_{\infty},\;\quad \forall\, k \geq k^*:=\left\lceil \frac{1}{\beta(1-\gamma)} \log\left( \frac{2}{1-\gamma} \right) \right\rceil.
    \end{align*}
\end{lemma}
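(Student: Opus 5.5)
The plan is to bound the certificate $Q^*-\mathcal{H}^{\pi_k}(Q^*)$ by $\|Q^*-\bar Q_k\|_\infty$, and then use Lemma~\ref{le:contraction_recursion} with $\nu\equiv 0$. In that case $\eta_k\nu_{\max}=0$, so the recursion is purely contractive. The argument should in fact give the slightly sharper constant $\gamma/(1-\gamma)$ in place of $2\gamma/(1-\gamma)$, which suffices.

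\textbf{Step 1 (certificate $\le$ $Q$-gap).} Fix $k\ge 1$. The vector $Q^*-\mathcal{H}^{\pi_k}(Q^*)$ is nonnegative, since $\mathcal{H}^{\pi_k}(Q^*)\le\mathcal{H}(Q^*)=Q^*$. I would write
\begin{align*}
Q^*-\mathcal{H}^{\pi_k}(Q^*) = (Q^*-Q^{\pi_k}) + \bigl(\mathcal{H}^{\pi_k}(Q^{\pi_k})-\mathcal{H}^{\pi_k}(Q^*)\bigr).
\end{align*}
The second bracket is $\le 0$ by monotonicity of $\mathcal{H}^{\pi_k}$ and $Q^{\pi_k}\le Q^*$. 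Hence $0\le Q^*-\mathcal{H}^{\pi_k}(Q^*)\le Q^*-Q^{\pi_k}$.

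\textbf{Step 2 ($Q$-gap $\le$ averaged gap).} As in the proof of Lemma~\ref{le:contraction_recursion}, Lemma~\ref{le:monotone} gives $\bar Q_k\le Q^{\pi_k}$. Using this together with Line~4 at $\eta=0$, i.e.\ $\mathcal{H}^{\pi_k}(\bar Q_k)=\mathcal{H}(\bar Q_k)$, I get
\begin{align*}
Q^*-Q^{\pi_k}=\mathcal{H}(Q^*)-\mathcal{H}^{\pi_k}(Q^{\pi_k})\le \mathcal{H}(Q^*)-\mathcal{H}^{\pi_k}(\bar Q_k)=\mathcal{H}(Q^*)-\mathcal{H}(\bar Q_k).
\end{align*}
Contraction of $\mathcal{H}$ then gives $\|Q^*-Q^{\pi_k}\|_\infty\le\gamma\|Q^*-\bar Q_k\|_\infty$.

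\textbf{Step 3 (unroll and relate to $\pi_0$).} Unrolling Lemma~\ref{le:contraction_recursion} with $\nu\equiv0$ from index $1$ gives $\|Q^*-\bar Q_k\|_\infty\le\prod_{i=1}^{k-1}(1-(1-\gamma)\beta_i)\,\|Q^*-\bar Q_1\|_\infty$. This is trivial for $k=1$, where the product is empty. Since $\beta_0=1$, we have $\bar Q_1=Q^{\pi_0}$. It remains to convert $\|Q^*-Q^{\pi_0}\|_\infty$ into the certificate of $\pi_0$. Writing
\begin{align*}
Q^*-Q^{\pi_0}=(Q^*-\mathcal{H}^{\pi_0}(Q^*))+(\mathcal{H}^{\pi_0}(Q^*)-\mathcal{H}^{\pi_0}(Q^{\pi_0}))
\end{align*}
and applying the $\gamma$-contraction of $\mathcal{H}^{\pi_0}$ yields $\|Q^*-Q^{\pi_0}\|_\infty\le(1-\gamma)^{-1}\|Q^*-\mathcal{H}^{\pi_0}(Q^*)\|_\infty$. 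Chaining Steps 1--3 gives the first claim with constant $\gamma/(1-\gamma)\le 2\gamma/(1-\gamma)$.

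\textbf{Step 4 (the threshold $k^*$).} This step is the only delicate one, because of the off-by-one between the exponent $k-1$ and the definition of $k^*$. With constant $\beta$, strict decrease follows once $\frac{\gamma}{1-\gamma}(1-(1-\gamma)\beta)^{k-1}<1$. For $k\ge k^*$ we have $(1-\gamma)\beta(k-1)\ge\log\frac{2}{1-\gamma}-(1-\gamma)\beta$. Therefore
\begin{align*}
\frac{\gamma}{1-\gamma}(1-(1-\gamma)\beta)^{k-1}\le\frac{\gamma}{1-\gamma}\cdot\frac{1-\gamma}{2}e^{(1-\gamma)\beta}\le\frac{\gamma e^{1-\gamma}}{2}\le\frac12<1.
\end{align*}
Here $\gamma e^{1-\gamma}\le1$ because $\gamma e^{-\gamma}$ is increasing on $(0,1)$, so it is at most its value $e^{-1}$ at $\gamma=1$.

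Strictness also requires $\|Q^*-\mathcal{H}^{\pi_0}(Q^*)\|_\infty>0$. If this quantity is zero, then $\pi_0$ is greedy with respect to $Q^*$ and hence already optimal. I would state that this degenerate case is excluded, or that it is exactly the case in which the algorithm has already terminated. The slack gained from the $\gamma/(1-\gamma)$ constant is what absorbs the off-by-one; if one insists on the stated constant $2\gamma/(1-\gamma)$, the final bound becomes $\gamma e^{(1-\gamma)\beta}\le \gamma e^{1-\gamma}<1$, which still works.
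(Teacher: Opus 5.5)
Your proof is correct. For the first claim you take a slightly different and sharper route than the paper.

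\textbf{Certificate bound.} The paper bounds the certificate directly. Using the greedy condition $\mathcal{H}^{\pi_k}(\bar Q_k)=\mathcal{H}(\bar Q_k)$, it writes
$Q^*-\mathcal{H}^{\pi_k}(Q^*)=[\mathcal{H}(Q^*)-\mathcal{H}(\bar Q_k)]+[\mathcal{H}^{\pi_k}(\bar Q_k)-\mathcal{H}^{\pi_k}(Q^*)]$
and applies $\gamma$-contraction to each bracket, which gives $2\gamma\|Q^*-\bar Q_k\|_\infty$. This is the generic ``greedy with respect to an approximate $Q$-function'' estimate, and it does not use the ordering $\bar Q_k\le Q^{\pi_k}\le Q^*$. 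You instead route through $Q^{\pi_k}$. The inequality $Q^*-\mathcal{H}^{\pi_k}(Q^*)\le Q^*-Q^{\pi_k}$ holds for any policy. The one-sided argument from the proof of Lemma~\ref{le:contraction_recursion}, which uses $\bar Q_k\le Q^{\pi_k}$, then leaves a single contraction term. This yields the constant $\gamma/(1-\gamma)$. The trade-off is that the paper's certificate step would survive where monotonic improvement is unavailable or only approximate. Yours exploits an ordering that the recursion already requires, and so saves the factor $2$.

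\textbf{Unrolling and conversion.} These coincide with the paper, including the bound $\|Q^*-Q^{\pi_0}\|_\infty\le(1-\gamma)^{-1}\|Q^*-\mathcal{H}^{\pi_0}(Q^*)\|_\infty$.

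\textbf{Threshold step.} Your Step~4 is more careful than the paper's. The paper drops the factor $\gamma$, arrives at $\frac{2}{1-\gamma}e^{-(1-\gamma)\beta(k-1)}$, and then cites $\frac{2}{1-\gamma}e^{-(1-\gamma)\beta k^*}<1$. That inequality has the wrong exponent for $k=k^*$. It is also only $\le 1$ when $\frac{1}{\beta(1-\gamma)}\log\frac{2}{1-\gamma}$ is an integer. You keep $\gamma$ and use $\gamma e^{(1-\gamma)\beta}\le\gamma e^{1-\gamma}<1$, which closes this off-by-one even with the stated constant $2\gamma/(1-\gamma)$. Your caveat that strictness needs $\|Q^*-\mathcal{H}^{\pi_0}(Q^*)\|_\infty>0$ is likewise a point the paper leaves implicit.
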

\begin{proof}[Proof of Lemma \ref{le:DPI_certificate_decrease}]
    For any $k \geq 1$, we have 
    \begin{align*}
        Q^* - \mathcal{H}^{\pi_k}(Q^*) 
        = \;& \mathcal{H}(Q^*) - \mathcal{H}^{\pi_k}(\bar{Q}_{k}) + \mathcal{H}^{\pi_k}(\bar{Q}_{k}) - \mathcal{H}^{\pi_k}(Q^*) \\
        = \;& \mathcal{H}(Q^*) - \mathcal{H}(\bar{Q}_{k}) + \mathcal{H}^{\pi_k}(\bar{Q}_{k}) - \mathcal{H}^{\pi_k}(Q^*) \tag{Algorithm \ref{alg:DPI}, Line 4}\\
        \leq \;& \|\mathcal{H}(Q^*) - \mathcal{H}(\bar{Q}_{k})\|_{\infty}\mathbf{1} 
        + \|\mathcal{H}^{\pi_k}(\bar{Q}_{k}) - \mathcal{H}^{\pi_k}(Q^*)\|_{\infty} \mathbf{1} \\
        \leq \;& \gamma \|Q^* - \bar{Q}_{k}\|_{\infty}\mathbf{1} + \gamma \|\bar{Q}_{k} - Q^*\|_{\infty}\mathbf{1} \\
        = \;& 2\gamma \|Q^* - \bar{Q}_{k}\|_{\infty}\mathbf{1}.
    \end{align*}
    Since $\mathcal{H}(Q) \geq \mathcal{H}^{\pi}(Q)$ for any policy $\pi$ and any $Q\in\mathbb{R}^{mn}$, we have $Q^* - \mathcal{H}^{\pi_k}(Q^*) \geq Q^* - \mathcal{H}(Q^*) = 0$. Hence,
    \begin{align}
        \|Q^* - \mathcal{H}^{\pi_k}(Q^*)\|_{\infty} \leq 2\gamma \|Q^* - \bar{Q}_{k}\|_{\infty},\quad \forall\, k \geq 1. \label{eq:API-bound}
    \end{align}
    As the DPI algorithm corresponds to $\nu(\cdot) \equiv 0$, the bound \eqref{eq:barQ_gap} reduces to
    \begin{align*}
        \|Q^*-\bar{Q}_{k+1}\|_\infty \leq \prod_{i=1}^{k}\bigl(1-(1-\gamma)\beta_i\bigr)\|Q^*-\bar{Q}_1\|_\infty,\quad \forall\, k\geq 1.
    \end{align*}
    Combining this with \eqref{eq:API-bound} gives
    \begin{align}\label{eq:DPI-certificate-bound}
        \|Q^* - \mathcal{H}^{\pi_{k}}(Q^*)\|_{\infty} 
        \leq 2\gamma \prod_{i=1}^{k-1}\bigl(1-(1-\gamma)\beta_i\bigr)\|Q^*-Q^{\pi_0}\|_\infty,
    \end{align}
    where we used $\bar{Q}_{1} = Q^{\pi_0}$.
    To relate the right-hand side to the Bellman residual $\|Q^* - \mathcal{H}^{\pi_0}(Q^*)\|_{\infty}$, we use
    \begin{align*}
        \|Q^{*} - Q^{\pi_0}\|_{\infty} 
        = &\; \|Q^{*} - \mathcal{H}^{\pi_0}(Q^*) + \mathcal{H}^{\pi_0}(Q^*) - Q^{\pi_0}\|_{\infty} \\
        = &\; \|Q^{*} - \mathcal{H}^{\pi_0}(Q^*) + \mathcal{H}^{\pi_0}(Q^*) - \mathcal{H}^{\pi_0}(Q^{\pi_0})\|_{\infty} \\
        \leq &\; \|Q^{*} - \mathcal{H}^{\pi_0}(Q^*)\|_{\infty} 
        + \|\mathcal{H}^{\pi_0}(Q^*) - \mathcal{H}^{\pi_0}(Q^{\pi_0})\|_{\infty} \\
        \leq &\; \|Q^{*} - \mathcal{H}^{\pi_0}(Q^*)\|_{\infty} + \gamma \|Q^* - Q^{\pi_0}\|_{\infty},
    \end{align*}
    which implies
    \begin{align*}
        \|Q^{*} - Q^{\pi_0}\|_{\infty} \leq \frac{1}{1-\gamma}\|Q^{*} - \mathcal{H}^{\pi_0}(Q^*)\|_{\infty}.
    \end{align*}
    Applying this bound in \eqref{eq:DPI-certificate-bound} yields
    \begin{align*}
        \|Q^* - \mathcal{H}^{\pi_k}(Q^*)\|_{\infty} 
        \leq \frac{2\gamma}{1-\gamma} \prod_{i=1}^{k-1}\bigl(1-(1-\gamma)\beta_i\bigr)\|Q^{*} - \mathcal{H}^{\pi_0}(Q^*)\|_{\infty},\quad \forall\,k\geq 0.
    \end{align*}
    For $\beta_ k\equiv \beta$, the previous bound implies
    \begin{align*}
        \|Q^* - \mathcal{H}^{\pi_k}(Q^*)\|_{\infty} \leq  \frac{2}{1-\gamma} e^{-(1-\gamma)\beta(k-1)} \|Q^{*} - \mathcal{H}^{\pi_0}(Q^*)\|_{\infty}, 
    \end{align*}
    where the last inequality follows from the fact that $1 + x \leq e^{x}$ for all $x \in \mathbb{R}$. 
   The proof is completed by observing that our choice of $k^* \in \mathbb{N}$ satisfies $\frac{2}{1-\gamma} \exp(-(1-\gamma)\beta k^*) < 1$.
\end{proof}

Let $(s,a)\in\mathcal{S}\times\mathcal{A}$ be such that $\|Q^{*} - \mathcal{H}^{\pi_0}(Q^*)\|_{\infty} = Q^{*}(s,a) - [\mathcal{H}^{\pi_0}(Q^*)](s,a)$.
Since 
\begin{align*}
    Q^*(s,a) - [\mathcal{H}^{\pi_k}(Q^*)](s,a) < Q^{*}(s,a) - [\mathcal{H}^{\pi_0}(Q^*)](s,a)
\end{align*}
for any $k\geq k^*$, Lemma \ref{le:DPI_certificate} implies that there exists a state $s'\in\mathcal{S}$ such that if $\pi_{0}(a'|s') = 1$ then $\pi_{k}(a'|s') = 0$ for every $k \geq k^*$. Thus the action $a'$ in state $s'$ will not be selected after $k^*$ iterations. Since DPI eliminates at least one suboptimal action at some state every $k^*$ iterations, and there are at most $n(m-1)$ suboptimal actions in total, it follows that DPI finds an optimal policy after at most $n(m-1)k^*$ iterations.

\section{Natural Policy Gradient with Linear Function Approximation}\label{sec:NPG-LFA}
In this section, we analyze the NPG algorithm with linear function approximation. In this case, the policy is parameterized as
\begin{align*}
    \pi_{\theta}(a|s) = \frac{\exp(\phi_{s,a}^{\top} \theta)}{\sum_{a'\in\mathcal{A}}\exp(\phi_{s,a'}^{\top} \theta)}, \quad \forall\,(s,a)\in\mathcal{S}\times \mathcal{A},
\end{align*}
where $\theta\in\mathbb{R}^d$ is the parameter vector ($d\leq  mn$), and $\phi_{s,a}\in\mathbb{R}^{d}$ denotes the feature vector associated with the state-action pair $(s,a)$. Let $\Phi \in \mathbb{R}^{mn\times d}$ be the matrix whose rows are the feature vectors. We assume without loss of generality that the columns of $\Phi$ are linearly independent.

We next present NPG with linear function approximation \cite{agarwal2021theory}, where $d_\rho^\pi \in \Delta(\mathcal{S})$ denotes the discounted state visitation distribution with initial state distribution $\rho$.

\begin{algorithm}[ht]\caption{Natural Policy Gradient with Linear Function Approximation}\label{alg:Q-NPG}
	\begin{algorithmic}[1]
		\STATE \textbf{Input:} Initialize $\theta_{0}=\mathbf{0}$ and $\pi_0$ as the uniform policy.
		\FOR{$k=0,1,2,3,\cdots$}
        \STATE $w_{k}^* = \argmin_{w\in\mathbb{R}^d}\mathbb{E}_{s\sim d_{\rho}^{\pi_{k}}, a\sim \pi_{k}(\cdot\mid s)} \left[ Q^{\pi_k}(s,a) - \phi_{s,a}^{\top} w \right]^{2}$
        \STATE $\theta_{k+1} = \theta_{k} + \alpha_{k} w_{k}^{*}$
        \STATE $\pi_{k+1}(s) = \argmax_{\mu\in\Delta(\mathcal{A})} \left\{ \mu^{\top} [\Phi\theta_{k+1}](s) + h(\mu) \right\}$ for any $s \in \mathcal{S}$
		\ENDFOR
	\end{algorithmic}
\end{algorithm} 

\subsection{Algorithm Reformulation}

Similar to the tabular setting, we reformulate the algorithm as DSPI with linear function approximation, as given below.

\begin{algorithm}[ht]\caption{Doubly Smoothed Policy Iteration with Linear Function Approximation}\label{alg:DSPI-LFA}
	\begin{algorithmic}[1]
		\STATE \textbf{Input:} Initialize $\theta_{0}=\mathbf{0}$, $\bar{W}_0=\mathbf{0}$, and $\pi_0(s)\in\argmax_{\mu\in\Delta(\mathcal{A})}\nu(\mu)$ for any $s\in\mathcal{S}$.
		\FOR{$k=0,1,2,3,\cdots$}
        \STATE $w_{k}^* = \argmin_{w}\mathbb{E}_{s\sim d_{\rho}^{\pi_{k}}, a\sim \pi_{k}(\cdot\mid s)} \left[ Q^{\pi_k}(s,a) - \phi_{s,a}^{\top} w \right]^{2}$ and $W_{k}^* = \Phi w_{k}^*$
        \STATE $\bar{W}_{k+1} = (1-\beta_{k}) \bar{W}_{k} + \beta_{k} W_{k}^*$
        \STATE $\pi_{k+1}(s) \in \argmax_{\mu\in\Delta} \left\{ \mu^{\top} \bar{W}_{k+1}(s) + \tau\prod_{j=1}^k(1-\beta_j)\nu(\mu) \right\}$ for any $s \in \mathcal{S}$
		\ENDFOR
	\end{algorithmic}
\end{algorithm} 

The following result is analogous to Proposition~\ref{prop:equivalence} 
\begin{proposition}\label{prop:equivalence-LFA}
    When $\beta_k = \alpha_k / \sum_{i=0}^k \alpha_i$, $\tau = 1/\alpha_0$, and $\nu(\cdot)=h(\cdot)$, 
    the sequence of policies generated by Algorithm~\ref{alg:DSPI-LFA} coincides with that generated by Algorithm~\ref{alg:Q-NPG}.
\end{proposition}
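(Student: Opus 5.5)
The plan mirrors the proof of Proposition~\ref{prop:equivalence}, with $W_k^*=\Phi w_k^*$ playing the role of $Q^{\pi_k}$. The argument reduces to three facts: the two algorithms start from the same policy, they produce the same regression targets, and their update rules then yield the same argmax. The induction runs over $k$ on the statement that both algorithms produce the same $\pi_k$.

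\textbf{Base case.} Take $\nu=h$. The maximizer of $h$ over $\Delta(\mathcal{A})$ is the uniform distribution, and it is unique by strict concavity. Hence the initialization $\pi_0(s)\in\argmax_\mu \nu(\mu)$ of Algorithm~\ref{alg:DSPI-LFA} coincides with the uniform $\pi_0$ of Algorithm~\ref{alg:Q-NPG}.

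\textbf{Common regression targets.} Suppose the two algorithms share $\pi_0,\dots,\pi_k$. Then Line~3 is literally the same least-squares problem in both. Its sampling distribution $d_\rho^{\pi_k}\otimes\pi_k$ and its target $Q^{\pi_k}$ depend only on $\pi_k$. So the two algorithms compute the same $w_k^*$, given a fixed selection rule if the minimizer is not unique. This is the only genuinely new point compared with the tabular case.

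\textbf{Unrolling the NPG update.} Unrolling Line~4 of Algorithm~\ref{alg:Q-NPG} with $\theta_0=\mathbf{0}$ gives
\[
\Phi\theta_{k+1}=\sum_{i=0}^k\alpha_i\Phi w_i^*=\sum_{i=0}^k\alpha_i W_i^* .
\]
Dividing the objective in Line~5 by $\sum_{i=0}^k\alpha_i>0$ leaves the argmax unchanged. With $\tau=1/\alpha_0$, the objective becomes
\[
\mu^\top\frac{\sum_{i=0}^k\alpha_iW_i^*(s)}{\sum_{i=0}^k\alpha_i}+\frac{\alpha_0\tau}{\sum_{i=0}^k\alpha_i}\,h(\mu).
\]

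\textbf{Matching the DSPI iterates.} It remains to check two identities for all $k\ge0$:
\[
\bar W_{k+1}=\frac{\sum_{i=0}^k\alpha_iW_i^*}{\sum_{i=0}^k\alpha_i},\qquad \prod_{j=1}^k(1-\beta_j)=\frac{\alpha_0}{\sum_{i=0}^k\alpha_i}.
\]
Both follow by the same induction as in Proposition~\ref{prop:equivalence}, using $\beta_0=1$ and $1-\beta_{k+1}=\sum_{i=0}^k\alpha_i/\sum_{i=0}^{k+1}\alpha_i$. Since $h$ is strictly concave, each argmax is a singleton. Therefore the two algorithms produce the same $\pi_{k+1}$, which closes the induction.

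I expect no real obstacle. The point needing care is the common-targets step: the equivalence holds only because $w_k^*$ is a deterministic function of $\pi_k$. This is what lets the induction hypothesis on the policies carry over to equality of the $W_k^*$.
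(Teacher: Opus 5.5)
Your proposal is correct and follows the paper's proof. It unrolls $\Phi\theta_{k+1}=\sum_{i=0}^k\alpha_i W_i^*$, rescales the argmax objective by $\sum_{i=0}^k\alpha_i$, and verifies the identities for $\bar W_{k+1}$ and $\prod_{j=1}^k(1-\beta_j)$ by the same induction as in Proposition~\ref{prop:equivalence}, while your explicit base case and your remark that $w_k^*$ is determined by $\pi_k$ make precise a step the paper leaves implicit.
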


\begin{proof}[Proof of Proposition \ref{prop:equivalence-LFA}]
    Denote $\Theta_{k} = \Phi \theta_{k}$. 
    As stated in Line 3 of Algorithm \ref{alg:DSPI-LFA}, $W_{k}^* = \Phi w_{k}^*$. 
    Then Line 4 of Algorithm \ref{alg:Q-NPG} is equivalent to $\Theta_{k+1} = \Theta_{k} + \alpha_{k} W_{k}^{*}$, 
    since $\Phi$ has linearly independent columns. 
    Recursing yields $\Theta_{k+1} = \sum_{i=0}^{k} \alpha_{i} W_{i}^{*}$. 

    Following arguments similar to those in Proposition \ref{prop:equivalence}, we obtain 
    \begin{align*}
        \pi_{k+1}(s) 
        = &\; \argmax_{\mu \in \Delta(\mathcal{A})} \left\{ \mu^{\top} \left(\sum_{i=0}^{k} \alpha_{i} W_{i}^{*}(s) \right) + h(\mu) \right\} \\
        = &\; \argmax_{\mu \in \Delta(\mathcal{A})} \left\{ \mu^{\top} \left(\frac{\sum_{i=0}^{k} \alpha_{i} W_{i}^{*}(s)}{\sum_{i=0}^{k} \alpha_{i}}\right) 
        + \frac{\tau \alpha_{0}}{\sum_{i=0}^{k} \alpha_{i}} h(\mu) \right\},\quad \forall\, s \in \mathcal{S}.
    \end{align*}
    For the above to be equivalent to Line 5 of Algorithm \ref{alg:DSPI-LFA}, it remains to show that 
    $\Bar{W}_{k+1} = \sum_{i=0}^{k} \alpha_{i} W_{i}^{*} / \sum_{i=0}^{k} \alpha_{i}$ and 
    $\prod_{j=1}^k(1-\beta_j) = \alpha_0 / \sum_{i=0}^{k} \alpha_{i}$. 
    The latter identity has already been proved in Proposition \ref{prop:equivalence} and is omitted here.

    We now prove the former by induction, following the same argument as in the proof of Proposition \ref{prop:equivalence}. 
    The equality holds for $k = 0$, since $\beta_0 = 1$, which gives $\Bar{W}_{1} = W_{0}^{*}$. 
    Assume that $\Bar{W}_{n+1} = \sum_{i=0}^{n} \alpha_{i} W_{i}^{*} / \sum_{i=0}^{n} \alpha_{i}$ for some $n \in \mathbb{N}$. Then,
    \begin{align*}
        \Bar{W}_{n+2} 
        = &\; (1-\beta_{n+1}) \Bar{W}_{n+1} + \beta_{n+1} W_{n+1}^* \\
        = &\; \frac{\sum_{i=0}^{n}\alpha_i}{\sum_{i=0}^{n+1}\alpha_{i}} \cdot \frac{\sum_{i=0}^{n} \alpha_{i} W_{i}^{*}}{\sum_{i=0}^{n} \alpha_{i}}  
        + \frac{\alpha_{n+1}}{\sum_{i=0}^{n+1}\alpha_{i}} W_{n+1}^* \\
        = &\; \frac{\sum_{i=0}^{n+1} \alpha_{i} W_{i}^{*}}{\sum_{i=0}^{n+1}\alpha_{i}}.
    \end{align*}
    This completes the induction and establishes the desired equality. 
\end{proof}
\subsection{Finite-Time Analysis of DSPI with Linear Function Approximation}

Observe that, in contrast to the tabular setting where the natural gradient can be expressed in terms of the $Q$-function, Algorithm~\ref{alg:DSPI-LFA} uses only an approximation $W_k^*$ to the $Q$-function at iteration $k$. We therefore write $W_k^* = Q^{\pi_k} + \epsilon_k$, where $\epsilon_k := W_k^* - Q^{\pi_k}$ captures the function approximation error. To separate approximation error from convergence, we assume there exists $\epsilon > 0$ such that $\sup_k \|\epsilon_k\|_\infty \leq \epsilon$, where $\epsilon$ is analogous to the approximation error bound $\epsilon_{\text{approx}}$ defined in \cite{agarwal2021theory}. 

In model-free RL, the approximation $W_k^*$ would need to be estimated via temporal-difference learning with linear function approximation \cite{tsitsiklis1997analysis,srikant2019finite,bhandari2018finite}. This is beyond the scope of this work, as we focus on the policy optimization dynamics.

For simplicity, we consider only a constant stepsize in Algorithm~\ref{alg:DSPI-LFA}. We now state the main convergence result, whose proof is presented in Appendix \ref{subsubsec:pf:thm:DSPI-LFA}.

\begin{theorem}\label{thm:DSPI-LFA}
    Consider $\{\pi_k\}$ generated by Algorithm~\ref{alg:DSPI-LFA}. When choosing $\beta_0=1$ and $\beta_k\equiv \beta\in(0,1]$ for all $k\geq 1$, we have
    \begin{align*}
    \|V^* - V^{\pi_{k}}\|_{\infty} 
    \leq (1-\beta(1-\gamma))^{k-1}
    \left(\gamma \|V^* - V^{\pi_0}\|_{\infty} + \tau \nu_{\max}\right)
    + \frac{2\epsilon}{\beta(1-\gamma)^2},\quad \forall\,k\geq 1.
\end{align*}
\end{theorem}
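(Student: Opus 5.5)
The plan is to rerun the two-step tabular argument (Lemma~\ref{le:monotone} and then Lemma~\ref{le:contraction_recursion}), now tracking the perturbation $\epsilon_k=W_k^*-Q^{\pi_k}$. The difficulty is that exact monotone improvement $Q^{\pi_k}\le Q^{\pi_{k+1}}$ need not hold any more. So I would replace it with an approximate version and let the approximation error accumulate geometrically in the recursion.

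\textbf{Step 1 (approximate monotone improvement).} Write $\bar W_{k+1}=(1-\beta)\bar W_k+\beta(Q^{\pi_k}+\epsilon_k)$. I would repeat the computation in the proof of Lemma~\ref{le:monotone} with $\bar Q$ replaced by $\bar W$. That proof used $Q^{\pi_k}=(\bar Q_{k+1}-(1-\beta)\bar Q_k)/\beta$. Here that combination of the $\bar W$'s equals $W_k^*=Q^{\pi_k}+\epsilon_k$ instead. Running the same chain of (in)equalities, using affinity of $\mathcal{H}^{\pi}_\eta$, the greedy optimality of $\pi_{k+1}$ for $\bar W_{k+1}$ and of $\pi_k$ for $\bar W_k$, and $\eta_k=(1-\beta)\eta_{k-1}$, should give
\begin{align*}
\mathcal{H}^{\pi_k}(W_k^*)\le \mathcal{H}^{\pi_{k+1}}(W_k^*).
\end{align*}
Since $\mathcal{H}^\pi(Q+\epsilon)$ differs from $\mathcal{H}^\pi(Q)$ by at most $\gamma\|\epsilon\|_\infty$, this yields $Q^{\pi_k}\le \mathcal{H}^{\pi_{k+1}}(Q^{\pi_k})+2\gamma\epsilon\mathbf{1}$. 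Iterating $\mathcal{H}^{\pi_{k+1}}$ then gives
\begin{align*}
Q^{\pi_{k+1}}\ge Q^{\pi_k}-\frac{2\gamma\epsilon}{1-\gamma}\mathbf{1}.
\end{align*}
The case $k=0$ goes through as in the tabular proof.

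\textbf{Step 2 (perturbed contraction).} I expect this to be the main obstacle. The tabular proof used $\bar Q_k\le Q^{\pi_k}$, which now only holds approximately. The drop $Q^{\pi_j}-Q^{\pi_{j+1}}$ is bounded by $2\gamma\epsilon/(1-\gamma)$ at each step. Moreover, $\bar W_k$ is a geometric average with weights $\beta(1-\beta)^{k-1-j}$, and the indices $j$ further in the past carry small weight. Summing over the lag with these weights, I expect
\begin{align*}
\bar W_k\le Q^{\pi_k}+c\,\frac{\epsilon}{\beta(1-\gamma)}\mathbf{1}
\end{align*}
for a modest constant $c$, where the $\epsilon_j$ terms are absorbed into the same bound. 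Plugging this into the chain \eqref{eqeq} with $\bar W$ in place of $\bar Q$, and keeping the extra $\beta\epsilon_k$ from the update, should give the one-sided recursion
\begin{align*}
Q^*-\bar W_{k+1}\le (1-(1-\gamma)\beta)\|Q^*-\bar W_k\|_\infty\mathbf 1+\gamma\beta\eta_{k-1}\nu_{\max}\mathbf 1+\beta\,\delta\,\mathbf 1,
\end{align*}
with $\delta=O(\epsilon/(\beta(1-\gamma)))$.

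The lower side needs more care, because $Q^*-\bar W_{k+1}$ is no longer nonnegative. I would use $\bar W_{k+1}\le Q^*+\epsilon\mathbf{1}$, which holds since $\bar W_{k+1}$ is an average of the $Q^{\pi_j}+\epsilon_j$ and each $Q^{\pi_j}\le Q^*$. This means I control the positive part of $Q^*-\bar W_{k+1}$ in the recursion and only pay an extra $\epsilon$ for the negative side.

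\textbf{Step 3 (translation and algebra).} Finally I would adapt Lemma~\ref{le:translation}(1). Since $\pi_{k+1}$ is greedy for $\bar W_{k+1}$, and $Q^{\pi_{k+1}}\ge \bar W_{k+1}-O(\epsilon/(\beta(1-\gamma)))$ by Step 2, the same argument gives $V^*-V^{\pi_{k+1}}\le \|(Q^*-\bar W_{k+1})^+\|_\infty+\eta_k\nu_{\max}+O(\epsilon/(\beta(1-\gamma)))$.

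Unrolling the recursion exactly as in Appendix~\ref{ap:algebra} reproduces the terms $E_1,E_2,E_3$. The error terms sum to at most $\beta\delta/((1-\gamma)\beta)=\delta/(1-\gamma)$, which is of order $\epsilon/(\beta(1-\gamma)^2)$. Bookkeeping the constants carefully, in particular making the bound in Step 2 sharp enough, should give the factor $2$. If the constant comes out larger, I would try to tighten Step 2 by comparing $\bar W_k$ directly with $W_k^*$ rather than going through $Q^{\pi_k}$.
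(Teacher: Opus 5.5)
Your plan is correct and is essentially the paper's proof. It uses the same approximate improvement $Q^{\pi_k}\le Q^{\pi_{k+1}}+\frac{2\gamma\epsilon}{1-\gamma}\mathbf{1}$, obtained from $\mathcal{H}^{\pi_k}(W_k^*)\le\mathcal{H}^{\pi_{k+1}}(W_k^*)$. It then bounds $\bar W_k\le Q^{\pi_k}+\delta_k\mathbf{1}$ with $\delta_k\le\frac{(1+\gamma)\epsilon}{\beta(1-\gamma)}$. The paper proves this by induction; your lag-weighted sum also works and gives the slightly sharper $\epsilon+\frac{2\gamma\epsilon}{1-\gamma}\sum_{i=0}^{k-1}(1-\beta)^i$. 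The rest also matches the paper: the perturbed contraction with extra error $\beta(\gamma\delta_k+\epsilon)$, and the translation with an additive $\delta_{k+1}$.

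The lower-side issue you raise in Step 2 does not arise in the paper. The paper applies the triangle inequality to $\bar W_{k+1}=(1-\beta)\bar W_k+\beta W_k^*$ and only uses the sign $Q^*-Q^{\pi_k}\ge 0$, which still holds exactly. If you keep your one-sided route, run the whole recursion and the translation on $\|(Q^*-\bar W_k)^+\|_\infty$. This closes because $\mathcal{H}(Q^*)\le\mathcal{H}(\bar W_k)+\gamma\|(Q^*-\bar W_k)^+\|_\infty\mathbf{1}$. Do not instead pay an extra $\epsilon$ per step for the negative side: with these estimates the constant $2$ is attained with no slack when $\beta=1$, so any accumulated extra term breaks it.
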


Theorem \ref{thm:DSPI-LFA} shows that Algorithm \ref{alg:DSPI-LFA} converges at a geometric rate up to a term that is proportional to the function approximation error $\epsilon$.

\subsection{Finite-Time Analysis of NPG with Linear Function Approximation}

Combining Proposition \ref{prop:equivalence-LFA} with Theorem \ref{thm:DSPI-LFA}, we have the following result for NPG with linear function approximation.
\begin{theorem}\label{thm:NPG-LFA}
    Consider $\{\pi_k\}$ generated by Algorithm~\ref{alg:Q-NPG}.  
    When choosing $\alpha_0=\log(m)$ and $\alpha_{k} = \beta\alpha_0/(1-\beta)^{k}$ for all $k\geq 1$ and for any $\beta\in (0,1)$ , we have 
    \begin{align*}
    \|V^* - V^{\pi_{k}}\|_{\infty} 
    \leq (1-\beta(1-\gamma))^{k-1}
    \left(\gamma \|V^* - V^{\pi_0}\|_{\infty} + 1\right)
    + \frac{2\epsilon}{\beta(1-\gamma)^2},\quad \forall\,k\geq 1.
\end{align*}
As a result, by choosing $\beta = 1/2$, for any $\xi>0$, Algorithm~\ref{alg:Q-NPG} achieves 
$\|V^* - V^{\pi_{k}}\|_{\infty}\leq \xi+4\epsilon/(1-\gamma)^2$ with iteration complexity 
$\mathcal{O}((1-\gamma)^{-1}\log(\xi^{-1}(1-\gamma)^{-1}))$.
\end{theorem}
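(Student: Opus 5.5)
The statement to prove is Theorem~\ref{thm:NPG-LFA}. It follows the same route as Theorem~\ref{thm:NPG}: reduce to Theorem~\ref{thm:DSPI-LFA} via Proposition~\ref{prop:equivalence-LFA}, then specialize the parameters.

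First, I check that the prescribed schedule gives a constant DSPI stepsize. Take $\alpha_0=\log(m)$ and $\alpha_k=\beta\alpha_0/(1-\beta)^k$, and set $\beta_k=\alpha_k/\sum_{i=0}^k\alpha_i$. The geometric-sum computation from Appendix~\ref{pf:thm:NPG} carries over verbatim and gives $\beta_0=1$ and $\beta_k=\beta$ for all $k\ge1$. This is because $\sum_{i=0}^k\alpha_i=\alpha_0(1-\beta)^{-k}$.

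Next, set $\tau=1/\alpha_0=1/\log(m)$ and $\nu=h$. Proposition~\ref{prop:equivalence-LFA} then shows that Algorithm~\ref{alg:Q-NPG} and Algorithm~\ref{alg:DSPI-LFA} produce the same policy sequence. The initializations also agree: the uniform policy maximizes the entropy $h$, so it is a valid $\pi_0$ for Algorithm~\ref{alg:DSPI-LFA}, and $\theta_0=\mathbf{0}$ in both algorithms.

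Applying Theorem~\ref{thm:DSPI-LFA} with $\nu_{\max}=\max_\mu h(\mu)=\log(m)$ gives $\tau\nu_{\max}=1$. This yields the first displayed bound exactly, with the approximation term $2\epsilon/(\beta(1-\gamma)^2)$ unchanged.

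For the complexity claim, set $\beta=1/2$. The error term becomes $4\epsilon/(1-\gamma)^2$. The transient term is bounded as follows:
\[
(1-(1-\gamma)/2)^{k-1}(\gamma\|V^*-V^{\pi_0}\|_\infty+1)\le e^{-(1-\gamma)(k-1)/2}\Bigl(\tfrac{\gamma}{1-\gamma}+1\Bigr)=\frac{e^{-(1-\gamma)(k-1)/2}}{1-\gamma}.
\]
This uses $\|V^*-V^{\pi_0}\|_\infty\le 1/(1-\gamma)$, which holds because rewards lie in $[0,1]$, together with $1+x\le e^x$. The right-hand side is at most $\xi$ once $k\ge 1+2(1-\gamma)^{-1}\log(\xi^{-1}(1-\gamma)^{-1})$, which is $\mathcal{O}((1-\gamma)^{-1}\log(\xi^{-1}(1-\gamma)^{-1}))$.

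No step is a real obstacle, since all the substantive work lives in Theorem~\ref{thm:DSPI-LFA}. The only points needing care are the two checks above: that the uniform initialization is compatible with the DSPI-LFA initialization, and that $\max_\mu h(\mu)=\log(m)$ makes $\tau\nu_{\max}=1$ exactly.
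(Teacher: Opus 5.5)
Your proposal is correct and matches the paper's argument. The paper also checks, as in Appendix~\ref{pf:thm:NPG}, that the schedule yields $\beta_0=1$ and $\beta_k\equiv\beta$, invokes Proposition~\ref{prop:equivalence-LFA} with $\nu=h$ and $\tau=1/\log(m)$ so that $\tau\nu_{\max}=1$, and applies Theorem~\ref{thm:DSPI-LFA}. The paper omits the details you supply (the uniform-initialization compatibility and the $\beta=1/2$ complexity computation via $\|V^*-V^{\pi_0}\|_\infty\le 1/(1-\gamma)$), and your versions of them are correct.
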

Similar to the tabular setting, the proof of the above theorem follows from the equivalence of DSPI and NPG under a specific choice of stepsize and regularizer. Following the stepsize calculations in Appendix~\ref{pf:thm:NPG}, Proposition~\ref{prop:equivalence-LFA} implies that Algorithm~\ref{alg:Q-NPG} is a special case of Algorithm~\ref{alg:DSPI-LFA}. The convergence rate in Theorem~\ref{thm:NPG-LFA} then follows by taking $\nu(\cdot) = h(\cdot)$ and $\tau = 1/\log(m)$. Since the proof is similar to that of Theorem~\ref{thm:NPG}, we omit the details.

The NPG method with linear function approximation, also referred to as NPG with a log-linear policy class, has been extensively analyzed from a continuous optimization viewpoint \cite{agarwal2021theory,yuan2023linear,alfano2022linear,alfano2023novel,cayci2024convergence}, establishing convergence to the global optimum up to a function approximation error. In particular, \cite{agarwal2021theory} establish $\mathcal{O}(1/\sqrt{k})$ convergence for the best iterate (up to iteration $k$) using a decaying stepsize, while \cite{yuan2023linear} and \cite{alfano2022linear} establish geometric convergence of the last iterate with geometrically increasing stepsize sequences. Considering a more general framework of policy mirror descent, \cite{alfano2023novel} achieve geometric convergence with geometrically increasing stepsize sequence. \cite{cayci2024convergence} show that geometric convergence of the last iterate can be achieved with a constant stepsize, but using entropy regularization of the MDP. The convergence rates in the aforementioned works depend on a concentrability or distribution mismatch coefficient; moreover, the results of \cite{agarwal2021theory,yuan2023linear,alfano2022linear} require a bounded relative condition number associated with the feature matrix. In contrast, Theorem~\ref{thm:NPG-LFA} establishes distribution-free global geometric convergence of the last iterate up to a function approximation error, without requiring additional assumptions on the feature matrix or regularization of the MDP. 

\subsection{Proof of Theorem~\ref{thm:DSPI-LFA}}\label{subsubsec:pf:thm:DSPI-LFA}
The proof follows the same roadmap as in the tabular setting. Specifically, we establish an almost monotonic improvement and an almost contractive recursion, where the ``almost'' arises from the function approximation error.

The proof of the following lemma is given in Appendix~\ref{pf:le:monotone-LFA}. 
\begin{lemma}[Almost Monotonic Improvement]\label{le:monotone-LFA}
    Algorithm~\ref{alg:DSPI-LFA} satisfies 
    \begin{align*}
        Q^{\pi_k} \leq Q^{\pi_{k+1}} + \frac{2\gamma\epsilon}{1-\gamma}\mathbf{1},\quad \forall\,k\geq 0.
    \end{align*}
\end{lemma}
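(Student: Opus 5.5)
We will mimic the proof of Lemma~\ref{le:monotone}, now with $\bar{W}_k$ in place of $\bar{Q}_k$ and $W_k^* = Q^{\pi_k}+\epsilon_k$ in place of $Q^{\pi_k}$. The first reduction is the same. Suppose we can show
\begin{align*}
Q^{\pi_k} \le \mathcal{H}^{\pi_{k+1}}(Q^{\pi_k}) + c\,\mathbf{1}, \qquad c := 2\gamma\epsilon .
\end{align*}
Iterate $\mathcal{H}^{\pi_{k+1}}$, using its monotonicity and the identity $\mathcal{H}^{\pi}(Q + c\mathbf{1}) = \mathcal{H}^{\pi}(Q) + \gamma c\mathbf{1}$. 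This gives
\begin{align*}
Q^{\pi_k} \le [\mathcal{H}^{\pi_{k+1}}]^n(Q^{\pi_k}) + c(1+\gamma+\dots+\gamma^{n-1})\mathbf{1}.
\end{align*}
Letting $n\to\infty$ yields $Q^{\pi_k}\le Q^{\pi_{k+1}} + \tfrac{2\gamma\epsilon}{1-\gamma}\mathbf{1}$, which is the claim. So the task is to prove the one-step inequality with error $2\gamma\epsilon$.

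The case $k=0$ is handled as before. Since $\beta_0=1$, we have $\bar W_1 = W_0^* = Q^{\pi_0}+\epsilon_0$, and
\begin{align*}
Q^{\pi_0} = \mathcal{H}^{\pi_0}(\bar W_1) - \gamma P^{\pi_0}\epsilon_0 .
\end{align*}
Here $\mathcal{H}^{\pi}(Q+e) = \mathcal{H}^{\pi}(Q) + \gamma P^{\pi} e$, and each entry of $\gamma P^{\pi} e$ is bounded in absolute value by $\gamma\|e\|_\infty$. Line~5 of Algorithm~\ref{alg:DSPI-LFA} is exactly the greedy condition $\mathcal{H}_{\eta_0}^{\pi_1}(\bar W_1)=\mathcal{H}_{\eta_0}(\bar W_1)$. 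Combining this with $f(\pi_1)\le f(\pi_0)$ gives $\mathcal{H}^{\pi_0}(\bar W_1)\le \mathcal{H}^{\pi_1}(\bar W_1)$. Converting back via $\bar W_1 = Q^{\pi_0}+\epsilon_0$ costs one more $\gamma\epsilon$, so
\begin{align*}
Q^{\pi_0}\le \mathcal{H}^{\pi_1}(Q^{\pi_0}) + 2\gamma\epsilon\mathbf{1}.
\end{align*}

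For $k\ge1$, write $W_k^* = (\bar W_{k+1}-(1-\beta_k)\bar W_k)/\beta_k$ and $Q^{\pi_k} = W_k^*-\epsilon_k$. The key step is to split the error once, not per term. Start from the fixed-point relation for $\pi_k$:
\begin{align*}
Q^{\pi_k} = \mathcal{H}^{\pi_k}(Q^{\pi_k}) = \mathcal{H}^{\pi_k}(W_k^*) - \gamma P^{\pi_k}\epsilon_k .
\end{align*}
Apply the affine identity from Lemma~\ref{le:monotone} to $\mathcal{H}^{\pi_k}(W_k^*)$, using $\eta_k=(1-\beta_k)\eta_{k-1}$:
\begin{align*}
\mathcal{H}^{\pi_k}(W_k^*) = \tfrac{1}{\beta_k}\mathcal{H}^{\pi_k}_{\eta_k}(\bar W_{k+1}) - \tfrac{1-\beta_k}{\beta_k}\mathcal{H}^{\pi_k}_{\eta_{k-1}}(\bar W_k).
\end{align*}
The two greedy conditions then hold exactly, since $\pi_{k+1}$ is greedy for $\bar W_{k+1}$ and $\pi_k$ is greedy for $\bar W_k$. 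They replace $\pi_k$ by $\pi_{k+1}$ in both terms with the correct inequality signs. This gives $\mathcal{H}^{\pi_k}(W_k^*)\le \mathcal{H}^{\pi_{k+1}}(W_k^*)$ with no approximation error. Finally,
\begin{align*}
\mathcal{H}^{\pi_{k+1}}(W_k^*) = \mathcal{H}^{\pi_{k+1}}(Q^{\pi_k}) + \gamma P^{\pi_{k+1}}\epsilon_k .
\end{align*}
Adding the two error terms gives the bound $2\gamma\epsilon$.

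The main obstacle is making sure the $1/\beta_k$ factors do not multiply the error. Working with $\bar W$ exactly in the affine decomposition, and converting between $W_k^*$ and $Q^{\pi_k}$ only once before and once after, avoids this. The key point to verify is that the greedy conditions in Line~5 are with respect to $\bar W$, so they hold exactly and introduce no error.
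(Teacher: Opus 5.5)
Your proof is correct and follows the paper's argument essentially step for step. It reduces the claim to the one-step inequality $Q^{\pi_k}\le \mathcal{H}^{\pi_{k+1}}(Q^{\pi_k})+2\gamma\epsilon\mathbf{1}$, handles $k=0$ via the initialization $f(\pi_1)\le f(\pi_0)$, and for $k\ge1$ uses the affine decomposition with the greedy conditions holding exactly on $\bar W$, converting between $W_k^*$ and $Q^{\pi_k}$ once before and once after. The only cosmetic difference is that you carry the errors exactly as $\gamma P^{\pi}\epsilon_k$ and bound them at the end, whereas the paper bounds them immediately using monotonicity and translation invariance.
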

As a corollary of the above lemma, we have the following result on the relation between the averaged iterates $\Bar{W}_{k}$ and the $Q$-functions $Q^{\pi_k}$, which is proved in Appendix \ref{pf:le:coro:Wk-bar_Q-pik}.  
\begin{corollary}\label{coro:Wk-bar_Q-pik}
    Algorithm \ref{alg:DSPI-LFA} satisfies 
    \begin{align*}
        \Bar{W}_{k} \leq Q^{\pi_k} 
    + \delta_k\mathbf{1},\quad \forall\, k \geq 1,
    \end{align*}
    where $\delta_k:=\left(\frac{1+\gamma}{1-\gamma}\right)
    \epsilon \sum_{i=0}^{k-1}(1-\beta)^i$.
\end{corollary}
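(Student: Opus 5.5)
The plan is induction on $k$, combining two facts. The first is the almost monotonic improvement of Lemma~\ref{le:monotone-LFA}, namely $Q^{\pi_k}\le Q^{\pi_{k+1}}+c\mathbf{1}$ with $c:=\frac{2\gamma\epsilon}{1-\gamma}$. The second is the pointwise approximation bound $W_k^*=Q^{\pi_k}+\epsilon_k\le Q^{\pi_k}+\epsilon\mathbf{1}$, which holds because $\|\epsilon_k\|_\infty\le\epsilon$. Throughout I take the constant-stepsize setting of Theorem~\ref{thm:DSPI-LFA}, i.e.\ $\beta_0=1$ and $\beta_k\equiv\beta$ for $k\ge1$.

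The key constant is
\begin{align*}
D:=\epsilon+c=\epsilon\,\frac{(1-\gamma)+2\gamma}{1-\gamma}=\frac{1+\gamma}{1-\gamma}\,\epsilon .
\end{align*}
With this notation, $\delta_k=D\sum_{i=0}^{k-1}(1-\beta)^i$. Equivalently, $\delta_1=D$ and $\delta_{k+1}=D+(1-\beta)\delta_k$. The proof then amounts to checking that the upper bound on $\bar W_k$ obeys exactly this recursion.

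\textbf{Base case $k=1$.} Since $\beta_0=1$, we have $\bar W_1=W_0^*\le Q^{\pi_0}+\epsilon\mathbf{1}$. Applying Lemma~\ref{le:monotone-LFA} at $k=0$ gives $Q^{\pi_0}\le Q^{\pi_1}+c\mathbf{1}$. Hence $\bar W_1\le Q^{\pi_1}+(\epsilon+c)\mathbf{1}=Q^{\pi_1}+\delta_1\mathbf{1}$.

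\textbf{Inductive step.} Assume $\bar W_k\le Q^{\pi_k}+\delta_k\mathbf{1}$ for some $k\ge1$. Line~4 of Algorithm~\ref{alg:DSPI-LFA} forms a convex combination with nonnegative weights, so the componentwise inequalities carry through:
\begin{align*}
\bar W_{k+1}=(1-\beta)\bar W_k+\beta W_k^*\le(1-\beta)\bigl(Q^{\pi_k}+\delta_k\mathbf{1}\bigr)+\beta\bigl(Q^{\pi_k}+\epsilon\mathbf{1}\bigr)=Q^{\pi_k}+\bigl((1-\beta)\delta_k+\beta\epsilon\bigr)\mathbf{1}.
\end{align*}
Next I use $\beta\epsilon\le\epsilon$ together with Lemma~\ref{le:monotone-LFA}, which gives $Q^{\pi_k}\le Q^{\pi_{k+1}}+c\mathbf{1}$. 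This yields
\begin{align*}
\bar W_{k+1}\le Q^{\pi_{k+1}}+\bigl(D+(1-\beta)\delta_k\bigr)\mathbf{1}=Q^{\pi_{k+1}}+\delta_{k+1}\mathbf{1},
\end{align*}
which closes the induction.

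There is no real obstacle here. The only points needing care are that the inequalities are componentwise and are preserved under nonnegative combinations, and that the slack $\beta\epsilon\le\epsilon$ is what makes the error constants match $\delta_k$ exactly rather than up to a different factor.
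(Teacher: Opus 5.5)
Your proposal is correct and follows the paper's proof. Both argue by induction: the base case combines $\bar W_1=W_0^*\le Q^{\pi_0}+\epsilon\mathbf{1}$ with Lemma~\ref{le:monotone-LFA}, and the inductive step uses the convex combination in Line~4, the approximation bound, Lemma~\ref{le:monotone-LFA}, and the slack $\beta\epsilon\le\epsilon$; your recursion $\delta_{k+1}=D+(1-\beta)\delta_k$ with $D=\frac{1+\gamma}{1-\gamma}\epsilon$ simply makes the paper's final bookkeeping step explicit.
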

The above two results can now be used to obtain an almost one-step contraction inequality, which is the linear function approximation analogue of Lemma~\ref{le:contraction_recursion}. We state the result below and prove it in Appendix~\ref{pf:le:contraction_recursion-LFA}.
\begin{lemma}[An Almost One-step Contractive Recursion]\label{le:contraction_recursion-LFA}
    Algorithm \ref{alg:DSPI-LFA} satisfies 
    \begin{align*}
        \|Q^{*} - \Bar{W}_{k+1}\|_{\infty} \leq  (1 - \beta(1-\gamma))\|Q^{*} - \Bar{W}_{k}\|_{\infty} + \beta \gamma\nu_{\max}\eta_{k-1} + \beta \gamma \delta_k + \beta \epsilon,\quad \forall\,k\geq 1.
    \end{align*}
\end{lemma}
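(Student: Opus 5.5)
The plan is to follow the proof of Lemma~\ref{le:contraction_recursion}, with one change: $\bar W_k$ may now exceed $Q^*$, so $Q^*-\bar W_{k+1}$ is no longer sign-definite. I will therefore bound it componentwise from above and from below, and check that both bounds are dominated by the claimed right-hand side. Write $W_k^*=Q^{\pi_k}+\epsilon_k$ with $\|\epsilon_k\|_\infty\le\epsilon$, and use $\beta_k=\beta$ for $k\ge1$. Line~4 of Algorithm~\ref{alg:DSPI-LFA} then gives
\begin{align*}
Q^*-\bar W_{k+1}=(1-\beta)(Q^*-\bar W_k)+\beta(Q^*-Q^{\pi_k})-\beta\epsilon_k .
\end{align*}

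\textbf{Upper bound.} I will lower bound $Q^{\pi_k}=\mathcal{H}^{\pi_k}(Q^{\pi_k})$ in three steps.
\begin{enumerate}[(i)]
\item By Corollary~\ref{coro:Wk-bar_Q-pik}, $Q^{\pi_k}\ge \bar W_k-\delta_k\mathbf{1}$. Since $\mathcal{H}^{\pi_k}$ is monotone and $\mathcal{H}^{\pi_k}(Q-c\mathbf{1})=\mathcal{H}^{\pi_k}(Q)-\gamma c\mathbf{1}$, this gives $Q^{\pi_k}\ge \mathcal{H}^{\pi_k}(\bar W_k)-\gamma\delta_k\mathbf{1}$.
\item For $k\ge1$, $\pi_k$ was produced in Line~5 from $\bar W_k$ with smoothing $\eta_{k-1}$. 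In operator form this is $\mathcal{H}^{\pi_k}_{\eta_{k-1}}(\bar W_k)=\mathcal{H}_{\eta_{k-1}}(\bar W_k)$.
\item Using the notation $f$ from the proof of Lemma~\ref{le:monotone}, write $\mathcal{H}^{\pi_k}(\bar W_k)=\mathcal{H}_{\eta_{k-1}}(\bar W_k)-\eta_{k-1}f(\pi_k)$. Then use $\mathcal{H}_{\eta_{k-1}}\ge\mathcal{H}$ (because $\nu\ge0$) and $f(\pi_k)\le\gamma\nu_{\max}\mathbf{1}$.
\end{enumerate}
Together these give
\begin{align*}
Q^{\pi_k}\ge \mathcal{H}(\bar W_k)-\gamma\eta_{k-1}\nu_{\max}\mathbf{1}-\gamma\delta_k\mathbf{1}.
\end{align*}
Substituting this and $Q^*=\mathcal{H}(Q^*)$ into the display above, and applying the $\gamma$-contraction of $\mathcal{H}$, yields
\begin{align*}
Q^*-\bar W_{k+1}\le\bigl[(1-\beta)+\beta\gamma\bigr]\|Q^*-\bar W_k\|_\infty\mathbf{1}+\bigl(\beta\gamma\eta_{k-1}\nu_{\max}+\beta\gamma\delta_k+\beta\epsilon\bigr)\mathbf{1}.
\end{align*}

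\textbf{Lower bound.} Since $Q^*\ge Q^{\pi_k}$, we get $Q^*-\bar W_{k+1}\ge -(1-\beta)\|Q^*-\bar W_k\|_\infty\mathbf{1}-\beta\epsilon\mathbf{1}$. In absolute value this is at most the upper-bound expression, because $1-\beta\le 1-\beta(1-\gamma)$ and the remaining error terms are nonnegative.

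Taking the maximum over components of the two bounds gives the lemma. The step I expect to need the most care is (ii)--(iii): one must correctly identify which smoothing parameter $\pi_k$ is greedy with respect to ($\eta_{k-1}$ at iterate $\bar W_k$). A second point needing care is confirming that Corollary~\ref{coro:Wk-bar_Q-pik} is available for every $k\ge1$. The two-sided handling, which is new relative to the tabular proof, is then routine.
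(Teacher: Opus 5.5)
Your proposal is correct and follows the paper's proof: the same chain bounds $Q^*-Q^{\pi_k}$ from above. That chain consists of Corollary~\ref{coro:Wk-bar_Q-pik} plus translation invariance, the greedy identity $\mathcal{H}^{\pi_k}_{\eta_{k-1}}(\bar W_k)=\mathcal{H}_{\eta_{k-1}}(\bar W_k)$, $\nu\ge 0$, $f(\pi_k)\le\gamma\nu_{\max}\mathbf{1}$, and the $\gamma$-contraction of $\mathcal{H}$. The only difference is bookkeeping: the paper first applies the triangle inequality $\|Q^*-\bar W_{k+1}\|_\infty\le(1-\beta)\|Q^*-\bar W_k\|_\infty+\beta\|Q^*-Q^{\pi_k}\|_\infty+\beta\epsilon$ and then needs only a one-sided bound on the nonnegative vector $Q^*-Q^{\pi_k}$, whereas you bound $Q^*-\bar W_{k+1}$ componentwise from both sides, which is equivalent.
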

The following result translates the above bound on the $Q$-functions into a bound on the value functions. It is analogous to Lemma~\ref{le:translation} and is proved in Appendix~\ref{pf:le:translation-LFA}.
\begin{lemma}\label{le:translation-LFA}
    Algorithm \ref{alg:DSPI-LFA} satisfies 
    \begin{enumerate}[(1)]
        \item $\|V^* - V^{\pi_{k+1}}\|_{\infty} \leq \|Q^* - \Bar{W}_{k+1}\|_{\infty} + \eta_{k}\nu_{\max} + \delta_{k+1}$, for every $k \geq 0$;
        \item $\|Q^{*} - \Bar{W}_{1}\|_{\infty} \leq \gamma \|V^* -  V^{\pi_0}\|_{\infty} + \epsilon$.
    \end{enumerate}
\end{lemma}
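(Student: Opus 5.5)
Both parts follow the tabular Lemma~\ref{le:translation}; the only change is that $\bar W_{k+1}$ takes the place of $\bar Q_{k+1}$, and each comparison against $Q^{\pi_k}$ picks up an error term.

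\emph{Part (1).} Fix $k\ge 0$ and $s\in\mathcal{S}$, and write $V^*(s)-V^{\pi_{k+1}}(s)=\max_\mu \mu^\top Q^*(s)-\pi_{k+1}(s)^\top Q^{\pi_{k+1}}(s)$.

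In the tabular proof, the next step replaced $Q^{\pi_{k+1}}$ by $\bar Q_{k+1}$ using $\bar Q_{k+1}\le Q^{\pi_{k+1}}$. Here I use Corollary~\ref{coro:Wk-bar_Q-pik} at index $k+1$, which gives $\bar W_{k+1}\le Q^{\pi_{k+1}}+\delta_{k+1}\mathbf 1$. Since $\pi_{k+1}(s)$ is a probability vector, this yields $-\pi_{k+1}(s)^\top Q^{\pi_{k+1}}(s)\le -\pi_{k+1}(s)^\top \bar W_{k+1}(s)+\delta_{k+1}$.

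Next I use the update in Line~5 of Algorithm~\ref{alg:DSPI-LFA}. With $\eta_k=\tau\prod_{j=1}^k(1-\beta_j)$, it gives
\[
\pi_{k+1}(s)^\top\bar W_{k+1}(s)=\max_\mu\{\mu^\top\bar W_{k+1}(s)+\eta_k\nu(\mu)\}-\eta_k\nu(\pi_{k+1}(s)).
\]
Dropping the nonnegative term $\eta_k\nu(\mu)$ inside the max, and using that the difference of two maxima is at most the maximum of the difference, I get
\[
V^*(s)-V^{\pi_{k+1}}(s)\le \max_\mu\mu^\top(Q^*(s)-\bar W_{k+1}(s))+\eta_k\nu_{\max}+\delta_{k+1}.
\]
The first term is at most $\|Q^*-\bar W_{k+1}\|_\infty$. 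Since $V^*\ge V^{\pi_{k+1}}$ componentwise, taking the maximum over $s$ gives the $\ell_\infty$ bound in (1).

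\emph{Part (2).} Since $\beta_0=1$, we have $\bar W_1=W_0^*=Q^{\pi_0}+\epsilon_0$. The triangle inequality gives $\|Q^*-\bar W_1\|_\infty\le\|Q^*-Q^{\pi_0}\|_\infty+\|\epsilon_0\|_\infty$. The second term is at most $\epsilon$. For the first term, the one-step Bellman identity $Q^*(s,a)-Q^{\pi_0}(s,a)=\gamma\sum_{s'}p(s'\mid s,a)(V^*(s')-V^{\pi_0}(s'))$, together with $Q^*\ge Q^{\pi_0}$, gives $\|Q^*-Q^{\pi_0}\|_\infty\le\gamma\|V^*-V^{\pi_0}\|_\infty$.

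\emph{Main obstacle.} The only delicate point is applying Corollary~\ref{coro:Wk-bar_Q-pik} at the correct index. It is stated for $k\ge 1$, and I need it at $k+1\ge 1$, so it applies for every $k\ge 0$, including $k=0$ with $\delta_1$. Everything else is a direct transcription of the tabular argument.
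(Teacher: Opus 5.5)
Your proposal is correct and follows the paper's proof essentially step for step. For part (1), you use Corollary~\ref{coro:Wk-bar_Q-pik} at index $k+1$ to swap $Q^{\pi_{k+1}}$ for $\bar W_{k+1}$ at a cost of $\delta_{k+1}$, then the Line~5 argmax identity together with $0\le\nu\le\nu_{\max}$. For part (2), you use the triangle inequality with $\bar W_1=Q^{\pi_0}+\epsilon_0$ plus the one-step Bellman identity. Your handling of the $\eta_k\nu$ terms is, if anything, slightly cleaner than the paper's displayed chain.
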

The next step toward the final bound is to apply Lemma \ref{le:translation-LFA} (1) and recursively use the inequality in Lemma \ref{le:contraction_recursion-LFA}. Specifically, we obtain
\begin{align*}
    \|V^* - V^{\pi_{k+1}}\|_{\infty} 
    \leq\;& \|Q^* - \Bar{W}_{k+1}\|_{\infty} + \eta_k \nu_{\max} + \delta_{k+1} \\
    \leq\;& (1 - \beta(1-\gamma))\|Q^{*} - \Bar{W}_{k}\|_{\infty} 
    + \beta \left(\gamma \eta_{k-1}\nu_{\max} + \gamma \delta_k + \epsilon \right) + \eta_k \nu_{\max} + \delta_{k+1} \\
    \leq\;& (1-\beta(1-\gamma))^k \|Q^{*} - \Bar{W}_{1}\|_{\infty} \\
    &\; + \sum_{j=1}^{k}(1-\beta(1-\gamma))^{k-j}\beta
    \left(\gamma \eta_{j-1}\nu_{\max} + \gamma \delta_j + \epsilon \right) + \eta_k \nu_{\max} + \delta_{k+1}.
\end{align*}
Applying Lemma \ref{le:translation-LFA} (2) gives
\begin{align}\label{eq:final_recursion-LFA}
    \|V^* - V^{\pi_{k+1}}\|_{\infty} 
    \leq \;& \gamma(1-\beta(1-\gamma))^k
    \|V^* - V^{\pi_0}\|_{\infty}\nonumber\\
    & + \epsilon\underbrace{\left((1-\beta(1-\gamma))^k+\sum_{j=1}^{k}(1-\beta(1-\gamma))^{k-j}\beta\right)}_{:=O_1}\nonumber\\
    & + \nu_{\max} \underbrace{\left(\sum_{j=1}^{k}(1-\beta(1-\gamma))^{k-j}\beta\gamma\eta_{j-1}+\eta_k\right)}_{:=O_2}\nonumber\\
    &+ \underbrace{\sum_{j=1}^{k}(1-\beta(1-\gamma))^{k-j}\beta
    \gamma \delta_j +\delta_{k+1}}_{:=O_3}.
\end{align}
For the term $O_1$, we have
\begin{align*}
    O_1=(1-\beta(1-\gamma))^k+\frac{1}{1-\gamma}\left[1-(1-\beta(1-\gamma))^{k}\right]
    \leq \frac{1}{1-\gamma}.
\end{align*}
For the term $O_2$, since $\eta_k=\tau (1-\beta)^k$, we have
\begin{align*}
    O_2= \,&\tau\beta\gamma\sum_{j=1}^{k}(1-\beta(1-\gamma))^{k-j}(1-\beta)^{j-1}+\tau(1-\beta)^k\\
    = \,&\tau\beta\gamma(1-\beta(1-\gamma))^{k-1}\sum_{j=1}^{k}\left(\frac{1-\beta}{1-\beta(1-\gamma)}\right)^{j-1}+\tau(1-\beta)^k\\
    = \,&\tau\beta\gamma(1-\beta(1-\gamma))^{k-1}\frac{1-\left(\frac{1-\beta}{1-\beta(1-\gamma)}\right)^{k}}{1-\left(\frac{1-\beta}{1-\beta(1-\gamma)}\right)}+\tau(1-\beta)^k\\
=\;&\tau\left((1-\beta(1-\gamma))^{k}-(1-\beta)^{k}\right)+\tau(1-\beta)^k\\
= \,&\tau(1-\beta(1-\gamma))^k.
\end{align*}
For the term $O_3$, we have
\begin{align*}
    &\sum_{j=1}^{k}(1-\beta(1-\gamma))^{k-j}\beta
    \gamma \delta_j +\delta_{k+1}\\
    =\,&
    \gamma \left(\frac{1+\gamma}{1-\gamma}\right)\epsilon\sum_{j=1}^{k}(1-\beta(1-\gamma))^{k-j}(1-(1-\beta)^j)+\frac{1}{\beta}\left(\frac{1+\gamma}{1-\gamma}\right)(1-(1-\beta)^{k+1})
    \epsilon \\
    \leq \,&\frac{1+\gamma}{\beta(1-\gamma)^2}\epsilon.
\end{align*}
Substituting the bounds obtained for $O_1$, $O_2$, and $O_3$ into \eqref{eq:final_recursion-LFA}, we have
\begin{align*}
    \|V^* - V^{\pi_{k+1}}\|_{\infty} 
    \leq (1-\beta(1-\gamma))^k
    \left(\gamma \|V^* - V^{\pi_0}\|_{\infty} + \tau \nu_{\max}\right)
    + \frac{2\epsilon}{\beta(1-\gamma)^2}.
\end{align*}

\subsection{Proof of All Technical Lemmas}
\subsubsection{Proof of Lemma~\ref{le:monotone-LFA}}\label{pf:le:monotone-LFA}
It is enough to show, for any $k \geq 0$, that $Q^{\pi_k} \leq \mathcal{H}^{\pi_{k+1}}(Q^{\pi_k}) + 2 \gamma \epsilon \mathbf{1}$. Once this is established, repeatedly applying $\mathcal{H}^{\pi_{k+1}}$ on both sides gives us $Q^{\pi_k} \leq [\mathcal{H}^{\pi_{k+1}}]^n(Q^{\pi_k}) + 2\epsilon \mathbf{1}\sum_{i=1}^n\gamma^i$. Letting $n\rightarrow\infty$ gives us the desired result.

We next prove that $Q^{\pi_k} \leq \mathcal{H}^{\pi_{k+1}}(Q^{\pi_k}) + 2\gamma \epsilon \mathbf{1}$ for all $k \geq 0$. Similar to the proof of Lemma \ref{le:monotone}, we consider the two cases $k = 0$ and $k \geq 1$ separately.
\begin{enumerate}[(1)]
    \item For $k = 0$, since $\beta_0 = 1$, we have $\Bar{W}_{1} = W_0^*$. It follows that
    \begin{align*}
        Q^{\pi_0} &= \mathcal{H}^{\pi_0}(Q^{\pi_0}) \\
        & = \mathcal{H}^{\pi_0}(W_{0}^* - \epsilon_{0}) \\
        & \leq  \mathcal{H}^{\pi_0}(W_{0}^* +\epsilon \mathbf{1}) \\
        & = \mathcal{H}^{\pi_0}(W_{0}^*) + \gamma \epsilon \mathbf{1} \tag{Translation invariance}\\
        & = \mathcal{H}^{\pi_0}(\Bar{W}_{1}) + \gamma \epsilon
        \mathbf{1} \\
        & = \mathcal{H}_{\eta_0}^{\pi_0}(\Bar{W}_{1}) - \eta_0 f(\pi_0) + \gamma \epsilon \mathbf{1} \\
        & \leq  \mathcal{H}_{\eta_0}^{\pi_1}(\Bar{W}_{1}) - \eta_0 f(\pi_0) + \gamma \epsilon\mathbf{1} \tag{Algorithm \ref{alg:DSPI-LFA}, Line 5}\\
        & = \mathcal{H}^{\pi_1}(\Bar{W}_{1}) +\eta_0 [f(\pi_1) - f(\pi_0)] + \gamma \epsilon\mathbf{1} \\
        & \leq \mathcal{H}^{\pi_1}(Q^{\pi_0} + \epsilon_{0}) + \gamma \epsilon\mathbf{1} \tag{Algorithm \ref{alg:DSPI-LFA}, Line 1}\\
        & \leq \mathcal{H}^{\pi_1}(Q^{\pi_0} + \epsilon \mathbf{1}) + \gamma \epsilon\mathbf{1}\\
        & \leq \mathcal{H}^{\pi_1}(Q^{\pi_0}) + 2 \gamma \epsilon\mathbf{1}.\tag{Translation invariance}
    \end{align*}
    \item For $k \geq 1$, observe that
    \begin{align*}
        Q^{\pi_k} & = \mathcal{H}^{\pi_k}(Q^{\pi_k}) \\
        & = \mathcal{H}^{\pi_k}(W_{k}^* - \epsilon_{k}) \\
        & \leq \mathcal{H}^{\pi_k}(W_{k}^* +\epsilon\mathbf{1}) \\
        & \leq \mathcal{H}^{\pi_k}(W_{k}^*) + \gamma \epsilon \mathbf{1} \tag{Translation invariance}\\
        & = \mathcal{H}^{\pi_k}\left(\frac{\Bar{W}_{k+1} - (1-\beta_k)\Bar{W}_{k}}{\beta_k}\right) + \gamma \epsilon \mathbf{1} \tag{Algorithm \ref{alg:DSPI-LFA}, Line 4}\\
        & = \mathcal{H}^{\pi_k}_{\eta_{k-1}}\left(\frac{\Bar{W}_{k+1} - (1-\beta_k)\Bar{W}_{k}}{\beta_k}\right) - \eta_{k-1}f(\pi_k) + \gamma \epsilon \mathbf{1} \\
        & = \frac{1}{\beta_k}\mathcal{H}^{\pi_k}_{\eta_{k-1}}(\Bar{W}_{k+1}) - \frac{1-\beta_k}{\beta_k} \mathcal{H}^{\pi_k}_{\eta_{k-1}}(\Bar{W}_{k}) - \eta_{k-1}f(\pi_k) + \gamma \epsilon \mathbf{1}\tag{$\mathcal{H}^{\pi_k}_{\eta_{k-1}}$ being affine}\\
        & = \frac{1}{\beta_k}\mathcal{H}^{\pi_k}_{\eta_{k}}(\Bar{W}_{k+1}) - \frac{1-\beta_k}{\beta_k} \mathcal{H}^{\pi_k}_{\eta_{k-1}}(\Bar{W}_{k}) -\frac{\eta_{k}-(1-\beta_k)\eta_{k-1}}{\beta_k}f(\pi_k)+ \gamma \epsilon \mathbf{1}\\
        & = \frac{1}{\beta_k}\mathcal{H}^{\pi_k}_{\eta_{k}}(\Bar{W}_{k+1}) - \frac{1-\beta_k}{\beta_k} \mathcal{H}^{\pi_k}_{\eta_{k-1}}(\Bar{W}_{k}) + \gamma \epsilon \mathbf{1},
    \end{align*}
    where the last step follows from $\eta_{k}=(1-\beta_k)\eta_{k-1}$. 
    
   Since Line 5 of Algorithm \ref{alg:DSPI-LFA} implies
    $\mathcal{H}_{\eta_k}^{\pi_{k+1}}(\Bar{W}_{k+1}) = \mathcal{H}_{\eta_{k}}(\Bar{W}_{k+1})$ for all $k \geq 0$, we have
    \begin{align*}
        \mathcal{H}^{\pi_k}_{\eta_{k}}(\Bar{W}_{k+1}) & \leq \mathcal{H}_{\eta_{k}}(\Bar{W}_{k+1}) = \mathcal{H}^{\pi_{k+1}}_{\eta_{k}}(\Bar{W}_{k+1}) \\
        \mathcal{H}^{\pi_k}_{\eta_{k-1}}(\Bar{W}_{k}) & = \mathcal{H}_{\eta_{k-1}}(\Bar{W}_{k}) \geq \mathcal{H}^{\pi_{k+1}}_{\eta_{k-1}}(\Bar{W}_{k}).
    \end{align*}
    Combining the previous two inequalities, we obtain
    \begin{align*}
        Q^{\pi_k} & \leq \frac{1}{\beta_k} \mathcal{H}^{\pi_{k+1}}_{\eta_{k}}(\Bar{W}_{k+1}) - \frac{1-\beta_k}{\beta_k} \mathcal{H}^{\pi_{k+1}}_{\eta_{k-1}}(\Bar{W}_{k}) + \gamma \epsilon \mathbf{1} \\
        & = \frac{1}{\beta_k} \mathcal{H}^{\pi_{k+1}}(\Bar{W}_{k+1}) - \frac{1-\beta_k}{\beta_k} \mathcal{H}^{\pi_{k+1}}(\Bar{W}_{k}) + \frac{\eta_k - (1-\beta_k)\eta_{k-1}}{\beta_k}f(\pi_{k+1}) + \gamma \epsilon \mathbf{1} \\
        & = \frac{1}{\beta_k} \mathcal{H}^{\pi_{k+1}}(\Bar{W}_{k+1}) - \frac{1-\beta_k}{\beta_k} \mathcal{H}^{\pi_{k+1}}(\Bar{W}_{k}) + \gamma \epsilon\mathbf{1} \tag{$\eta_k = (1-\beta_k)\eta_{k-1}$}\\
        & = \mathcal{H}^{\pi_{k+1}}\left(\frac{\Bar{W}_{k+1} - (1-\beta_k)\Bar{W}_{k}}{\beta_k}\right) + \gamma \epsilon \mathbf{1} \tag{$\mathcal{H}^{\pi_{k+1}}$ being affine}\\
        & = \mathcal{H}^{\pi_{k+1}}(W_k^*) + \gamma \epsilon\mathbf{1} \tag{Algorithm \ref{alg:DSPI-LFA}, Line 4}\\
        & = \mathcal{H}^{\pi_{k+1}}(Q^{\pi_k} + \epsilon_k) + \gamma \epsilon \mathbf{1} \\
        & \leq  \mathcal{H}^{\pi_{k+1}}(Q^{\pi_k} + \epsilon\mathbf{1}) + \gamma \epsilon \mathbf{1} \\
        & \leq \mathcal{H}^{\pi_{k+1}}(Q^{\pi_k}) + 2 \gamma \epsilon \mathbf{1}\tag{Translation invariance}
    \end{align*}
\end{enumerate}

\subsubsection{Proof of Corollary \ref{coro:Wk-bar_Q-pik}}\label{pf:le:coro:Wk-bar_Q-pik}
The proof is by induction. Since $\bar{W}_{0}=0$ and $\beta_0=1$, we have by Lemma \ref{le:monotone-LFA} that
\begin{align*}
    \bar{W}_{1}
    = W_0^*
    = Q^{\pi_0}+\epsilon_0
    \leq Q^{\pi_0}+\epsilon\mathbf{1}
    \leq Q^{\pi_1}+\frac{2\gamma\epsilon}{1-\gamma}\mathbf{1}+\epsilon\mathbf{1}\leq Q^{\pi_1}+\frac{\epsilon(1+\gamma)}{1-\gamma}\mathbf{1}.
\end{align*}
Thus, the base case holds.

Now suppose that, for some $n\geq 1$,
\begin{align*}
    \bar{W}_{n}
    \leq Q^{\pi_n}
    + \left(\frac{1+\gamma}{1-\gamma}\right)
    \epsilon \sum_{i=0}^{n-1}(1-\beta)^i\mathbf{1}.
\end{align*}
Then,
\begin{align*}
    \bar{W}_{n+1}
    &= (1-\beta)\bar{W}_{n}+\beta W_n^* \\
    &\leq (1-\beta)Q^{\pi_n}
    + (1-\beta)\left(\frac{1+\gamma}{1-\gamma}\right)
    \epsilon \sum_{i=0}^{n-1}(1-\beta)^i\mathbf{1}
    + \beta(Q^{\pi_n}+\epsilon_n) \tag{The induction hypothesis}\\
    &\leq Q^{\pi_n}
    + \left(\frac{1+\gamma}{1-\gamma}\right)
    \epsilon \sum_{i=1}^{n}(1-\beta)^i\mathbf{1}
    + \beta\epsilon\mathbf{1} \\
    &\leq Q^{\pi_{n+1}}
    + \frac{2\gamma\epsilon}{1-\gamma}\mathbf{1}
    + \left(\frac{1+\gamma}{1-\gamma}\right)
    \epsilon \sum_{i=1}^{n}(1-\beta)^i\mathbf{1}
    + \beta\epsilon\mathbf{1} \tag{Lemma \ref{le:monotone-LFA}}\\
    &\leq Q^{\pi_{n+1}}
    + \left(\frac{1+\gamma}{1-\gamma}\right)
    \epsilon \sum_{i=0}^{n}(1-\beta)^i\mathbf{1}.
\end{align*}
This completes the induction.

\subsubsection{Proof of Lemma \ref{le:contraction_recursion-LFA}}\label{pf:le:contraction_recursion-LFA}
By Line 4 of Algorithm \ref{alg:DSPI-LFA}, we have 
\begin{align}
    \|Q^{*} - \Bar{W}_{k+1}\|_{\infty}&=\|(1 - \beta)(Q^{*} - \Bar{W}_{k}) + \beta (Q^{*} - W_{k}^*) \|_\infty\nonumber\\
    & \leq (1-\beta) \|Q^{*} - \Bar{W}_{k}\|_{\infty} + \beta \|Q^{*} - W_{k}^*\|_{\infty} \nonumber\\
    & = (1-\beta) \|Q^{*} - \Bar{W}_{k}\|_{\infty} + \beta \|Q^{*} - Q^{\pi_k} - \epsilon_{k}\|_{\infty} \nonumber\\
    & \leq (1-\beta) \|Q^{*} - \Bar{W}_{k}\|_{\infty} + \beta \|Q^{*} - Q^{\pi_k}\|_{\infty} + \beta \epsilon.\label{bound:Q*W}
\end{align}
We next bound $\|Q^{*} - Q^{\pi_k}\|_\infty$ in terms of $\|Q^{*} - \Bar{W}_{k}\|_\infty$. Observe that
\begin{align*}
    Q^* - Q^{\pi_k} & = \mathcal{H}(Q^{*}) - \mathcal{H}^{\pi_k}(Q^{\pi_k}) \\
    & \leq \mathcal{H}(Q^{*}) - \mathcal{H}^{\pi_k}(\Bar{W}_{k} - \delta_k \mathbf{1}) \tag{Corollary \ref{coro:Wk-bar_Q-pik}} \\
    & = \mathcal{H}(Q^{*}) - \mathcal{H}^{\pi_k}(\Bar{W}_{k}) + \gamma \delta_k \mathbf{1} \tag{Translation invariance}\\
    & = \mathcal{H}(Q^{*}) - \mathcal{H}^{\pi_k}_{\eta_{k-1}}(\Bar{W}_{k}) + \eta_{k-1}f(\pi_{k}) + \gamma \delta_k \mathbf{1} \\
    & = \mathcal{H}(Q^{*}) - \mathcal{H}_{\eta_{k-1}} (\Bar{W}_{k}) + \eta_{k-1}f(\pi_{k}) + \gamma \delta_k \mathbf{1} \tag{Algorithm~\ref{alg:DSPI-LFA}, Line 5}\\
    & \leq \mathcal{H}(Q^{*}) - \mathcal{H} (\Bar{W}_{k}) + \eta_{k-1}f(\pi_{k}) + \gamma \delta_k \mathbf{1} \tag{$\nu(\cdot)\geq 0$}\\
    & \leq \|\mathcal{H}(Q^{*}) - \mathcal{H} (\Bar{W}_{k})\|_\infty \mathbf{1} + \eta_{k-1}\gamma \nu_{\max} \mathbf{1}+ \gamma \delta_k \mathbf{1}\\
    \leq \,&\gamma\|Q^{*}- \Bar{W}_{k}\|_\infty \mathbf{1} + \eta_{k-1}\gamma \nu_{\max} \mathbf{1}+ \gamma \delta_k \mathbf{1}.
\end{align*}
Since $Q^{*} - Q^{\pi_k} \geq 0$, we obtain
\begin{align*}
    \|Q^* - Q^{\pi_k}\|_{\infty}  \leq \gamma\|Q^{*} - \Bar{W}_{k}\|_{\infty} + \gamma\nu_{\max}\eta_{k-1} + \gamma \delta_k. 
\end{align*}
Substituting the above bound into \eqref{bound:Q*W} yields 
\begin{align*}
    \|Q^{*} - \Bar{W}_{k+1}\|_{\infty} & \leq (1-\beta) \|Q^{*} - \Bar{W}_{k}\|_{\infty} + \beta \gamma\|Q^{*} - \Bar{W}_{k}\|_{\infty} + \beta \gamma\nu_{\max}\eta_{k-1} + \beta \gamma \delta_k + \beta \epsilon \\
    & = (1 - \beta(1-\gamma))\|Q^{*} - \Bar{W}_{k}\|_{\infty} + \beta \gamma\nu_{\max}\eta_{k-1} + \beta \gamma \delta_k + \beta \epsilon.
\end{align*}

\subsubsection{Proof of Lemma \ref{le:translation-LFA}}\label{pf:le:translation-LFA}
\begin{enumerate}[(1)]
    \item For any $k \geq 0$ and $s \in \mathcal{S}$, 
    \begin{align*}
        V^*(s) - V^{\pi_{k+1}}(s) & = \max_{a\in\mathcal{A}}Q^*(s,a) - \pi_{k+1}(s)^{\top} Q^{\pi_{k+1}}(s) \\
        & \leq \max_{a\in\mathcal{A}}Q^*(s,a) - \pi_{k+1}(s)^{\top} (\Bar{W}_{k+1}(s) - \delta_{k+1}\mathbf{1}) \tag{Corollary \ref{coro:Wk-bar_Q-pik}} \\
        & = \max_{a\in\mathcal{A}}Q^*(s,a) - \pi_{k+1}(s)^{\top} \Bar{W}_{k+1}(s) + \delta_{k+1} \\
        & \leq  \max_{a\in\mathcal{A}}Q^*(s,a) - \left\{\pi_{k+1}(s)^{\top} \Bar{W}_{k+1}(s) + \eta_k\nu(\pi_{k+1}(s)) \right\} + \eta_k\nu_{\max} + \delta_{k+1} \\
        & = \max_{\mu_{\Delta(\mathcal{A})}}\mu^{\top}Q^*(s,a) - \max_{\mu\in\Delta(\mathcal{A})}\left\{ \mu^{\top} \Bar{W}_{k+1}(s) + \eta_k\nu(\mu) \right\} + \eta_k\nu_{\max}+ \delta_{k+1} \tag{Algorithm \ref{alg:DSPI-LFA}, Line 5}\\
        & \leq  \max_{\mu_{\Delta(\mathcal{A})}}\mu^{\top}Q^*(s,a) - \max_{\mu\in\Delta(\mathcal{A})}\mu^{\top} \Bar{W}_{k+1}(s)  + \eta_k\nu(\pi_{k+1}(s)) + \delta_{k+1} \tag{$\nu(\cdot)\geq 0$}\\
        & \leq \max_{\mu_{\Delta(\mathcal{A})}} \left\{ \mu^{\top}( Q^*(s,a) - \Bar{W}_{k+1}(s) ) \right\} + \eta_{k}\nu_{\max} + \delta_{k+1} \\
        & \leq \|Q^* - \Bar{W}_{k+1}\|_{\infty} + \nu_{\max}\eta_{k} + \delta_{k+1}
    \end{align*}
    Since $V^*(s) - V^{\pi_{k+1}}(s) \geq 0$ and the above inequality holds for every $s \in \mathcal{S}$, we have
    \begin{align*}
        \|V^* - V^{\pi_{k+1}}\|_{\infty} \leq \|Q^* - \Bar{W}_{k+1}\|_{\infty} + \eta_{k}\nu_{\max} + \delta_{k+1}.
    \end{align*}
    \item Since $\beta_0=1$ and $\Bar{W}_{1} = W_{0}^{*}$, we have 
    \begin{align*}
        \|Q^{*} - \Bar{W}_{1}\|_{\infty}  = \|Q^{*} - W_{0}^{*}\|_{\infty} 
         = \|Q^{*} - Q^{\pi_0} - \epsilon_{0}\|_{\infty} 
         \leq \|Q^{*} - Q^{\pi_0}\|_{\infty} + \epsilon.
    \end{align*}
    Using the Bellman equation for the $Q$-function, we have
    \begin{align*}
        \|Q^{*} - Q^{\pi_0}\|_{\infty} & = \|\mathcal{H}(Q^*) - \mathcal{H}^{\pi_0}(Q^{\pi_0})\|_{\infty} \\
        & = \gamma\max_{(s,a)}\left|\sum_{s'\in\mathcal{S}}p(s'|s,a) \left( \max_{a'\in\mathcal{A}}Q^*(s',a') - \sum_{a'\in\mathcal{A}}\pi_{0}(a'|s') Q^{\pi_0}(s',a') \right)\right| \\
        & = \gamma\max_{(s,a)}\left|\sum_{s'\in\mathcal{S}}p(s'|s,a) \left( V^*(s') - V^{\pi_0}(s') \right)\right| \\
        & \leq \gamma \|V^* -  V^{\pi_0}\|_{\infty}.
    \end{align*}
    Combining the previous two inequalities yields
    \begin{align*}
        \|Q^{*} - \Bar{W}_{1}\|_{\infty} \leq \gamma \|V^* -  V^{\pi_0}\|_{\infty} + \epsilon.
    \end{align*}
\end{enumerate}

\section{Natural Policy Gradient for Stochastic Shortest Path Problems}\label{sec:SSP}

Consider an infinite-horizon undiscounted MDP $\mathcal{M} = (\mathcal{S}_0, \mathcal{A}, p, \mathcal{R})$, where $\mathcal{S}_0$ is a finite state space that includes a terminal state, $\mathcal{A}$ is a finite action space, $\mathcal{R} : \mathcal{S}_0 \times \mathcal{A} \to [-1,0]$ is the reward function, and $p$ is the transition kernel. We denote the terminal state by $s_{\text{tm}}$ and define $\mathcal{S} = \mathcal{S}_0 \setminus \{s_{\text{tm}}\}$. Once the system reaches $s_{\text{tm}}$, it remains there and receives no further rewards, i.e., $p(s_{\text{tm}} \mid s_{\text{tm}}, a) = 1$ and $\mathcal{R}(s_{\text{tm}}, a) = 0$ for all $a \in \mathcal{A}$. As in the discounted setting, we denote $n = |\mathcal{S}|$ and $m = |\mathcal{A}|$.

Given a stationary policy $\pi : \mathcal{S} \to \Delta(\mathcal{A})$, the expected total reward starting from a state $s \in \mathcal{S}$ is defined as
\begin{align}\label{def:V_SSP}
    V^{\pi}(s) = \limsup_{K \to \infty}\mathbb{E}_{\pi}\left[\sum_{k=0}^{K} \mathcal{R}(S_{k}, A_{k}) \;\middle|\; S_0 = s \right],
\end{align}
where $A_{k} \sim \pi(\cdot \mid S_{k})$ and $S_{k+1} \sim p(\cdot \mid S_k, A_k)$. 
A stationary policy $\pi^*$ is said to be optimal if $V^*(s) \coloneqq V^{\pi^*}(s) \geq V^{\pi}(s)$ for all stationary policies $\pi$. 

To ensure termination under the optimal policy, the notion of a proper policy is commonly adopted in the stochastic shortest path (SSP) literature \cite{yu2013boundedness,yu2013q,bertsekas1991analysis,chen2022policy}. The formal definition of a proper policy is stated below.
\begin{definition}\label{def:proper-policy}
    A policy $\pi$ is said to be proper if, for any initial state $s \in \mathcal{S}$, there is a positive probability of reaching the terminal state within at most $n$ stages, i.e., $\mathbb{P}\{S_{n} \neq s_{\text{tm}} \mid S_0 = s\} < 1$ for all $s \in \mathcal{S}$. 
\end{definition}
The notion of a proper policy was standardized in the SSP literature following \cite{bertsekas1991analysis}, where it was used to relax the prevailing assumption that the reward function be either nonnegative or nonpositive. In particular, \cite{bertsekas1991analysis} required that there exists at least one proper policy, and for every improper policy $\pi$, there exists a state $s \in \mathcal{S}$ such that $V^{\pi}(s) = -\infty$, and established asymptotic convergence of VI and PI. Under the same assumption, subsequent works have also established asymptotic convergence of Q-learning \cite{tsitsiklis1994asynchronous,yu2013boundedness} and of variants of PI and Q-learning \cite{yu2013q}. However, this assumption may not be sufficient for establishing convergence rates, which typically relies on a contraction property of the Bellman optimality operator\cite{bertsekas1996neuro,yu2013q}. In SSP setting, the Bellman optimality operator is only non-expansive with respect to the $\ell_\infty$ norm. Nonetheless, it becomes a contraction with respect to a weighted $\ell_\infty$ norm (cf. Lemma~\ref{le:weighted_sup_norm}) when all stationary policies are proper--a standard condition in the literature \cite{bertsekas1991analysis,bertsekas1996neuro,yu2013q} when establishing contraction properties of the Bellman operator. We therefore adopt the same assumption, formally stated below. 
\begin{assumption}\label{as:proper-all}
    All stationary policies are proper.
\end{assumption}

For a proper policy, since the MDP terminates in finite time, the limit in \eqref{def:V_SSP} exists. 
Under Assumption \ref{as:proper-all}, it has been established in \cite{bertsekas1991analysis} that 
there exists an optimal proper policy $\pi^*$ for which $V^*(s)$ is finite for every $s \in \mathcal{S}$. 
Furthermore, $V^*$ satisfies the Bellman equation 
\begin{align*}
    V^*(s) = \max_{a \in \mathcal{A}} \left\{ \mathcal{R}(s,a) + \sum_{s' \in \mathcal{S}_0} p(s' \mid s,a)\, V^*(s') \right\}.
\end{align*}
For a proper policy $\pi$, the state-action value function $Q^{\pi}$ is analogously defined as 
\begin{align*}
    Q^{\pi}(s,a) = \lim_{K \to \infty}\mathbb{E}\left[\sum_{k=0}^{K} \mathcal{R}(S_k,A_k) \;\middle|\; S_0 = s, A_0 = a \right].
\end{align*}
Under Assumption \ref{as:proper-all}, there exists a unique optimal state-action value function $Q^*$, which is related to the optimal value function via $V^*(s) = \max_{a \in \mathcal{A}} Q^*(s,a)$ and uniquely solves the Bellman equation $Q = \widetilde{\mathcal{H}}(Q)$ \cite{bertsekas1991analysis}, where $\widetilde{\mathcal{H}}:\mathbb{R}^{mn}\to\mathbb{R}^{mn}$ is the Bellman optimality operator defined as 
\begin{align*}
    [\widetilde{\mathcal{H}}(Q)](s,a)
    =\,& \mathcal{R}(s,a) + \sum_{s'\in\mathcal{S}} p(s'\mid s,a) \max_{a'\in\mathcal{A}} Q(s',a'), \quad \forall\,(s,a)\in\mathcal{S}\times\mathcal{A}.
\end{align*}
Similarly, for a proper stationary policy $\pi$, $Q^\pi$ uniquely solves the Bellman equation $Q = \widetilde{\mathcal{H}}^{\pi}(Q)$, where $\widetilde{\mathcal{H}}^\pi:\mathbb{R}^{mn}\to\mathbb{R}^{mn}$ is the Bellman operator associated with $\pi$ defined as 
\begin{align*}
    [\widetilde{\mathcal{H}}^\pi(Q)](s,a)
    =\,& \mathcal{R}(s,a) + \sum_{s'\in\mathcal{S}} p(s'\mid s,a)\sum_{a'\in\mathcal{A}}\pi(a'\mid s') Q(s',a'),\quad \forall\,(s,a)\in\mathcal{S}\times\mathcal{A}. 
\end{align*}
Observe that the sums in the above expressions are over $\mathcal{S} = \mathcal{S}_{0}\setminus\{ s_{\text{tm}} \}$. Further, due to absence of a discount factor, the Bellman optimality operator $\widetilde{\mathcal{H}}$ and the Bellman operator $\widetilde{\mathcal{H}}^{\pi}$ for arbitrary stationary policy $\pi$ are only non-expansive with respect to the $\ell_\infty$-norm. Nonetheless, both $\widetilde{\mathcal{H}}$ and $\widetilde{\mathcal{H}}^\pi$ are monotonic: for $Q_1 \leq Q_2$, we have $\widetilde{\mathcal{H}}(Q_1) \leq \widetilde{\mathcal{H}}(Q_2)$ and $\widetilde{\mathcal{H}}^\pi(Q_1) \leq \widetilde{\mathcal{H}}^\pi(Q_2)$.   

\subsection{Natural Policy Gradient for Stochastic Shortest Path Problems}\label{sec:PDA_SSP}
In comparison to the discounted setting, policy gradient methods in the SSP setting are relatively underexplored. Policy optimization algorithms for SSP problems have been analyzed by \cite{chen2022policy} in terms of regret bounds. Concerning last-iterate convergence, policy gradient methods for undiscounted total reward MDPs were studied by \cite{lee2025policy} only very recently. The NPG update for the SSP problem admits a similar formulation as that for the discounted problem \cite{lee2025policy}; we present it in Algorithm \ref{alg:PDA_SSP}. 
\begin{algorithm}[ht]\caption{Natural Policy Gradient for Stochastic Shortest Path Problems}\label{alg:PDA_SSP}
	\begin{algorithmic}[1]
		\STATE \textbf{Input:} Initialize $\pi_0$ as the uniform policy and $\theta_0=\mathbf{0}\in\mathbb{R}^{mn}$.
		\FOR{$k=0,1,2,3,\cdots$}
        \STATE $\theta_{k+1}=\theta_k+\alpha_k Q^{\pi_k}$   
        \STATE $\pi_{k+1}(s)=\argmax_{\mu\in\Delta(\mathcal{A})} \left\{\mu^\top \theta_{k+1}(s)+h(\mu)\right\}$ for any $s\in\mathcal{S}$.    
		\ENDFOR
	\end{algorithmic}
\end{algorithm} 

\subsection{Reformulation as Doubly Smoothed Policy Iteration}\label{sec:DSPI_SSP}
The development of the DSPI algorithm for SSP problems follows the same roadmap as that for the discounted problems. To that end, we first introduce the smoothed versions of the Bellman operators for the SSP setting. Given a non-negative, bounded, and concave function $\nu:\Delta(\mathcal{A})\to \mathbb{R}$ and $\eta \geq 0$, the smoothed Bellman optimality operator $\widetilde{\mathcal{H}}_\eta:\mathbb{R}^{mn}\to\mathbb{R}^{mn}$ is defined as 
\begin{align*}
    [\widetilde{\mathcal{H}}_\eta(Q)](s,a)
    =\,& \mathcal{R}(s,a) + \sum_{s'\in\mathcal{S}} p(s'\mid s,a)\max_{\mu\in\Delta(\mathcal{A})}\left\{ \mu^\top Q(s') + \eta \nu(\mu) \right\}, 
    \quad \forall\,(s,a)\in\mathcal{S}\times\mathcal{A}. 
\end{align*}
Similarly, given policy $\pi$, the smoothed Bellman operator $\widetilde{\mathcal{H}}^\pi_\eta:\mathbb{R}^{mn}\to\mathbb{R}^{mn}$ associated with $\pi$ is defined as 
\begin{align*}
    [\widetilde{\mathcal{H}}^\pi_\eta(Q)](s,a)
    =\,& \mathcal{R}(s,a) + \sum_{s'\in\mathcal{S}} p(s'\mid s,a)\left\{ \pi(s')^\top Q(s') + \eta \nu(\pi(s')) \right\}, 
    \quad \forall\,(s,a)\in\mathcal{S}\times\mathcal{A}.
\end{align*}
Similar to the discounted setting, the smoothed Bellman operator $\widetilde{\mathcal{H}}^\pi_\eta$ is related to $\widetilde{\mathcal{H}}^\pi$ as $\widetilde{\mathcal{H}}^{\pi}_{\eta}(Q) = \widetilde{\mathcal{H}}^{\pi}(Q) + \eta g(\pi)$, where the operator $g:\mathbb{R}^{mn}\to\mathbb{R}^{mn}$ is defined as 
\begin{align*}
    [g(\pi)](s,a) = \sum_{s'\in\mathcal{S}} p(s'\mid s, a)\; \nu(\pi(s'))
\end{align*}
for all $(s,a)\in\mathcal{S}\times\mathcal{A}$. Moreover, the operators $\widetilde{\mathcal{H}}_\eta$ and $\widetilde{\mathcal{H}}^\pi_\eta$ for arbitrary stationary policy $\pi$ and for $\eta \geq 0$ are non-expansive with respect to the $\ell_\infty$ norm and are monotonic. These properties can be established using arguments similar to those in Appendix~\ref{ap:smooth_Bellman_properties} and hence are omitted. 

With the smoothed versions of the Bellman operators defined, we can now formulate the DSPI algorithm for the SSP problem, which differs from that in the discounted setting only with respect to the Bellman operators. We state the algorithm below. 
\begin{algorithm}[ht]\caption{Doubly Smoothed Policy Iteration for Stochastic Shortest Path Problems}\label{alg:DSPI_SSP}
	\begin{algorithmic}[1]
		\STATE \textbf{Input:} Initialize $\pi_0(s)\in \argmax_{\mu\in\Delta(\mathcal{A})}\nu(\mu)$ for all $s\in\mathcal{S}$ and $\bar{Q}_0=\mathbf{0}$.
		\FOR{$k=0,1,2,3,\cdots$}
        \STATE Maintain a running average of past $Q$-functions: $\bar{Q}_{k+1}=(1-\beta_k)\bar{Q}_k+\beta_k Q^{\pi_k}$,
        where $\beta_k\in (0,1]$ is the stepsize.
        \STATE Choose $\pi_{k+1}$ such that  $\widetilde{\mathcal{H}}_{\eta_k}^{\pi_{k+1}}(\bar{Q}_{k+1})=\widetilde{\mathcal{H}}_{\eta_k}(\bar{Q}_{k+1})$,
        where $\eta_k=\tau \prod_{j=1}^k(1-\beta_j)$ with $\tau\geq 0$ being a tunable parameter.
		\ENDFOR
	\end{algorithmic}
\end{algorithm}

Observe that the policy update is now expressed in terms of the SSP analogue of the smoothed Bellman operators. A concrete way to implement the update step remains the same as that in the discounted setting:
\begin{align}\label{eq:DSPI_SSP_explicit}
        \pi_{k+1}(s)\in \argmax_{\mu \in \Delta(\mathcal{A})} \Big\{
        \mu^\top \bar{Q}_{k+1}(s)
        +\tau \prod_{j=1}^k(1-\beta_j)\nu(\mu)
    \Big\},\quad \forall\,s\in\mathcal{S}.
\end{align}
However, as discussed in Section \ref{sec:DSPI}, the Bellman operator viewpoint facilitates a simplified analysis and leads to improved convergence rates, leveraging the properties of the Bellman operators. 

The DSPI algorithm includes the NPG algorithm as a special case, as stated below.
\begin{proposition}\label{prop:equivalence_SSP}
    In Algorithm \ref{alg:DSPI_SSP}, when choosing $\beta_k = \alpha_k / \sum_{i=0}^k \alpha_i$, $\tau=1/\alpha_0$, and $\nu(\cdot)=h(\cdot)$, the sequence of policies generated by Algorithm~\ref{alg:DSPI_SSP} and Algorithm \ref{alg:PDA_SSP} are identical.
\end{proposition}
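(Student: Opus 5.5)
The argument will mirror the proof of Proposition~\ref{prop:equivalence}. The key observation is that the equivalence is purely algebraic. It concerns only the policy-update rules, namely Line~3--4 of Algorithm~\ref{alg:PDA_SSP} versus Line~3 of Algorithm~\ref{alg:DSPI_SSP} together with \eqref{eq:DSPI_SSP_explicit}. It does not involve the Bellman operators or the absence of discounting. Under Assumption~\ref{as:proper-all}, every iterate $\pi_k$ is proper, so $Q^{\pi_k}$ is well defined and finite at every step, and both algorithms are well posed. Both algorithms start from the uniform policy, which lies in $\argmax_\mu h(\mu)$ and is therefore a valid initialization for Algorithm~\ref{alg:DSPI_SSP} with $\nu=h$. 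So I will argue by induction that, if the two algorithms share $\pi_0,\dots,\pi_k$, then they produce the same $\pi_{k+1}$. Since the $Q$-functions fed into the updates then coincide, this closes the induction.

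First, I will unroll Line~3 of Algorithm~\ref{alg:PDA_SSP} to get $\theta_{k+1}=\sum_{i=0}^k\alpha_iQ^{\pi_i}$. Next, I will divide the objective in Line~4 by $\sum_{i=0}^k\alpha_i>0$. This scaling does not change the $\argmax$, and with $\tau=1/\alpha_0$ the objective becomes
\[
\mu^\top\frac{\sum_{i=0}^k\alpha_iQ^{\pi_i}(s)}{\sum_{i=0}^k\alpha_i}+\frac{\alpha_0\tau}{\sum_{i=0}^k\alpha_i}h(\mu).
\]
Finally, I will reuse the two identities established in the proof of Proposition~\ref{prop:equivalence}:
\[
\bar Q_{k+1}=\frac{\sum_{i=0}^k\alpha_iQ^{\pi_i}}{\sum_{i=0}^k\alpha_i}
\qquad\text{and}\qquad
\prod_{j=1}^k(1-\beta_j)=\frac{\alpha_0}{\sum_{i=0}^k\alpha_i}.
\]
Both follow by the same induction, starting from $\beta_0=1$ and using $1-\beta_{k+1}=\sum_{i=0}^k\alpha_i/\sum_{i=0}^{k+1}\alpha_i$. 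The first identity is stated for the running average in Line~3 and does not depend on which MDP produced the $Q^{\pi_i}$, so it transfers verbatim.

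The only point requiring care is identifying Line~4 of Algorithm~\ref{alg:DSPI_SSP} with \eqref{eq:DSPI_SSP_explicit}, so that the DSPI policy is uniquely determined. Strict concavity of $h$ makes the $\argmax$ unique whenever $\eta_k>0$, which holds here because $\tau>0$ and $\beta_j<1$ for $j\ge1$ (given $\alpha_j$ finite). A subtle point in the SSP case is that the operator condition only constrains $\pi_{k+1}(s')$ at states $s'$ reachable with positive probability. Since the statement refers to the explicit implementation, as in Proposition~\ref{prop:equivalence}, I will use \eqref{eq:DSPI_SSP_explicit} directly, which is stated for all $s\in\mathcal S$. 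This gives the desired equality of policies at every state.
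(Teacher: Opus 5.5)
Your proposal is correct and follows essentially the same route as the paper, which simply defers to the proof of Proposition~\ref{prop:equivalence}. That route is to unroll $\theta_{k+1}$, rescale the objective by $\sum_{i=0}^k\alpha_i$, and verify by induction the identities for $\bar Q_{k+1}$ and $\prod_{j=1}^k(1-\beta_j)$, which carry over verbatim because they never involve the Bellman operators. Your additional remarks are sound refinements that the paper leaves implicit: well-posedness of $Q^{\pi_k}$ under Assumption~\ref{as:proper-all}, validity of the uniform initialization, and using \eqref{eq:DSPI_SSP_explicit} to pin down $\pi_{k+1}$ at states the operator condition does not constrain (an issue equally present in the discounted case).
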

Since the NPG and DSPI updates for the SSP problems remain the same as those for the discounted setting, the proof of the above result remains the same as that of Proposition \ref{prop:equivalence}, and hence we omit it here.  

\subsection{Finite-Time Analysis of DSPI for Stochastic Shortest Path Problems}
Following is the main result showing convergence of Algorithm~\ref{alg:DSPI_SSP}. For simplicity, we state it only for using a constant stepsize. 
\begin{theorem}\label{thm:DSPI-SSP}
    Consider the sequence $\{\pi_k\}$ generated by Algorithm~\ref{alg:DSPI_SSP}. Let the stepsizes be chosen as $\beta_0 = 1$ and $\beta_k \equiv \beta \in (0,1]$ for all $k \ge 1$. Then, under Assumption \ref{as:proper-all}, there exists a $\kappa\in(0,1)$ such that 
    \begin{align*}
        \|V^* - V^{\pi_k}\|_{\infty} \leq (1-(1-\kappa)\beta)^{k-1} \left[ \frac{\|V^*-V^{\pi_0}\|_\infty + 2\tau\nu_{\max}}{1-\kappa} \right], \quad \forall\, k \geq 1.
    \end{align*}
\end{theorem}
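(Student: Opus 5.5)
The plan is to reuse the two-step roadmap of Theorem~\ref{thm:DSPI}, replacing the $\ell_\infty$ norm by a weighted sup-norm $\|Q\|_{\xi}:=\max_{(s,a)}|Q(s,a)|/\xi(s)$ under which all the SSP Bellman operators contract.

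\textbf{Weighted norm.} Under Assumption~\ref{as:proper-all}, the classical construction (Lemma~\ref{le:weighted_sup_norm}, following \cite{bertsekas1996neuro}) gives a positive vector $\xi$ on $\mathcal{S}$ with $\max_s \xi(s)=1$, $\min_s \xi(s)\ge 1-\kappa$, and
\begin{align*}
\sum_{s'\in\mathcal{S}} p(s'\mid s,a)\,\xi(s')\le \kappa\,\xi(s)\quad\text{for all }(s,a),
\end{align*}
for some $\kappa\in(0,1)$. One takes $\xi$ to be the normalized maximal expected time to termination. From this inequality, $\widetilde{\mathcal{H}}$, $\widetilde{\mathcal{H}}^\pi$, $\widetilde{\mathcal{H}}_\eta$ and $\widetilde{\mathcal{H}}^\pi_\eta$ are all $\kappa$-contractions in $\|\cdot\|_\xi$. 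The argument mirrors Lemma~\ref{le:smooth_Bellman_properties}: a difference of maxima is bounded by $\max_\mu|\mu^\top(Q_1-Q_2)(s')|\le \xi(s')\|Q_1-Q_2\|_\xi$. Two further facts will be needed: the bound $g(\pi)\le \nu_{\max}\sum_{s'}p(s'\mid s,a)\mathbf 1$, and the norm equivalence $(1-\kappa)\|x\|_\infty\le \|x\|_\xi\le \|x\|_\infty$ (up to the exact normalization, which fixes the constant).

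\textbf{Monotonic improvement.} The proof of Lemma~\ref{le:monotone} used only affineness of $\widetilde{\mathcal{H}}^\pi_\eta$, the identity $\widetilde{\mathcal{H}}^\pi_\eta=\widetilde{\mathcal{H}}^\pi+\eta g(\pi)$, the update rule, and monotonicity. It therefore transfers verbatim and gives $Q^{\pi_k}\le \widetilde{\mathcal{H}}^{\pi_{k+1}}(Q^{\pi_k})$. The limit step $[\widetilde{\mathcal{H}}^{\pi_{k+1}}]^n(Q^{\pi_k})\to Q^{\pi_{k+1}}$ holds because $\widetilde{\mathcal{H}}^{\pi_{k+1}}$ is a $\|\cdot\|_\xi$-contraction, since $\pi_{k+1}$ is proper. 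This yields $Q^{\pi_k}\le Q^{\pi_{k+1}}$ and $\bar Q_k\le Q^{\pi_k}$.

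\textbf{Contractive recursion.} I would redo the chain \eqref{eqeq} with $\|\cdot\|_\xi$. Writing $Q^*-\bar Q_{k+1}$ as in \eqref{eqeq:a}--\eqref{eqeq:c}, the vector is componentwise nonnegative and bounded by
\begin{align*}
(1-\beta)(Q^*-\bar Q_k)+\beta\bigl(\widetilde{\mathcal{H}}(Q^*)-\widetilde{\mathcal{H}}(\bar Q_k)\bigr)+\beta\eta_{k-1}g(\pi_k).
\end{align*}
Dividing by $\xi(s)$ and using $\eta_{k-1}[g(\pi_k)](s,a)/\xi(s)\le \eta_{k-1}\nu_{\max}/(1-\kappa)$ gives
\begin{align*}
\|Q^*-\bar Q_{k+1}\|_\xi\le (1-(1-\kappa)\beta)\|Q^*-\bar Q_k\|_\xi+\frac{\beta\eta_{k-1}\nu_{\max}}{1-\kappa}.
\end{align*}

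\textbf{Translation and unrolling.} As in Lemma~\ref{le:translation}, we have $\|V^*-V^{\pi_{k+1}}\|_\infty\le \|Q^*-\bar Q_{k+1}\|_\infty+\eta_k\nu_{\max}\le \|Q^*-\bar Q_{k+1}\|_\xi/(1-\kappa)+\eta_k\nu_{\max}$. For the initial term, $\|Q^*-Q^{\pi_0}\|_\xi\le \|V^*-V^{\pi_0}\|_\infty$, since $Q^*-Q^{\pi_0}=\sum_{s'}p(\cdot)(V^*-V^{\pi_0})(s')$ with nonnegative entries and $\xi(s)\ge 1-\kappa\ge\sum_{s'}p$ up to normalization. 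If this normalization is delicate, I would instead settle for $\|V^*-V^{\pi_0}\|_\infty/(1-\kappa)$, which is still consistent with the bracket in the statement.

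Unrolling is then the geometric-sum computation of Appendix~\ref{ap:algebra}, with $\gamma$ replaced by $\kappa$ and $\eta_{i-1}=\tau(1-\beta)^{i-1}$. The sum $\beta\sum_{i}(1-\beta)^{i-1}(1-(1-\kappa)\beta)^{k-i}$ is at most $(1-(1-\kappa)\beta)^k/(1-\kappa)$. Combining this with the $E_3$-type term $\eta_k\nu_{\max}$ gives a bracket of at most $\bigl(\|V^*-V^{\pi_0}\|_\infty+2\tau\nu_{\max}\bigr)/(1-\kappa)$.

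\textbf{Main obstacle.} I expect the main difficulty to be the constant bookkeeping around the weighted norm: choosing the normalization of $\xi$ so that the contraction factor, the $\ell_\infty$ equivalence, and the bound on $g$ all produce exactly the single factor $(1-\kappa)^{-1}$ and the factor $2$ in front of $\tau\nu_{\max}$. The structural steps are direct transfers of the discounted proof.
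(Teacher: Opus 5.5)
Your structure matches the paper's: monotone improvement transferred verbatim, a weighted sup-norm in which $\widetilde{\mathcal{H}}$ contracts with modulus $\kappa$, a recursion for $\|Q^*-\bar{Q}_k\|_\xi$, and translation to $V$. (You place $\xi$ on states normalized to $\max_s\xi(s)=1$; the paper places it on state--action pairs with $\xi>\mathbf{1}$.) The gap is in the constant bookkeeping you flagged, and as written it does not close.

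Write $a=1-(1-\kappa)\beta$ and $b=1-\beta$, so $a-b=\kappa\beta$. The geometric sum is exactly
\[
\beta\sum_{i=1}^{k}b^{i-1}a^{k-i}=\frac{a^{k}-b^{k}}{\kappa},
\]
not at most $a^k/(1-\kappa)$. For $\beta=1$ it equals $\kappa^{k-1}$, which exceeds $\kappa^k/(1-\kappa)$ whenever $\kappa<1/2$. With your perturbation $\beta\eta_{k-1}\nu_{\max}/(1-\kappa)$, the accumulated regularization term is therefore $\frac{\tau\nu_{\max}}{\kappa(1-\kappa)}(a^k-b^k)$. This is not bounded by $\frac{2\tau\nu_{\max}}{1-\kappa}a^k$ when $\kappa<1/2$.

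What is missing is the SSP analogue of the factor $\gamma$ in $f(\pi)\le\gamma\nu_{\max}\mathbf{1}$. In the discounted proof that factor cancels the $1/\gamma$ produced by the same sum. The paper obtains it in Appendix~\ref{ap:bound_g}: applying the contraction to $\mathbf{1}$ versus $\mathbf{0}$ gives $g(\pi)\le\kappa'\nu_{\max}\mathbf{1}$ with $\kappa'=\min\{1,\kappa/(1-\kappa)\}$. The ratio $\kappa'/\kappa\le 2$ is exactly where the factor $2$ in the statement comes from.

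In your normalization the same idea takes one line:
\[
\sum_{s'}p(s'\mid s,a)\le\frac{1}{1-\kappa}\sum_{s'}p(s'\mid s,a)\,\xi(s')\le\frac{\kappa}{1-\kappa}\,\xi(s).
\]
Hence $[g(\pi)](s,a)/\xi(s)\le\frac{\kappa}{1-\kappa}\nu_{\max}$. The regularization terms, including $\eta_k\nu_{\max}=\tau\nu_{\max}b^k$, then total at most $\frac{\tau\nu_{\max}}{1-\kappa}a^k$. Alternatively, since the theorem only asserts that some $\kappa$ exists, you could enlarge $\kappa$ to $\max\{\kappa,1/2\}$. All properties of $\xi$ survive, and $1/\kappa\le 2$ then rescues your cruder bound.

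The normalization bookkeeping is also inconsistent. With $\max_s\xi(s)=1$ and $\min_s\xi(s)\ge 1-\kappa$, the equivalence is $\|x\|_\infty\le\|x\|_\xi\le\|x\|_\infty/(1-\kappa)$. That is the reverse of what you wrote; your version holds for the paper's normalization $\xi>\mathbf{1}$.

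For the initial term, your justification $1-\kappa\ge\sum_{s'}p(s'\mid s,a)$ fails at any pair $(s,a)$ that stays in $\mathcal{S}$ with probability one. The displayed inequality gives $\|Q^*-Q^{\pi_0}\|_\xi\le\frac{\kappa}{1-\kappa}\|V^*-V^{\pi_0}\|_\infty$ instead. Your fallback $\|V^*-V^{\pi_0}\|_\infty/(1-\kappa)$ is also correct. However, the translation step must then use $\|Q^*-\bar{Q}_{k+1}\|_\infty\le\|Q^*-\bar{Q}_{k+1}\|_\xi$ directly. Dividing by $1-\kappa$ again, as you wrote, leaves a factor $(1-\kappa)^{-2}$ on the initial term.

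With both fixes you get
\[
\|V^*-V^{\pi_{k+1}}\|_\infty\le a^k\,\frac{\kappa\|V^*-V^{\pi_0}\|_\infty+\tau\nu_{\max}}{1-\kappa},
\]
which implies the theorem.
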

The above result shows that Algorithm~\ref{alg:DSPI_SSP} achieves global geometric convergence. The convergence rate is similar to that in discounted tabular setting, except for the factor $1/(1-\kappa)$ which arises due to the Bellman optimality operator not being a contraction mapping with respect to the $\ell_\infty$ norm. Moreover, it leads to the following iteration complexity. 
\begin{corollary}\label{coro:SC_DSPI_SSP}
    Let $\beta_0=1$, $\beta_k=1/2$ for $k\geq1$, and $\tau=1/\nu_{\max}$ in Algorithm \ref{alg:DSPI_SSP}. Then for any $\epsilon>0$, to achieve $\|V^* - V^{\pi_k}\|_{\infty} \leq \epsilon$, the iteration complexity is $\mathcal{O}((1-\kappa)^{-1}\log(\epsilon^{-1}(1-\kappa)^{-1})$. 
\end{corollary}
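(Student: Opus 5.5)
The plan is to specialize Theorem~\ref{thm:DSPI-SSP} to the prescribed parameters and then invert the resulting geometric bound. With $\beta_0=1$, $\beta_k\equiv 1/2$ and $\tau=1/\nu_{\max}$ (we assume $\nu_{\max}>0$; if $\nu\equiv 0$, take $\tau=0$ and the same argument applies), we have $\tau\nu_{\max}=1$. Theorem~\ref{thm:DSPI-SSP} then gives
\begin{align*}
    \|V^*-V^{\pi_k}\|_\infty \le \Bigl(1-\tfrac{1-\kappa}{2}\Bigr)^{k-1}\,\frac{\|V^*-V^{\pi_0}\|_\infty+2}{1-\kappa}
    \le \exp\Bigl(-\tfrac{(1-\kappa)(k-1)}{2}\Bigr)\frac{\|V^*-V^{\pi_0}\|_\infty+2}{1-\kappa},
\end{align*}
where the second step uses $1+x\le e^x$.

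The remaining step, and the one I expect to be the main obstacle, is to bound the initial gap $\|V^*-V^{\pi_0}\|_\infty$ in terms of $\kappa$. Otherwise this gap would enter the complexity as an extra instance-dependent constant inside the logarithm. I would use the construction behind Lemma~\ref{le:weighted_sup_norm}, following the standard argument of \cite{bertsekas1996neuro}. There the weight vector $\xi$ is taken to be (essentially) the maximal expected termination time over all stationary policies, which is finite under Assumption~\ref{as:proper-all}. The contraction factor is then $\kappa=\max_s(\xi(s)-1)/\xi(s)$, which yields $\xi(s)\le 1/(1-\kappa)$ for every $s$.

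Since rewards lie in $[-1,0]$, every proper policy $\pi$ satisfies $-\xi(s)\le V^\pi(s)\le 0$, because the total reward is bounded in magnitude by the expected time to termination. Applying this to both $\pi^*$ and $\pi_0$ gives $\|V^*-V^{\pi_0}\|_\infty\le \max_s \xi(s)\le 1/(1-\kappa)$. If the weights in Lemma~\ref{le:weighted_sup_norm} are normalized differently, I would redo this step with the corresponding normalization; the bound should only change by constant factors.

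Substituting this bound, the bracketed factor is at most $\bigl(1/(1-\kappa)+2\bigr)/(1-\kappa)\le 3/(1-\kappa)^2$. It therefore suffices to have $3(1-\kappa)^{-2}\exp\bigl(-(1-\kappa)(k-1)/2\bigr)\le\epsilon$. Solving for $k$, this holds whenever
\begin{align*}
    k\ \ge\ 1+\frac{2}{1-\kappa}\log\Bigl(\frac{3}{\epsilon(1-\kappa)^2}\Bigr),
\end{align*}
and the right-hand side is $\mathcal{O}\bigl((1-\kappa)^{-1}\log(\epsilon^{-1}(1-\kappa)^{-1})\bigr)$, since $\log\bigl(3\epsilon^{-1}(1-\kappa)^{-2}\bigr)\le 2\log\bigl(3\epsilon^{-1}(1-\kappa)^{-1}\bigr)$ for $\epsilon\le 1$. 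This gives the claimed iteration complexity.
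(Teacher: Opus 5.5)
Your proposal is correct and matches the paper's intended argument. The paper gives no separate proof: it treats the corollary as a direct specialization of Theorem~\ref{thm:DSPI-SSP} with $\beta=1/2$ and $\tau\nu_{\max}=1$, followed by inverting the geometric bound. Your estimate $\|V^*-V^{\pi_0}\|_\infty\le(1-\kappa)^{-1}$ from the expected-termination-time construction of $\xi$ fills in the step the paper leaves implicit, in the same way that $\|V^*-V^{\pi_0}\|_\infty\le(1-\gamma)^{-1}$ is used tacitly in the discounted corollary. With the paper's state--action indexing of $\xi$, that estimate reads $V^{\pi}(s)\ge-\max_a\xi(s,a)\ge-(1-\kappa)^{-1}$.
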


\subsection{Finite-Time Analysis of NPG for Stochastic Shortest Path Problems}
As a special case of DSPI, NPG also enjoys global geometric convergence, as stated in the following theorem. 
\begin{theorem}\label{thm:NPG_SSP}
    Consider $\{\pi_k\}$ generated by Algorithm~\ref{alg:PDA_SSP}.  
    When choosing $\alpha_0=\log(m)$ and $\alpha_{k} = \beta\alpha_0/(1-\beta)^{k}$ for all $k\geq 1$ and for any $\beta\in (0,1)$ , we have 
        \begin{align*}
            \|V^* - V^{\pi_k}\|_\infty
            \le (1 - (1 - \kappa)\beta)^{k-1}
            \left[ \frac{\|V^*-V^{\pi_0}\|_\infty + 2}{1-\kappa}\right],
            \quad \forall\, k \geq 1.
        \end{align*}
\end{theorem}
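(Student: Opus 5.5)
The plan is to combine Proposition~\ref{prop:equivalence_SSP} with Theorem~\ref{thm:DSPI-SSP}, exactly as Theorem~\ref{thm:NPG} was obtained from Proposition~\ref{prop:equivalence} and Theorem~\ref{thm:DSPI} in Appendix~\ref{pf:thm:NPG}.

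\textbf{Step 1: identify the DSPI stepsizes.} Apply Proposition~\ref{prop:equivalence_SSP} with $\beta_k=\alpha_k/\sum_{i=0}^k\alpha_i$, $\tau=1/\alpha_0$, and $\nu(\cdot)=h(\cdot)$. This shows that the policies of Algorithm~\ref{alg:PDA_SSP} coincide with those of Algorithm~\ref{alg:DSPI_SSP}. One point needs care: the initial policies must match. The uniform policy maximizes the Shannon entropy, so it is a valid choice of $\pi_0(s)\in\argmax_\mu h(\mu)$ in Algorithm~\ref{alg:DSPI_SSP}. Moreover, since $h$ is strongly concave, the update \eqref{eq:DSPI_SSP_explicit} is uniquely determined, so the two policy sequences agree.

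\textbf{Step 2: compute $\beta_k$ for the given schedule.} The computation is the same geometric-sum calculation as in Appendix~\ref{pf:thm:NPG}. First, $\beta_0=1$. For $k\ge1$, we have $\sum_{i=0}^k\alpha_i=\alpha_0(1-\beta)^{-k}$, and hence $\beta_k=\beta$. So the hypotheses of Theorem~\ref{thm:DSPI-SSP} hold with constant stepsize $\beta\in(0,1)\subset(0,1]$.

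\textbf{Step 3: evaluate the constant.} With $\nu=h$ we have $\nu_{\max}=\max_{\mu\in\Delta(\mathcal{A})}h(\mu)=\log(m)$. With $\tau=1/\alpha_0=1/\log(m)$, this gives $\tau\nu_{\max}=1$. Substituting into Theorem~\ref{thm:DSPI-SSP} yields the bound
\begin{align*}
\|V^*-V^{\pi_k}\|_\infty\le(1-(1-\kappa)\beta)^{k-1}\left[\frac{\|V^*-V^{\pi_0}\|_\infty+2}{1-\kappa}\right],
\end{align*}
with the same $\kappa\in(0,1)$ supplied by Assumption~\ref{as:proper-all}.

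There is no real obstacle here. The only points needing attention are the matching of initializations in Step 1 and the requirement $m\ge2$, so that $\log(m)>0$ and $\tau$ is well defined. The case $m=1$ is trivial, because then every policy is optimal.
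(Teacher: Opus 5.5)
Your proposal is correct and follows the paper's route exactly. You check that the geometric schedule gives $\beta_0=1$ and $\beta_k\equiv\beta$ (the computation from Appendix~\ref{pf:thm:NPG}), invoke Proposition~\ref{prop:equivalence_SSP} with $\nu=h$ and $\tau=1/\log(m)$, and substitute $\tau\nu_{\max}=1$ into Theorem~\ref{thm:DSPI-SSP}. Your added remarks on the matching uniform initialization and the degenerate case $m=1$ are details the paper leaves implicit.
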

The proof of Theorem~\ref{thm:NPG_SSP} follows from combining Proposition~\ref{prop:equivalence_SSP} and Theorem~\ref{thm:DSPI-SSP}. To apply Proposition~\ref{prop:equivalence_SSP}, we first verify that $\{\alpha_k\}$ as chosen in Theorem~\ref{thm:NPG_SSP} corresponds to $\{\beta_k\}$ as chosen in Theorem~\ref{thm:DSPI-SSP}. We then substitute the specific choices $\nu(\cdot) = h(\cdot)$ and $\tau = 1/\log(m)$. These steps follow the arguments in Appendix~\ref{pf:thm:NPG} and hence are omitted. We now state the iteration complexity of NPG.
\begin{corollary}\label{coro:NPG_SSP}
    Let $\alpha_0=\log(m)$ and $\alpha_k=\log(m)\cdot 2^{k-1}$ for all $k\geq 1$ in Algorithm \ref{alg:PDA_SSP}. Then for any $\epsilon>0$, to achieve $\|V^*-V^{\pi_k}\|_\infty\le \epsilon$ the iteration complexity is $\mathcal{O}((1-\kappa)^{-1}\log(\epsilon^{-1}(1-\kappa)^{-1})$. 
\end{corollary}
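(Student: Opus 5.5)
The statement to prove is Corollary~\ref{coro:NPG_SSP}. I would derive it directly from Theorem~\ref{thm:NPG_SSP} by specializing $\beta=1/2$, so the work is to match the stepsize schedule and then solve the bound for $k$.

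\textbf{Step 1: matching the schedule.} With $\beta=1/2$, the schedule of Theorem~\ref{thm:NPG_SSP} reads $\alpha_k=\beta\alpha_0/(1-\beta)^k=\tfrac12\log(m)\,2^{k}=\log(m)\,2^{k-1}$. This is exactly the schedule in the corollary. Theorem~\ref{thm:NPG_SSP} then gives, for all $k\ge1$,
\begin{align*}
\|V^*-V^{\pi_k}\|_\infty\le \Bigl(1-\tfrac{1-\kappa}{2}\Bigr)^{k-1}\frac{\|V^*-V^{\pi_0}\|_\infty+2}{1-\kappa}.
\end{align*}

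\textbf{Step 2: bounding the initial gap.} I need a bound on $\|V^*-V^{\pi_0}\|_\infty$ that is independent of $\epsilon$. The rewards lie in $[-1,0]$, so $V^*\le 0$. Under Assumption~\ref{as:proper-all}, $\pi_0$ is proper, so $V^{\pi_0}$ is finite. Hence $C_0:=\|V^*-V^{\pi_0}\|_\infty$ is a finite problem constant.

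If a bound in terms of $\kappa$ is wanted, I would use the weighted sup-norm contraction (Lemma~\ref{le:weighted_sup_norm}). It gives $|V^{\pi}|\lesssim \|\xi\|_\infty/(1-\kappa)$ for the weight vector $\xi$. This only adds a logarithmic term, so it does not change the stated order. I would treat $C_0$ as an absolute problem-dependent constant hidden in $\mathcal{O}(\cdot)$.

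\textbf{Step 3: solving for $k$.} Using $1-x\le e^{-x}$, the bound is at most $\exp\!\bigl(-(1-\kappa)(k-1)/2\bigr)(C_0+2)/(1-\kappa)$. This is at most $\epsilon$ as soon as
\begin{align*}
k\ge 1+\frac{2}{1-\kappa}\log\!\Bigl(\frac{C_0+2}{\epsilon(1-\kappa)}\Bigr),
\end{align*}
which is $\mathcal{O}\bigl((1-\kappa)^{-1}\log(\epsilon^{-1}(1-\kappa)^{-1})\bigr)$.

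\textbf{Main obstacle.} The only non-routine point is Step~2: making sure the initial-gap constant does not secretly depend on $\epsilon$, and deciding how it is absorbed. I would first try to absorb it into the $\mathcal{O}$ as a problem constant. If an explicit dependence is needed, I would bound it via the weighted-norm contraction factor $\kappa$, which at worst contributes an extra $\log$ factor.
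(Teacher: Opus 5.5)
Your proposal is correct and is the argument the paper intends (it gives no separate proof of this corollary): specialize Theorem~\ref{thm:NPG_SSP} to $\beta=1/2$, where indeed $\beta\alpha_0/(1-\beta)^k=\log(m)\,2^{k-1}$, and invert the geometric bound using $1-x\le e^{-x}$. Of your two options for the initial gap, the explicit one is preferable, since the stated $\mathcal{O}(\cdot)$ involves only $\kappa$ and $\epsilon$. The weighted-norm result behind Lemma~\ref{le:weighted_sup_norm} also makes every $\widetilde{\mathcal{H}}^{\pi}$ a $\kappa$-contraction in $\|\cdot\|_\xi$, which gives $\|Q^{\pi}\|_\xi\le 1/(1-\kappa)$. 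Corollary~\ref{coro:norm-equiv} then yields $\|Q^\pi\|_\infty\le (1-\kappa)^{-2}$, hence $\|V^*-V^{\pi_0}\|_\infty\le (1-\kappa)^{-2}$, which only changes the logarithm by a constant factor.
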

As discussed, the analysis of NPG for SSP problems has received relatively limited attention compared to the discounted setting. To the best of our knowledge, the only existing analysis of NPG in the SSP setting is due to \cite{lee2025policy}, where it is studied as part of a broader framework of policy gradient methods for undiscounted total-reward MDPs. For NPG, \cite{lee2025policy} establish an $\mathcal{O}(1/k)$ convergence rate with a constant stepsize and a geometric convergence rate with adaptive stepsizes. Their analysis is based on the mirror descent framework of \cite{xiao2022convergence}, resulting in convergence rates that depend on a distribution mismatch coefficient; moreover, the adaptive stepsizes depend on the optimal policy. In our setting, Theorem~\ref{thm:NPG_SSP} establishes distribution-free geometric convergence using a simple geometrically increasing stepsize. As mentioned earlier, policy gradient methods for SSP problems have also been studied by \cite{chen2022policy}, but in an episodic setting and from a regret perspective; therefore, their results are not directly comparable to ours. 

\subsection{Proof of Theorem \ref{thm:DSPI-SSP}}\label{sec:pf:DSPI-SSP}
The proof follows the same line of argument as before, establishing monotonic policy improvement and a contractive recursion. The following lemma formalizes the monotonicity of the DSPI policy updates and is proved in Appendix~\ref{pf:le:monotone_SSP}. 
\begin{lemma}\label{le:monotone_SSP}
    Under Assumption \ref{as:proper-all}, the policies $\pi_k$ generated by Algorithm \ref{alg:DSPI_SSP} satisfy $Q^{\pi_{k+1}} \geq Q^{\pi_k}$ for all $k \geq 0$. 
\end{lemma}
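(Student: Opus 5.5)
The plan is to mirror the proof of Lemma~\ref{le:monotone} line by line, replacing $\mathcal{H}^\pi_\eta$, $\mathcal{H}_\eta$, $f$ by $\widetilde{\mathcal{H}}^\pi_\eta$, $\widetilde{\mathcal{H}}_\eta$, $g$. The only place the discount factor entered there was the final limiting step. We will therefore first reduce the claim to the one-step inequality $Q^{\pi_k}\le \widetilde{\mathcal{H}}^{\pi_{k+1}}(Q^{\pi_k})$ for all $k\ge 0$.

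\textbf{Reduction.} Given the one-step inequality, monotonicity of $\widetilde{\mathcal{H}}^{\pi_{k+1}}$ yields $Q^{\pi_k}\le [\widetilde{\mathcal{H}}^{\pi_{k+1}}]^N(Q^{\pi_k})$ for all $N$. It then remains to show $[\widetilde{\mathcal{H}}^{\pi_{k+1}}]^N(Q)\to Q^{\pi_{k+1}}$ for every $Q$, which is where Assumption~\ref{as:proper-all} is needed. Write $\widetilde{\mathcal{H}}^{\pi}(Q)=\mathcal{R}+P_\pi Q$, where $P_\pi$ is the substochastic matrix on $\mathcal{S}\times\mathcal{A}$ with entries $p(s'|s,a)\pi(a'|s')$ restricted to nonterminal states. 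Properness of $\pi_{k+1}$ means that from every starting pair, termination within $n+1$ steps has positive probability. Hence every row sum of $P_\pi^{n+1}$ is strictly less than one, and $\|P_\pi^{n+1}\|_\infty<1$. It follows that $P_\pi^N\to 0$, so $[\widetilde{\mathcal{H}}^{\pi}]^N(Q)=\sum_{t<N}P_\pi^t\mathcal{R}+P_\pi^N Q\to Q^{\pi}$. Alternatively, one could invoke the weighted sup-norm contraction mentioned before Assumption~\ref{as:proper-all}. Either route also confirms that $Q^{\pi_k}$ is well defined for each iterate, since every $\pi_k$ is stationary and hence proper.

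\textbf{One-step inequality.} The argument uses only algebraic properties that carry over verbatim. Each $\widetilde{\mathcal{H}}^\pi_\eta$ is affine, with $\widetilde{\mathcal{H}}^\pi_\eta=\widetilde{\mathcal{H}}^\pi+\eta g(\pi)$ and $g(\pi)\ge 0$. We also have $\widetilde{\mathcal{H}}^\pi_\eta(Q)\le \widetilde{\mathcal{H}}_\eta(Q)$ for every $\pi$, with equality at $\pi_{k+1}$ and $Q=\bar Q_{k+1}$ by Line~4, and the stepsizes satisfy $\eta_k=(1-\beta_k)\eta_{k-1}$.

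For $k=0$, we use $\bar Q_1=Q^{\pi_0}$ together with $g(\pi_1)\le g(\pi_0)$, which holds by the initialization $\pi_0(s)\in\argmax\nu$. This gives
\begin{align*}
Q^{\pi_0}=\widetilde{\mathcal{H}}^{\pi_0}_{\eta_0}(\bar Q_1)-\eta_0 g(\pi_0)\le \widetilde{\mathcal{H}}^{\pi_1}_{\eta_0}(\bar Q_1)-\eta_0 g(\pi_1)=\widetilde{\mathcal{H}}^{\pi_1}(Q^{\pi_0}).
\end{align*}
For $k\ge1$, we write $Q^{\pi_k}=(\bar Q_{k+1}-(1-\beta_k)\bar Q_k)/\beta_k$. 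Using affinity and $\eta_k=(1-\beta_k)\eta_{k-1}$, we express $Q^{\pi_k}$ as
\begin{align*}
\beta_k^{-1}\widetilde{\mathcal{H}}^{\pi_k}_{\eta_k}(\bar Q_{k+1})-(1-\beta_k)\beta_k^{-1}\widetilde{\mathcal{H}}^{\pi_k}_{\eta_{k-1}}(\bar Q_k).
\end{align*}
We then bound the first term above by its $\pi_{k+1}$ counterpart, since $\pi_{k+1}$ is greedy for $\bar Q_{k+1}$. We bound the subtracted term below by its $\pi_{k+1}$ counterpart, since $\pi_k$ is greedy for $\bar Q_k$. Reassembling via affinity gives $\widetilde{\mathcal{H}}^{\pi_{k+1}}(Q^{\pi_k})$.

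The main obstacle is the limiting step $[\widetilde{\mathcal{H}}^{\pi_{k+1}}]^N(Q^{\pi_k})\to Q^{\pi_{k+1}}$, because non-expansiveness in $\|\cdot\|_\infty$ alone does not suffice. We handle it with the spectral argument on $P_\pi^{n+1}$ under Assumption~\ref{as:proper-all}. The algebraic part is a direct copy of Lemma~\ref{le:monotone}.
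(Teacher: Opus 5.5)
Your proposal is correct and follows essentially the same route as the paper. You reduce to the one-step inequality $Q^{\pi_k}\le\widetilde{\mathcal{H}}^{\pi_{k+1}}(Q^{\pi_k})$ and prove it with the same $k=0$ and $k\ge 1$ algebra: affinity, $\eta_k=(1-\beta_k)\eta_{k-1}$, greediness of $\pi_{k+1}$ for $\bar Q_{k+1}$ and of $\pi_k$ for $\bar Q_k$, and $g(\pi_1)\le g(\pi_0)$. The only addition is that you explicitly justify the limit $[\widetilde{\mathcal{H}}^{\pi_{k+1}}]^N(Q^{\pi_k})\to Q^{\pi_{k+1}}$ via $\|P_{\pi}^{n+1}\|_\infty<1$, whereas the paper simply asserts it from properness of $\pi_{k+1}$ under Assumption~\ref{as:proper-all}.
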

\begin{remark}\label{rem:monotone_SSP_weak}
    Although we are assuming that all stationary policies are proper, monotonic policy improvement can be established under the assumption that (1) $\pi_0$ is proper, and (2) for every improper policy $\pi$, there exists $(s,a)\in\mathcal{S}\times\mathcal{A}$ such that $Q^{\pi}(s,a) = -\infty$. We present the proof in Appendix~\ref{ap:monotone_SSP_weak}. Such an assumption is similar to the assumptions formulated in \cite{bertsekas1991analysis}, where the authors require (1) the existence of at least one proper policy, and (2) that every improper policy must lead to $Q^{\pi}(s,a) = -\infty$ for some $(s,a)$. However, as discussed, existing analyses require all policies being proper in order to establish convergence rate. The condition that $\pi_0$ be proper is in view of the initialization in Algorithm \ref{alg:DSPI_SSP}. 
\end{remark}

Following our approach thus far, the next step is to establish a contractive recursion, which relies on the contraction property of the Bellman optimality operator. However, unlike the discounted setting, the Bellman optimality operator $\widetilde{\mathcal{H}}$ in the SSP setting is not a $\ell_\infty$ norm contraction in general; rather, it is a contraction with respect to a weighted $\ell_\infty$ norm \cite{bertsekas1996neuro}. We formally state this property in the following lemma. 
\begin{lemma}[Proposition 2.2 of \cite{bertsekas1996neuro}]\label{le:weighted_sup_norm}
    Under Assumption~\ref{as:proper-all}, there exists $\xi \in \mathbb{R}^{mn}$ such that $\xi > \mathbf{1}$ and $\widetilde{\mathcal{H}}$ is a $\kappa$-contraction with respect to $\|\cdot\|_{\xi}$, where 
    \begin{align*}
        \kappa = \max_{1\leq i\leq mn}\frac{\xi(i)-1}{\xi(i)} \in (0,1) \quad \text{and} \quad \|Q\|_{\xi} \coloneqq \max_{1\leq i \leq mn}\frac{|Q(i)|}{\xi(i)},\;\quad \forall\, Q\in\mathbb{R}^{mn}.
    \end{align*}
\end{lemma}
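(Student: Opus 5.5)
The plan is the classical ``maximal expected termination time'' construction. Pass to an auxiliary SSP on the same states, actions and transition kernel $p$, in which every state--action pair in $\mathcal{S}\times\mathcal{A}$ earns reward $+1$ and the controller maximizes total reward, i.e., the expected number of steps before reaching $s_{\text{tm}}$. Let $T(s,a)$ be the optimal $Q$-function of this problem. The weight vector is then $\xi := T+\mathbf{1}$.

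\textbf{Step 1: $T$ is finite and satisfies a Bellman equation.} For each stationary $\pi$, let $P_\pi$ be the substochastic transition matrix restricted to $\mathcal{S}$. By Assumption~\ref{as:proper-all} and Definition~\ref{def:proper-policy}, every row sum of $P_\pi^n$ is strictly less than $1$, so $\rho(P_\pi)<1$ and the expected termination time under $\pi$ is finite. The auxiliary problem is therefore an SSP in which all stationary policies are proper. The result of \cite{bertsekas1991analysis} quoted above (existence of an optimal proper policy with finite value solving the Bellman equation) does not depend on the sign of the reward. Applying it to the auxiliary problem gives
\begin{align*}
    T(s,a) = 1 + \sum_{s'\in\mathcal{S}} p(s'\mid s,a)\max_{a'\in\mathcal{A}} T(s',a'),\qquad \forall\,(s,a)\in\mathcal{S}\times\mathcal{A},
\end{align*}
with $T$ finite and $T\ge \mathbf{1}$. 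Hence $\xi=T+\mathbf{1}\ge 2\cdot\mathbf{1}>\mathbf{1}$, and $\kappa=\max_i(\xi(i)-1)/\xi(i)\in(0,1)$ because there are finitely many coordinates and each ratio lies strictly between $0$ and $1$.

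\textbf{Step 2: the key inequality.} Since $\sum_{s'\in\mathcal{S}}p(s'\mid s,a)\le 1$, Step 1 gives
\begin{align*}
    \sum_{s'\in\mathcal{S}} p(s'\mid s,a)\max_{a'}\xi(s',a')
    \le \sum_{s'\in\mathcal{S}} p(s'\mid s,a)\max_{a'}T(s',a') + 1 = T(s,a) = \xi(s,a)-1 .
\end{align*}

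\textbf{Step 3: contraction.} Fix $Q_1,Q_2$ and use $|\max_{a'}x(a')-\max_{a'}y(a')|\le \max_{a'}|x(a')-y(a')|$. Then
\begin{align*}
    \bigl|[\widetilde{\mathcal{H}}(Q_1)-\widetilde{\mathcal{H}}(Q_2)](s,a)\bigr|
    \le \sum_{s'} p(s'\mid s,a)\max_{a'}|Q_1-Q_2|(s',a')
    \le \|Q_1-Q_2\|_\xi \sum_{s'}p(s'\mid s,a)\max_{a'}\xi(s',a').
\end{align*}
By Step 2 the right-hand side is at most $(\xi(s,a)-1)\|Q_1-Q_2\|_\xi\le \kappa\,\xi(s,a)\|Q_1-Q_2\|_\xi$. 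Dividing by $\xi(s,a)$ and maximizing over $(s,a)$ yields $\|\widetilde{\mathcal{H}}(Q_1)-\widetilde{\mathcal{H}}(Q_2)\|_\xi\le\kappa\|Q_1-Q_2\|_\xi$.

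\textbf{Main obstacle.} The delicate part is Step 1: the auxiliary problem has a positive reward and is maximized, so it is exactly the ``unfavorable'' direction in which improper policies would give $+\infty$. Finiteness of $T$ and the Bellman equation rely entirely on all policies being proper. If invoking \cite{bertsekas1991analysis} is felt to be too indirect, I would argue directly instead. Because only finitely many deterministic stationary policies exist, $\bar T:=\max_\pi T^\pi$ is finite. Then run value iteration on the auxiliary problem from $0$: the iterates are monotone nondecreasing and bounded by $\bar T$ (each finite-horizon maximum is attained by a Markov policy, whose expected duration is dominated by $\bar T$). Their limit is the desired fixed point by monotone convergence.
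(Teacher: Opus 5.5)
Your proposal is correct and is essentially the argument the paper defers to: Proposition~2.2 of \cite{bertsekas1996neuro} builds the weights from the maximal expected termination time of an auxiliary unit-reward SSP exactly as you do, and you transplant it to state--action space, where your shift $\xi=T+\mathbf{1}$ is what actually delivers the strict $\xi>\mathbf{1}$ and $\kappa>0$ claimed in the statement (unshifted weights give only $\xi\geq\mathbf{1}$ and $\kappa\in[0,1)$, e.g.\ $\kappa=0$ when every pair terminates in one step). The one soft spot is your optional fallback: bounding the value-iteration iterates by $\bar T=\max_\pi T^\pi$ presupposes that no nonstationary Markov policy outlasts the best stationary one, which is essentially what Step~1 is meant to establish, so for a self-contained route you should instead run policy iteration on the auxiliary problem (every policy is proper, so it improves strictly until it stops at a finite fixed point), noting that any finite $T\geq\mathbf{0}$ with $T(s,a)\geq 1+\sum_{s'\in\mathcal{S}}p(s'\mid s,a)\max_{a'}T(s',a')$ already suffices for Steps~2 and~3.
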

The following result shows the equivalence between the weighted $\ell_\infty$ norm $\|\cdot\|_{\xi}$ and the unweighted $\ell_\infty$ norm $\|\cdot\|_{\infty}$, which we prove in Appendix~\ref{pf:coro:norm-equiv}. 
\begin{corollary}\label{coro:norm-equiv}
    It holds for any $Q\in\mathbb{R}^{mn}$ that $(1-\kappa)\|Q\|_{\infty} \leq \|Q\|_{\xi} \leq \|Q\|_{\infty}$. 
\end{corollary}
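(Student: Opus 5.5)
The statement to prove is Corollary~\ref{coro:norm-equiv}: $(1-\kappa)\|Q\|_\infty \le \|Q\|_\xi \le \|Q\|_\infty$. The plan is to use only the definitions of $\|\cdot\|_\xi$ and $\kappa$ from Lemma~\ref{le:weighted_sup_norm}, together with $\xi>\mathbf{1}$. Both inequalities reduce to coordinatewise bounds on the weights $1/\xi(i)$, followed by a maximum over $i$.

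\textbf{Upper bound.} Since $\xi(i)>1$ for every $i$, we have $|Q(i)|/\xi(i)\le |Q(i)|\le \|Q\|_\infty$ for each coordinate $i$. Taking the maximum over $i$ gives $\|Q\|_\xi\le\|Q\|_\infty$.

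\textbf{Lower bound.} From the definition of $\kappa$,
\begin{align*}
\kappa=\max_i\Bigl(1-\frac{1}{\xi(i)}\Bigr)=1-\frac{1}{\max_i\xi(i)},
\end{align*}
so $1-\kappa=1/\max_j\xi(j)$. Hence $1/\xi(i)\ge 1/\max_j\xi(j)=1-\kappa$ for every $i$. Now pick an index $i^*$ with $|Q(i^*)|=\|Q\|_\infty$. Then
\begin{align*}
\|Q\|_\xi\ge \frac{|Q(i^*)|}{\xi(i^*)}\ge (1-\kappa)|Q(i^*)|=(1-\kappa)\|Q\|_\infty .
\end{align*}

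There is no real obstacle here. The only step needing care is rewriting $\kappa$ as $1-1/\max_i\xi(i)$, which holds because the map $x\mapsto 1-1/x$ is increasing.
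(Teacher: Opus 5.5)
Your proof is correct and follows the paper's argument: the upper bound comes coordinatewise from $\xi>\mathbf{1}$, and the lower bound from $1/\xi(i)\ge 1-\kappa$. The paper writes this as $\xi(s,a)\le 1/(1-\kappa)$ and credits it to the construction of $\xi$ in \cite{bertsekas1996neuro}, but as you note it follows directly from the definition of $\kappa$; your non-strict upper bound for general $Q$ is also the right form, since the paper's strict inequality stated for $Q^*-Q^{\pi_0}$ fails when that difference is zero.
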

For the remaining part of the proof, we denote $\kappa' = \min\left\{1,\frac{\kappa}{1-\kappa}\right\}$. Now, we present below the contractive recursion with respect to the weighted $\ell_\infty$ norm $\|\cdot\|_\xi$. 
\begin{lemma}\label{le:contraction_recursion_SSP}
   Under Assumption \ref{as:proper-all}, Algorithm \ref{alg:DSPI_SSP} satisfies  
    \begin{align*}
        \|Q^{\ast} - \bar{Q}_{k+1}\|_\xi \leq (1-(1-\kappa)\beta_k) \|Q^* - \bar{Q}_{k}\|_{\xi} + \kappa' \beta_k \eta_{k-1} \nu_{\max},\quad\forall\, k \geq 1.
    \end{align*}
\end{lemma}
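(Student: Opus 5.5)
The plan is to reproduce the argument of Lemma~\ref{le:contraction_recursion} almost line by line. The sup-norm contraction of $\mathcal{H}$ is replaced by the $\kappa$-contraction of $\widetilde{\mathcal{H}}$ in $\|\cdot\|_\xi$ from Lemma~\ref{le:weighted_sup_norm}. The one genuinely new ingredient is a weighted bound on the smoothing term $\eta_{k-1}g(\pi_k)$, which is where the constant $\kappa'$ comes from.

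\textbf{Step 1: componentwise upper bound.} Lemma~\ref{le:monotone_SSP} gives $Q^{\pi_k}\le Q^{\pi_{k+1}}$. As in the discounted case, since $\bar Q_k$ is a convex combination of $Q^{\pi_0},\dots,Q^{\pi_{k-1}}$ (recall $\beta_0=1$), this yields $\bar Q_k\le Q^{\pi_k}$ and $\bar Q_k\le Q^*$. The chain \eqref{eqeq:a}--\eqref{eqeq:c} then goes through verbatim, using $Q^*=\widetilde{\mathcal{H}}(Q^*)$, $Q^{\pi_k}=\widetilde{\mathcal{H}}^{\pi_k}(Q^{\pi_k})$, monotonicity of $\widetilde{\mathcal{H}}^{\pi_k}$, the identity $\widetilde{\mathcal{H}}^{\pi_k}_{\eta}=\widetilde{\mathcal{H}}^{\pi_k}+\eta g(\pi_k)$, Line~4 of Algorithm~\ref{alg:DSPI_SSP} at index $k-1$, and $\widetilde{\mathcal{H}}\le\widetilde{\mathcal{H}}_{\eta}$ (because $\nu\ge 0$). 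The result is
\begin{align*}
0\le Q^*-\bar Q_{k+1}\le (1-\beta_k)(Q^*-\bar Q_k)+\beta_k\bigl(\widetilde{\mathcal{H}}(Q^*)-\widetilde{\mathcal{H}}(\bar Q_k)\bigr)+\beta_k\eta_{k-1}g(\pi_k).
\end{align*}

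\textbf{Step 2: pass to the weighted norm.} Since the left-hand side is nonnegative, I will divide each coordinate $i$ by $\xi(i)$ and take the maximum over $i$. The first term contributes at most $(1-\beta_k)\|Q^*-\bar Q_k\|_\xi$. By Lemma~\ref{le:weighted_sup_norm}, the second contributes at most $\beta_k\kappa\|Q^*-\bar Q_k\|_\xi$. Together these give the factor $1-(1-\kappa)\beta_k$.

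\textbf{Step 3 (main obstacle): the smoothing term.} Here $[g(\pi_k)](i)\le \nu_{\max}\sum_{s'\in\mathcal{S}}p(s'\mid s,a)$ for $i=(s,a)$. Since $\xi(i)>1$, a crude bound gives $[g(\pi_k)](i)/\xi(i)\le\nu_{\max}$, which only accounts for the constant $1$ in $\kappa'$. To get the factor $\kappa$, I will extract the substochastic structure directly from the contraction property of Lemma~\ref{le:weighted_sup_norm}. Apply it with $Q_1=t\xi$ and $Q_2=\mathbf{0}$ for $t>0$; note $\|t\xi\|_\xi=t$. Then for every $i=(s,a)$,
\begin{align*}
t\sum_{s'\in\mathcal{S}}p(s'\mid s,a)\max_{a'}\xi(s',a')=[\widetilde{\mathcal{H}}(t\xi)-\widetilde{\mathcal{H}}(\mathbf{0})](i)\le \kappa t\,\xi(i).
\end{align*}
Since $\xi>\mathbf{1}$, this gives $\sum_{s'\in\mathcal{S}}p(s'\mid s,a)\le\kappa\,\xi(i)$. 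Hence $[g(\pi_k)](i)/\xi(i)\le\kappa\nu_{\max}$. Combined with the crude bound, the term is at most $\min\{1,\kappa\}\nu_{\max}$, and since $\min\{1,\kappa\}\le\min\{1,\kappa/(1-\kappa)\}=\kappa'$, the smoothing term is bounded by $\kappa'\beta_k\eta_{k-1}\nu_{\max}$.

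Combining Steps~2 and~3 gives the stated recursion. The only delicate points are verifying that Lemma~\ref{le:monotone_SSP} indeed yields $\bar Q_k\le Q^{\pi_k}$ with the $\beta_0=1$ initialization, and the weighted-row-sum trick in Step~3. Everything else transfers directly from the discounted proof.
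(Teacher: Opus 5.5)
Your proposal is correct and follows the paper's proof. Monotone improvement gives $\bar Q_k\le Q^{\pi_k}$, the Bellman-operator chain yields the componentwise bound with the smoothing term $\eta_{k-1}g(\pi_k)$, and the $\kappa$-contraction of $\widetilde{\mathcal{H}}$ in $\|\cdot\|_\xi$, applied once more to a test vector against $\mathbf{0}$, controls both the main term and $g$. The only deviation is that you bound $[g(\pi_k)](i)/\xi(i)\le\kappa\nu_{\max}$ directly in the weighted norm, whereas the paper first proves the unweighted bound $g(\pi)\le\kappa'\nu_{\max}\mathbf{1}$ via $\xi\le 1/(1-\kappa)$ and then divides by $\xi\ge\mathbf{1}$; your route is slightly shorter and gives the sharper constant $\kappa\le\kappa'$.
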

Recursively applying the inequality in Lemma~\ref{le:contraction_recursion_SSP} gives 
\begin{align}\label{eq:barQ_gap_SSP}
    \|Q^{\ast} - \bar{Q}_{k+1}\|_{\xi} \leq \prod_{j=1}^{k}(1-(1-\kappa)\beta_j)\|Q^{\ast} - \bar{Q}_{1}\|_{\xi} + \kappa' \nu_{\max}\sum_{i=1}^k \beta_{i}\eta_{i-1} \prod_{j=i+1}^{k}(1-(1-\kappa)\beta_{j}).
\end{align}
To translate the above bound to one in terms of the value function, we apply the following lemma which is proved in Appendix~\ref{pf:le:translation_SSP}. 
\begin{lemma}\label{le:translation_SSP}
    Let $\Tilde{\xi}(s) = \max_{a\in\mathcal{A}}\xi(s,a)$. Then, under Assumption~\ref{as:proper-all}, Algorithm~\ref{alg:DSPI_SSP} satisfies 
    \begin{enumerate}[(1)]
        \item $\|V^* - V^{\pi_k}\|_{\Tilde{\xi}} \leq \|Q^* - \bar{Q}_{k}\|_{\infty} + \eta_{k-1} \nu_{\max}$ for all $k \geq 1$;
        \item $\|Q^*-\bar{Q}_1\|_\infty \leq \|V^*-V^{\pi_0}\|_\infty$.
    \end{enumerate}
\end{lemma}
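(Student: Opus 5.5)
The plan is to mirror the proof of Lemma~\ref{le:translation} and to observe that the weighted norm on the left-hand side of (1) is dominated by the unweighted one. Since $\xi>\mathbf{1}$ (Lemma~\ref{le:weighted_sup_norm}), we have $\tilde{\xi}(s)=\max_a\xi(s,a)\ge 1$ for every $s$. Hence $\|V\|_{\tilde\xi}\le\|V\|_\infty$ for any $V\in\mathbb{R}^n$, and it suffices to prove the pointwise bound
\[
0\le V^*(s)-V^{\pi_k}(s)\le \|Q^*-\bar Q_k\|_\infty+\eta_{k-1}\nu_{\max}
\]
for all $s\in\mathcal{S}$ and $k\ge1$. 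The lower bound $V^*\ge V^{\pi_k}$ is just optimality of $\pi^*$, which is available because all policies are proper under Assumption~\ref{as:proper-all}.

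\textbf{Preliminary fact.} First I would record that $\bar Q_k\le Q^{\pi_k}$ for all $k\ge1$. This follows from Lemma~\ref{le:monotone_SSP} exactly as in the discounted case: $\bar Q_k$ is a convex combination of $Q^{\pi_0},\dots,Q^{\pi_{k-1}}$, because $\beta_0=1$ gives weights summing to one. Each of these is $\le Q^{\pi_k}$ by monotone improvement. I would also use the SSP identities $V^{\pi}(s)=\pi(s)^\top Q^{\pi}(s)$ and $V^*(s)=\max_a Q^*(s,a)=\max_{\mu}\mu^\top Q^*(s)$. These hold for proper policies, since the terminal state contributes zero.

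\textbf{Part (1).} With these facts the chain is
\begin{align*}
V^*(s)-V^{\pi_k}(s)&=\max_\mu \mu^\top Q^*(s)-\pi_k(s)^\top Q^{\pi_k}(s)\le \max_\mu\mu^\top Q^*(s)-\pi_k(s)^\top\bar Q_k(s)\\
&=\max_\mu\mu^\top Q^*(s)-\max_\mu\{\mu^\top\bar Q_k(s)+\eta_{k-1}\nu(\mu)\}+\eta_{k-1}\nu(\pi_k(s)).
\end{align*}
The equality on the second line uses the update \eqref{eq:DSPI_SSP_explicit} at index $k-1$. Because $\nu\ge0$, the middle maximum is at least $\max_\mu\mu^\top\bar Q_k(s)$. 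Then $\max_\mu\mu^\top x-\max_\mu\mu^\top y\le\max_\mu\mu^\top(x-y)\le\|x-y\|_\infty$, and $\nu(\pi_k(s))\le\nu_{\max}$, which gives the pointwise bound. Dividing by $\tilde\xi(s)\ge1$ and taking the maximum over $s$ yields (1).

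\textbf{Part (2).} Since $\beta_0=1$, we have $\bar Q_1=Q^{\pi_0}$. Subtracting the Bellman equations $Q^*=\widetilde{\mathcal H}(Q^*)$ and $Q^{\pi_0}=\widetilde{\mathcal H}^{\pi_0}(Q^{\pi_0})$ gives
\[
Q^*(s,a)-Q^{\pi_0}(s,a)=\sum_{s'\in\mathcal S}p(s'\mid s,a)\bigl(V^*(s')-V^{\pi_0}(s')\bigr).
\]
This quantity is nonnegative, and it is at most $\|V^*-V^{\pi_0}\|_\infty$ because $\sum_{s'\in\mathcal S}p(s'\mid s,a)\le1$ (the transition matrix restricted to $\mathcal S$ is substochastic).

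\textbf{Main obstacle.} The only delicate point is verifying that the SSP objects behave as in the discounted case. Specifically, all $Q^{\pi_k}$ must be finite and satisfy their Bellman equations, and the averaging argument giving $\bar Q_k\le Q^{\pi_k}$ must go through. Both follow from Assumption~\ref{as:proper-all} together with Lemma~\ref{le:monotone_SSP}, so I expect the rest to be routine.
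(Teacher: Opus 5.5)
Your proposal is correct and follows the paper's proof essentially step for step. Both arguments use the same ingredients: $\bar Q_k\le Q^{\pi_k}$ from monotone improvement, the regularized argmax defining $\pi_k$ with parameter $\eta_{k-1}$, and $\nu\ge 0$ to drop the regularizer. For (2), both use the one-step Bellman identity for $Q^*-Q^{\pi_0}$ together with the substochastic kernel on $\mathcal{S}$.

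The only difference is where the weights enter. The paper divides by $\tilde\xi(s)$ first and passes through $\|Q^*-\bar Q_k\|_\xi$ before relaxing to $\|\cdot\|_\infty$. That intermediate form is the one that meshes directly with the $\|\cdot\|_\xi$ recursion \eqref{eq:barQ_gap_SSP}. You instead prove the unweighted pointwise bound and divide at the end, which gives the slightly stronger $\|V^*-V^{\pi_k}\|_\infty$ on the left.
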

For $k\geq 1$, combining Lemma~\ref{le:translation_SSP} with the bound in \eqref{eq:barQ_gap_SSP} yields 
\begin{align}\label{eq:final_recursion_SSP}
    \|V^* - V^{\pi_k}\|_{\Tilde{\xi}} \leq\; & \underbrace{\prod_{j=1}^{k-1}(1-(1-\kappa)\beta_j) \|V^*-V^{\pi_0}\|_\infty}_{\eqqcolon T_1} \nonumber \\
    &\; + \underbrace{\kappa' \nu_{\max}\sum_{i=1}^{k-1} \beta_{i}\eta_{i-1} \prod_{j=i+1}^{k-1}(1-(1-\kappa)\beta_{j})}_{\eqqcolon T_2} + \underbrace{\eta_{k-1} \nu_{\max}}_{\eqqcolon T_3}.
\end{align}
Recall that $\eta_{k-1} \coloneqq \tau \prod_{j=1}^{k-1}(1-\beta_j)$. Thus, when choosing $\beta_k \equiv \beta$, we get 
\begin{align*}
    T_1 = (1-(1-\kappa)\beta)^{k-1} \|V^*-V^{\pi_0}\|_\infty \quad \text{and} \quad T_3 = \tau (1-\beta)^{k-1} \nu_{\max}.
\end{align*}
The term $T_2$ can be simplified as follows
\begin{align*}
    & \kappa' \tau \nu_{\max} \beta \sum_{i=1}^{k-1} (1-\beta)^{i-1} (1-(1-\kappa)\beta)^{k-i-1} \\
    &\;= \kappa' \tau \nu_{\max} \beta (1-(1-\kappa)\beta)^{k-2} \sum_{i=1}^{k-1} \frac{(1-\beta)^{i-1}}{(1-(1-\kappa)\beta)^{i-1}} \\  
    &\;= \kappa' \tau \nu_{\max} \beta (1-(1-\kappa)\beta)^{k-2} \frac{1 - \left(\frac{(1-\beta)}{(1-(1-\kappa)\beta)}\right)^{k-1}}{1 - \frac{(1-\beta)}{(1-(1-\kappa)\beta)}} \\
    &\;= \kappa' \tau \nu_{\max} \beta \frac{\left[ (1-(1-\kappa)\beta)^{k-1} - (1-\beta)^{k-1} \right]}{(1-\beta+\beta \kappa) - (1-\beta)}  \\
    &\; =  \frac{\kappa'}{\kappa}\tau \nu_{\max} \left[ (1-(1-\kappa)\beta)^{k-1} - (1-\beta)^{k-1} \right].
\end{align*}
Now, observe that $\frac{\kappa'}{\kappa} = \min\left\{1,\frac{\kappa}{1-\kappa}\right\}\frac{1}{\kappa} = \min\left\{\frac{1}{\kappa},\frac{1}{1-\kappa}\right\} \in (1,2)$ for $\kappa \in (0,1)$. 
Consequently, the bound on the value function gap simplifies as
\begin{align*}
    \|V^* - V^{\pi_k}\|_{\Tilde{\xi}} \leq \; &(1-(1-\kappa)\beta)^{k-1}\|V^*-V^{\pi_0}\|_\infty \\
    &\; +\frac{\kappa'}{\kappa}\tau \nu_{\max} \left[ (1-(1-\kappa)\beta)^{k-1} - (1-\beta)^{k-1} \right] + \tau\nu_{\max} (1-\beta)^{k-1}  \\
    =\; & (1-(1-\kappa)\beta)^{k-1}\|V^*-V^{\pi_0}\|_\infty + \frac{\kappa'}{\kappa}\tau\nu_{\max}(1-(1-\kappa)\beta)^{k-1} \\
    &\;+ \nu_{\max} \left(1-\frac{\kappa'}{\kappa}\right) (1-\beta)^{k-1} \\
    \leq\; & (1-(1-\kappa)\beta)^{k-1}\left[ \|V^*-V^{\pi_0}\|_\infty + 2\tau\nu_{\max} \right].
\end{align*}
where the last inequality is due to $\kappa'/\kappa \in (1,2)$. Finally, by the equivalence of norms $\|\cdot\|_{\Tilde{\xi}}$ and $\|\cdot\|_{\infty}$, we get 
\begin{align*}
    \|V^* - V^{\pi_k}\|_{\infty} \leq (1-(1-\kappa)\beta)^{k-1} \left[ \frac{\|V^*-V^{\pi_0}\|_\infty + 2\tau\nu_{\max}}{1-\kappa} \right].
\end{align*}

\subsection{Proof of Technical Lemmas in Section \ref{sec:pf:DSPI-SSP}}
\subsubsection{Proof of Lemma~\ref{le:monotone_SSP}}\label{pf:le:monotone_SSP}
We take the same approach as in our previous results showing monotonic policy improvement. It suffices to show that $Q^{\pi_k} \leq \widetilde{\mathcal{H}}^{\pi_{k+1}}(Q^{\pi_k})$ for every $k \geq 0$. Then, by monotonicity of $\widetilde{\mathcal{H}}^{\pi_{k+1}}$, we will have $Q^{\pi_k} \leq \widetilde{\mathcal{H}}^{\pi_{k+1}}(Q^{\pi_k}) \leq [\widetilde{\mathcal{H}}^{\pi_{k+1}}]^{2}(Q^{\pi_k})\leq\ldots\leq [\widetilde{\mathcal{H}}^{\pi_{k+1}}]^{n}(Q^{\pi_k})$ for every $n \in \mathbb{N}$. Under Assumption \ref{as:proper-all}, $\pi_{k+1}$ is a proper policy and hence letting $n\to\infty$ yields $Q^{\pi_{k+1}} = \lim_{n\to\infty}[\widetilde{\mathcal{H}}^{\pi_{k+1}}]^{n}(Q^{\pi_k})$ which implies $Q^{\pi_k} \leq Q^{\pi_{k+1}}$. Again, we prove the claim in two steps: $k = 0$ and $k \geq 1$. 

\begin{enumerate}[(1)]
    \item For $k = 0$, $\beta_0 = 1$ and $\bar{Q}_1 = Q^{\pi_0}$. By properness of $\pi_0$, we have 
    \begin{align*}
        Q^{\pi_0} & = \widetilde{\mathcal{H}}^{\pi_0}(Q^{\pi_0}) \\
        & = \widetilde{\mathcal{H}}^{\pi_0}(\bar{Q}_1) \\
        & = \widetilde{\mathcal{H}}^{\pi_0}_{\eta_0}(\bar{Q}_1) - \eta_0g(\pi_0) \\
        & \leq \widetilde{\mathcal{H}}_{\eta_0}(\bar{Q}_1) - \eta_0g(\pi_0) \tag{$\widetilde{\mathcal{H}}^{\pi}_{\eta} \leq \widetilde{\mathcal{H}}_{\eta}$} \\
        & = \widetilde{\mathcal{H}}^{\pi_1}_{\eta_0}(\bar{Q}_1) - \eta_0g(\pi_0) \tag{Algorithm~\ref{alg:DSPI_SSP}, Line~4} \\
        & = \widetilde{\mathcal{H}}^{\pi_1}(\bar{Q}_1) + \eta_0g(\pi_1) - \eta_0 g(\pi_0) \\
        & \leq \widetilde{\mathcal{H}}^{\pi_1}(\bar{Q}_1).
    \end{align*}
    where the last inequality follows from the initialization $\pi_0(s) \in \argmax_{\mu\in\Delta(\mathcal{A})}\nu(\mu)$. 
    \item For $k \geq 1$, since $\pi_k$ is proper, we proceed as follows:
    \begin{align*}
        Q^{\pi_k} & = \widetilde{\mathcal{H}}^{\pi_k}(Q^{\pi_k}) \\
        & = \widetilde{\mathcal{H}}^{\pi_k}\left(\frac{\bar{Q}_{k+1} - (1-\beta_k)\bar{Q}_{k}}{\beta_k}\right) \tag{Algorithm~\ref{alg:DSPI_SSP}, Line~3} \\
        & = \widetilde{\mathcal{H}}^{\pi_k}_{\eta_{k-1}}\left(\frac{\bar{Q}_{k+1} - (1-\beta_k)\bar{Q}_{k}}{\beta_k}\right) - \eta_{k-1} g(\pi_k) \\
        & = \frac{1}{\beta_k} \widetilde{\mathcal{H}}^{\pi_k}_{\eta_{k-1}}(\bar{Q}_{k+1}) - \left(\frac{1}{\beta_k} - 1\right)\widetilde{\mathcal{H}}^{\pi_k}_{\eta_{k-1}}(\bar{Q}_k) - \eta_{k-1} g(\pi_k) \tag{$\widetilde{\mathcal{H}}^{\pi_k}_{\eta_{k-1}}$ being affine} \\
        & = \frac{1}{\beta_k} \widetilde{\mathcal{H}}^{\pi_k}_{\eta_{k}}(\bar{Q}_{k+1}) - \left(\frac{1}{\beta_k} - 1\right)\widetilde{\mathcal{H}}^{\pi_k}_{\eta_{k-1}}(\bar{Q}_k) + \left[\frac{(1-\beta_k)\eta_{k-1} - \eta_{k}}{\beta_k}\right] g(\pi_k)  \\
        & = \frac{1}{\beta_k} \widetilde{\mathcal{H}}^{\pi_k}_{\eta_{k}}(\bar{Q}_{k+1}) - \left(\frac{1}{\beta_k} - 1\right)\widetilde{\mathcal{H}}^{\pi_k}_{\eta_{k-1}}(\bar{Q}_k),
    \end{align*}
    where the last step is due to $\eta_{k}=(1-\beta_k)\eta_{k-1}$. As per the update rule in Line 4 of Algorithm~\ref{alg:DSPI_SSP}, $\widetilde{\mathcal{H}}_{\eta_k}^{\pi_{k+1}}(\bar{Q}_{k+1})=\widetilde{\mathcal{H}}_{\eta_k}(\bar{Q}_{k+1})$ for $k \geq 0$, and we thus get 
    \begin{align*}
        & \widetilde{\mathcal{H}}^{\pi_k}_{\eta_{k}}(\bar{Q}_{k+1}) \leq \widetilde{\mathcal{H}}_{\eta_{k}}(\bar{Q}_{k+1}) = \widetilde{\mathcal{H}}^{\pi_{k+1}}_{\eta_{k}}(\bar{Q}_{k+1});\\
        & \widetilde{\mathcal{H}}^{\pi_k}_{\eta_{k-1}}(\bar{Q}_k) = \widetilde{\mathcal{H}}_{\eta_{k-1}}(\bar{Q}_k) \geq \widetilde{\mathcal{H}}^{\pi_{k+1}}_{\eta_{k-1}}(\bar{Q}_k).
    \end{align*}
    Applying these in the bound on $Q^{\pi_k}$ yields  
    \begin{align*}
        Q^{\pi_k} \leq\; & \frac{1}{\beta_k} \widetilde{\mathcal{H}}^{\pi_{k+1}}_{\eta_{k}}(\bar{Q}_{k+1}) - \left(\frac{1}{\beta_k} - 1\right) \widetilde{\mathcal{H}}^{\pi_{k+1}}_{\eta_{k-1}}(\bar{Q}_k) \\
        =\; & \frac{1}{\beta_k} \widetilde{\mathcal{H}}^{\pi_{k+1}}(\bar{Q}_{k+1}) - \left(\frac{1}{\beta_k} - 1\right) \widetilde{\mathcal{H}}^{\pi_{k+1}}(\bar{Q}_k) + \left(\frac{\eta_k-(1-\beta_k)\eta_{k-1}}{\beta_k}\right) g(\pi_{k+1}) \\
        =\; & \frac{1}{\beta_k} \widetilde{\mathcal{H}}^{\pi_{k+1}}(\bar{Q}_{k+1}) - \left(\frac{1}{\beta_k} - 1\right) \widetilde{\mathcal{H}}^{\pi_{k+1}}(\bar{Q}_k) \tag{$\eta_k = (1-\beta_k)\eta_{k-1}$} \\
        =\; & \widetilde{\mathcal{H}}^{\pi_{k+1}}(Q^{\pi_k}),
    \end{align*}
    where the last equality is due to $\widetilde{\mathcal{H}}^{\pi_{k+1}}$ being affine and Line 3 of Algorithm~\ref{alg:DSPI_SSP}. 
\end{enumerate}

\subsection{Proof of Corollary \ref{coro:norm-equiv}}\label{pf:coro:norm-equiv}
Observe that since $\xi(s,a) > 1$ for all $(s,a)\in\mathcal{S}\times\mathcal{A}$, 
    \begin{align*}
        \|Q^{\ast} - Q^{\pi_0}\|_{\xi} = \max_{s,a}\frac{|Q^{\ast}(s,a) - Q^{\pi_0}(s,a)|}{\xi(s,a)} < \max_{s,a}|Q^{\ast}(s,a) - Q^{\pi_0}(s,a)| = \|Q^{\ast} - Q^{\pi_0}\|_{\infty}.
    \end{align*}
    Further, from the proof of existence of $\xi$ and $\kappa$ (\cite[Proposition 2.2]{bertsekas1996neuro}), we have 
    \begin{align*}
        \xi(s,a) - 1 \leq \kappa \xi(s,a),\quad \forall\,(s,a)\in\mathcal{S}\times\mathcal{A},
    \end{align*}
    which implies 
    \begin{align*}
        (1-\kappa) \xi(s,a) \leq 1, \text{ or equivalently, } \xi(s,a) \leq \frac{1}{1-\kappa},\;\quad \forall\,(s,a)\in\mathcal{S}\times\mathcal{A}.
    \end{align*}
    It then follows that 
    \begin{align*}
        \|Q\|_{\xi} = \max_{s,a} \frac{|Q(s,a)|}{\xi(s,a)} \geq (1-\kappa)\max_{s,a}|Q(s,a)| = (1-\kappa) \|Q\|_{\infty}.
    \end{align*}

\subsubsection{Proof of Lemma~\ref{le:contraction_recursion_SSP}}\label{pf:le:contraction_recursion_SSP}
For any $k \geq 0$, we have by Line 3 of Algorithm~\ref{alg:DSPI_SSP} that 
\begin{align*}
    Q^{\ast} - \bar{Q}_{k+1} & = Q^\ast - [(1-\beta_k)\bar{Q}_{k} + \beta_k Q^{\pi_k}] \\
    & = Q^\ast - \bar{Q}_{k} + \beta_k \bar{Q}_{k} - \beta_k Q^{\pi_k} \\
    & = Q^{\ast} - \bar{Q}_{k} + \beta_k \bar{Q}_{k} - \beta_{k} Q^{\ast} + \beta_{k} Q^{\ast} - \beta_{k} Q^{\pi_k} \\
    & = (1-\beta_k)(Q^{\ast} - \bar{Q}_{k}) + \beta_k (Q^{\ast} - Q^{\pi_k}). 
\end{align*}
As argued in the proof of Lemma~\ref{le:contraction_recursion}, since $\bar{Q}_k$ is a weighted average of a monotonic sequence, we have $\bar{Q}_k \le Q^{\pi_k}$ for all $k \ge 0$. From this, we can deduce that 
\begin{align*}
    Q^{\ast} - Q^{\pi_k} & = \widetilde{\mathcal{H}}(Q^*) - \widetilde{\mathcal{H}}^{\pi_k}(Q^{\pi_k}) \\
    & \leq \widetilde{\mathcal{H}}(Q^*) - \widetilde{\mathcal{H}}^{\pi_k}(\bar{Q}_{k}) \tag{$\widetilde{\mathcal{H}}^{\pi_k}$ being monotonic} \\
    & = \widetilde{\mathcal{H}}(Q^*) - \widetilde{\mathcal{H}}^{\pi_k}_{\eta_{k-1}}(\bar{Q}_{k}) + \eta_{k-1}g(\pi_{k}) \\
    & = \widetilde{\mathcal{H}}(Q^*) - \widetilde{\mathcal{H}}_{\eta_{k-1}}(\bar{Q}_{k}) + \eta_{k-1}g(\pi_{k}) \tag{Algorithm \ref{alg:DSPI_SSP}, Line 4} \\
    & \leq \widetilde{\mathcal{H}}(Q^*) - \widetilde{\mathcal{H}}(\bar{Q}_{k}) + \eta_{k-1}g(\pi_{k}) \tag{$\widetilde{\mathcal{H}}(Q) \leq \widetilde{\mathcal{H}}_{\eta}$}.
\end{align*}
Under Assumption~\ref{as:proper-all}, we have that $g(\pi) \leq \kappa' \nu_{\max} \mathbf{1}$ for any stationary policy $\pi$, which we show in Appendix~\ref{ap:bound_g}. To proceed, for any $(s,a)\in\mathcal{S}\times\mathcal{A}$, we have 
\begin{align*}
    \frac{1}{\xi(s,a)}\left|Q^{\ast}(s,a) - Q^{\pi_k}(s,a)\right| & = \frac{1}{\xi(s,a)} [Q^{\ast}(s,a) - Q^{\pi_k}(s,a)]  \\
    & \leq \frac{1}{\xi(s,a)}\left\{ [\widetilde{\mathcal{H}}(Q^*)](s,a) - [\widetilde{\mathcal{H}}(\bar{Q}_{k})](s,a) \right\} + \frac{\eta_{k-1}}{\xi(s,a)}[g(\pi_k)](s,a) \\
    & \leq \frac{1}{\xi(s,a)}\left| [\widetilde{\mathcal{H}}(Q^*)](s,a) - [\widetilde{\mathcal{H}}(\bar{Q}_{k})](s,a) \right| + \kappa' \eta_{k-1} \nu_{\max} \tag{$\xi \geq \mathbf{1}$} \\
    & = \|\widetilde{\mathcal{H}}(Q^*) - \widetilde{\mathcal{H}}(\bar{Q}_{k})\|_{\xi} + \kappa' \eta_{k-1} \nu_{\max} \\
    & \leq \kappa \|Q^* - \bar{Q}_{k}\|_{\xi} + \kappa' \eta_{k-1} \nu_{\max}, 
\end{align*}
where the last inequality follows due to Lemma~\ref{le:weighted_sup_norm}. Since the above holds for every $(s,a)\in\mathcal{S}\times\mathcal{A}$, we obtain
\begin{align*}
    \|Q^{\ast} - Q^{\pi_k}\|_{\xi} \leq \kappa \|Q^* - \bar{Q}_{k}\|_{\xi} + \kappa' \eta_{k-1} \nu_{\max}.
\end{align*}
Now, we bound $\|Q^{\ast} - \bar{Q}_{k+1}\|_\xi$ as follows:
\begin{align*}
    \|Q^{\ast} - \bar{Q}_{k+1}\|_\xi \leq\; & (1-\beta_k)\|Q^{\ast} - \bar{Q}_{k}\|_{\xi} + \beta_k \|Q^{\ast} - Q^{\pi_k}\|_{\xi} \\
    \leq\; & (1-(1-\kappa)\beta_k) \|Q^* - \bar{Q}_{k}\|_{\xi} + \kappa' \beta_k \eta_{k-1} \nu_{\max}.
\end{align*}

\subsubsection{Bound on \texorpdfstring{$g$}{TEXT}}\label{ap:bound_g}
By definition of the operator $g$, we have for every $(s,a)\in\mathcal{S}\times\mathcal{A}$ that 
    \begin{align*}
        [g(\pi)](s,a) = \sum_{s'\in\mathcal{S}} p(s'\mid s, a) \nu(\pi(s')),\quad s\in\mathcal{S},\;a\in\mathcal{A}.
    \end{align*}
    which satisfies $\widetilde{\mathcal{H}}^{\pi}_{\eta}(Q) = \widetilde{\mathcal{H}}^{\pi}(Q) + \eta g(\pi)$ for any $\eta \geq 0$ and any policy $\pi$. 
    Now, by definition of $\widetilde{\mathcal{H}}$, we see that 
    \begin{align*}
        [\widetilde{\mathcal{H}}(\mathbf{0})](s,a) = \mathcal{R}(s,a) \text{ and } [\widetilde{\mathcal{H}}(\mathbf{1})](s,a) = \mathcal{R}(s,a) + \sum_{s'\in\mathcal{S}}p(s'|s,a).
    \end{align*}
    According to Lemma \ref{le:weighted_sup_norm}, $\widetilde{\mathcal{H}}$ is a contraction mapping with respect to $\|\cdot\|_{\xi}$ and we thus have 
    \begin{align*}
        \|\widetilde{\mathcal{H}}(\mathbf{1}) - \widetilde{\mathcal{H}}(\mathbf{0})\|_{\xi} = \max_{s,a}\frac{\sum_{s'\in\mathcal{S}} p(s'|s,a)}{\xi(s,a)} \leq \kappa \|\mathbf{1}\|_{\xi} \leq \kappa \|\mathbf{1}\|_{\infty} = \kappa.
    \end{align*}
    Since $1/\xi(s,a) \geq 1/[\max_{s,a}\xi(s,a)]$ for every $(s,a)$, we get 
    \begin{align*}
        \frac{1}{\max_{s,a}\xi(s,a)} \max_{s,a}\sum_{s'\in\mathcal{S}} p(s'|s,a) & \leq \max_{s,a}\frac{\sum_{s'\in\mathcal{S}} p(s'|s,a)}{\xi(s,a)} \leq \kappa \\
        \Rightarrow \max_{s,a}\sum_{s'\in\mathcal{S}} p(s'|s,a) & \leq \kappa \max_{s,a}\xi(s,a) \leq \frac{\kappa}{1-\kappa}. 
    \end{align*}
    Note that $\kappa/(1-\kappa) > 1$ only if $\kappa > 1/2$. Since $p(s'|s,a) \in [0,1)$, $\sum_{s'\in\mathcal{S}} p(s'|s,a) \leq 1$, 
    and we get
    \begin{align*}
        [g(\pi)](s,a) = \sum_{s'\in\mathcal{S}} p(s'\mid s, a) \nu(\pi(s')) \leq \nu_{\max}\sum_{s'\in\mathcal{S}} p(s'\mid s, a) \leq \nu_{\max}\min\left\{1,\frac{\kappa}{1-\kappa}\right\} = \nu_{\max} \kappa'.
    \end{align*}
     for every $(s,a)\in\mathcal{S}\times\mathcal{A}$, and hence $g(\pi) \leq \kappa' \nu_{\max} \mathbf{1}$. 

\subsubsection{Proof of Lemma~\ref{le:translation_SSP}}\label{pf:le:translation_SSP}
\begin{enumerate}[(1)]
    \item For any $s\in\mathcal{S}$, we have that 
    \begin{align*}
        &\frac{1}{\Tilde{\xi}(s)}[V^*(s) - V^{\pi_k}(s)] \\
        & \leq \frac{1}{\Tilde{\xi}(s)}\left[ \max_{a\in\mathcal{A}}Q^*(s,a) - \pi_k(s)^\top \bar{Q}_k(s) \right] \tag{$Q^{\pi_k} \geq \bar{Q}_k$} \\
        & = \frac{1}{\Tilde{\xi}(s)}\left[ \max_{a\in\mathcal{A}}Q^*(s,a) - \max_{\mu\in\Delta(\mathcal{A})} \left\{\mu^\top \bar{Q}_{k}(s) + \eta_{k-1}\nu(\mu)\right\} \right] + \frac{1}{\Tilde{\xi}(s)} \eta_{k-1} \nu(\pi_{k}(s)) \tag{Algorithm \ref{alg:DSPI_SSP}, Line 4}\\
        & \leq \frac{1}{\Tilde{\xi}(s)}\left[ \max_{a\in\mathcal{A}}Q^*(s,a) - \max_{a\in\mathcal{A}} \bar{Q}_{k}(s,a) \right] + \frac{1}{\Tilde{\xi}(s)} \eta_{k-1} \nu(\pi_{k}(s)) \\
        & \leq \frac{1}{\Tilde{\xi}(s)} \max_{a\in\mathcal{A}}\left| Q^*(s,a) - \bar{Q}_{k}(s,a) \right| + \frac{1}{\Tilde{\xi}(s)} \eta_{k-1} \nu(\pi_{k}(s)) \\
        & \leq \max_{a\in\mathcal{A}} \frac{\left| Q^*(s,a) - \bar{Q}_{k}(s,a) \right|}{\xi(s,a)} + \frac{1}{\Tilde{\xi}(s)} \eta_{k-1} \nu(\pi_{k}(s)) \tag{$\Tilde{\xi}(s) \geq \xi(s,a)$, $\forall a\in\mathcal{A}$} \\
        & \leq \|Q^* - \bar{Q}_{k}\|_{\xi} + \frac{1}{\Tilde{\xi}(s)} \eta_{k-1} \nu(\pi_{k}(s)) \\
        & \leq \|Q^* - \bar{Q}_{k}\|_{\infty} + \eta_{k-1} \nu_{\max},
    \end{align*}
    where the last inequality follows from the equivalence of norms $\|\cdot\|_\xi$ and $\|\cdot\|_\infty$, and $\xi \geq \mathbf{1}$. 
    \item Since $\beta_0=1$, we have $\bar{Q}_1=Q^{\pi_0}$. Now, for any $(s,a)$, observe that
    \begin{align*}
        Q^*(s,a)-Q^{\pi_0}(s,a)= \sum_{s'\in\mathcal{S}}p(s'\mid s,a) (V^*(s')- V^{\pi_0}(s'))
        \leq \|V^*-V^{\pi_0}\|_\infty,
    \end{align*}
    which implies $\|Q^*-Q^{\pi_0}\|_\infty \leq \|V^*-V^{\pi_0}\|_\infty$. 
\end{enumerate}

\subsubsection{Monotonic Improvement Under A Weaker Assumption}\label{ap:monotone_SSP_weak}
We first recall the assumptions from Remark~\ref{rem:monotone_SSP_weak}. 
\begin{assumption}\label{as:proper-initial}
    \begin{enumerate}[(1)]
        \item The policy $\pi_0$ is proper.
        \item For every improper policy $\pi$, there exists $(s,a)\in\mathcal{S}\times\mathcal{A}$ such that $Q^{\pi}(s,a) = -\infty$. 
    \end{enumerate}
\end{assumption}
Under this assumption, we first establish the following result, whose proof follows arguments similar to those of \cite[Proposition 2.4]{bertsekas1991analysis}. 
\begin{lemma}\label{le:proper}
    Let Assumption \ref{as:proper-initial} hold. Let $\pi$ be a proper policy and $\pi'$ be any policy such that $Q^{\pi} \leq \widetilde{\mathcal{H}}^{\pi'}(Q^{\pi})$. Then $\pi'$ is also a proper policy. 
\end{lemma}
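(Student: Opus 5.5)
The plan is to argue by contradiction, in the spirit of \cite[Proposition 2.4]{bertsekas1991analysis}. Suppose $\pi'$ is improper. By Assumption~\ref{as:proper-initial}(2), there is a pair $(s_0,a_0)$ with $Q^{\pi'}(s_0,a_0)=-\infty$. On the other hand, since $\pi$ is proper, $Q^\pi$ is a finite vector. The hypothesis $Q^\pi\le\widetilde{\mathcal{H}}^{\pi'}(Q^\pi)$ and monotonicity of $\widetilde{\mathcal{H}}^{\pi'}$ then give
\begin{align*}
Q^\pi\le [\widetilde{\mathcal{H}}^{\pi'}]^K(Q^\pi)\quad\text{for every } K\ge 1.
\end{align*}
The goal is to show that $[\widetilde{\mathcal{H}}^{\pi'}]^K(Q^\pi)(s_0,a_0)\to-\infty$ along a subsequence. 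That contradicts finiteness of $Q^\pi(s_0,a_0)$.

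To carry this out, I will unroll the iterate probabilistically:
\begin{align*}
[\widetilde{\mathcal{H}}^{\pi'}]^K(Q^\pi)(s,a)=\mathbb{E}_{\pi'}\Big[\sum_{j=0}^{K-1}\mathcal{R}(S_j,A_j)+Q^\pi(S_K,A_K)\mathbb{1}\{S_K\neq s_{\text{tm}}\}\,\Big|\,S_0=s,A_0=a\Big].
\end{align*}
This holds because the sums in $\widetilde{\mathcal{H}}^{\pi'}$ run only over nonterminal states, so a terminal continuation contributes zero. The terminal term is bounded in absolute value by $\|Q^\pi\|_\infty<\infty$, since $Q^\pi$ is finite on the finite set $\mathcal{S}\times\mathcal{A}$. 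For the reward part, rewards lie in $[-1,0]$, so the partial sums $\mathbb{E}_{\pi'}[\sum_{j<K}\mathcal{R}(S_j,A_j)]$ are nonincreasing in $K$. By monotone convergence they tend to $Q^{\pi'}(s_0,a_0)=-\infty$. This uses the $\limsup$ definition in \eqref{def:V_SSP}, which here is an actual limit because the partial sums are monotone, and the $Q$-analogue is defined in the same way.

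Combining the two estimates gives
\begin{align*}
Q^\pi(s_0,a_0)\le \mathbb{E}_{\pi'}\Big[\sum_{j<K}\mathcal{R}(S_j,A_j)\Big]+\|Q^\pi\|_\infty\to-\infty,
\end{align*}
which is a contradiction. Hence $\pi'$ is proper.

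The main obstacle I expect is making the value of an improper policy rigorous: I must confirm that $Q^{\pi'}(s,a)$ equals the limit of the finite-horizon expectations. The nonpositive rewards settle this, since monotone partial sums make the limit exist in $[-\infty,0]$. A second point to check is that the probabilistic representation of $[\widetilde{\mathcal{H}}^{\pi'}]^K$ is exact. I will verify it by a short induction on $K$ using the definition of $\widetilde{\mathcal{H}}^{\pi'}$. If rewards were allowed positive values, I would instead have to use the fact that a divergence to $-\infty$ forces some recurrent negative-reward cycle, but the $[-1,0]$ range avoids this.
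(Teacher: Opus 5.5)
Your proof is correct and follows the paper's argument. Both argue by contradiction, iterate $Q^\pi\le[\widetilde{\mathcal{H}}^{\pi'}]^K(Q^\pi)$ using monotonicity, and split the $K$-fold iterate into accumulated rewards plus a tail term; the paper writes this in matrix form as $\sum_{i<K}(\mathcal{P}^{\pi'})^i\mathcal{R}+(\mathcal{P}^{\pi'})^K Q^\pi$, which is exactly your probabilistic unrolling. Your explicit bound of the tail term by $\|Q^\pi\|_\infty$, and your observation that nonpositive rewards make the $\limsup$ an actual limit, fill in steps the paper leaves implicit.
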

\begin{proof}
    We prove by contradiction. Suppose $\pi'$ is not proper. 
    First, observe that by monotonicity of $\widetilde{\mathcal{H}}^{\pi'}$, $Q^{\pi} \leq \widetilde{\mathcal{H}}^{\pi'}(Q^{\pi}) \leq [\widetilde{\mathcal{H}}^{\pi'}]^{2}(Q^{\pi}) \leq \cdots \leq [\widetilde{\mathcal{H}}^{\pi'}]^{n}(Q^{\pi})$ for every $n\in \mathbb{N}$. 
    Now, for any $Q \in \mathbb{R}^{mn}$, $\widetilde{\mathcal{H}}^{\pi'}(Q) = \mathcal{R} + \mathcal{P}^{\pi'} Q$, 
    where $\mathcal{P}^{\pi'}$ is a $mn\times mn$ matrix with $\mathcal{P}^{\pi'}((s,a),(s',a')) = p(s'|s,a)\pi'(a'|s')$ for $(s,a),(s',a')\in\mathcal{S}\times\mathcal{A}$. 
    Consequently, $[\widetilde{\mathcal{H}}^{\pi'}]^{n}$ can be written as 
    \begin{align*}
        [\widetilde{\mathcal{H}}^{\pi'}]^{n}(Q) & = \widetilde{\mathcal{H}}^{\pi'}[\widetilde{\mathcal{H}}^{\pi'}]^{n-1}(Q) \\
        & = \mathcal{R} + \mathcal{P}^{\pi'} [\widetilde{\mathcal{H}}^{\pi'}]^{n-1}(Q) \\
        & = \mathcal{R} + \mathcal{P}^{\pi'} [\mathcal{R} + \mathcal{P}^{\pi'} [\widetilde{\mathcal{H}}^{\pi'}]^{n-2}(Q)] \\
        & = \mathcal{R} + \mathcal{P}^{\pi'} \mathcal{R} + (\mathcal{P}^{\pi'})^{2} [\widetilde{\mathcal{H}}^{\pi'}]^{n-2}(Q). 
    \end{align*}
    Continuing in the above way thus leads to 
    \begin{align*}
        [\widetilde{\mathcal{H}}^{\pi'}]^{n}(Q) = \sum_{i=0}^{n-1} (\mathcal{P}^{\pi'})^{i} \mathcal{R} + (\mathcal{P}^{\pi'})^{n}Q.
    \end{align*}
    Also notice that for any $i \in \{1,\ldots,n-1\}$, $(\mathcal{P}^{\pi'})^{i}((s,a),(s',a'))$ is the probability of going from $(s,a)$ to $(s',a')$ in $i$-transitions. Thus, we can write 
    \begin{align*}
        \sum_{i=0}^{n-1} [(\mathcal{P}^{\pi'})^{i} \mathcal{R}](s,a) = \sum_{i=0}^{n-1} \mathbb{E}[\mathcal{R}(S_i,A_i) \mid S_0 = s, A_0 = a]. 
    \end{align*}
    Since, $\pi'$ is improper, by Assumption~\ref{as:proper-initial}(2), there exists $(s,a)\in\mathcal{S}\times\mathcal{A}$ such that 
    \begin{align*}
        \limsup_{n}\sum_{i=0}^{n-1} [(\mathcal{P}^{\pi'})^{i} \mathcal{R}](s,a) = -\infty,
    \end{align*}
    which implies $\limsup_{n}[(\widetilde{\mathcal{H}}^{\pi'})^{n}(Q)](s,a) = -\infty$. Therefore, we have $Q^{\pi}(s,a) = -\infty$, which is a contradiction given $\pi$ is proper. Hence, $\pi'$ cannot be improper. 
\end{proof}

With the above lemma in hand, it is now enough to show that $\pi_k$'s are proper and $Q^{\pi_k} \leq \widetilde{\mathcal{H}}^{\pi_{k+1}}(Q^{\pi_k})$ for every $k\geq 0$. Indeed, this implies that $\pi_{k+1}$ is proper by Lemma~\ref{le:proper}, and $Q^{\pi_{k+1}}$ will be the unique solution to the Bellman equation $\widetilde{\mathcal{H}}^{\pi_{k+1}}(Q) = Q$. Following the same arguments as before, we can apply the monotonicity of $\widetilde{\mathcal{H}}^{\pi_{k+1}}$ to get $Q^{\pi_k} \leq \widetilde{\mathcal{H}}^{\pi_{k+1}}(Q^{\pi_k}) \leq [\widetilde{\mathcal{H}}^{\pi_{k+1}}]^{2}(Q^{\pi_k})\leq\ldots\leq [\widetilde{\mathcal{H}}^{\pi_{k+1}}]^{n}(Q^{\pi_k})$ for every $n \in \mathbb{N}$. Since $Q^{\pi_{k+1}} = \lim_{n\to\infty}[\widetilde{\mathcal{H}}^{\pi_{k+1}}]^{n}(Q^{\pi_k})$, we will obtain $Q^{\pi_{k}} \leq Q^{\pi_{k+1}}$. 

To prove the claim, we proceed as follows. Starting from $\pi_0$, the arguments in Appendix~\ref{pf:le:monotone_SSP} (1) imply that $Q^{\pi_0} \leq \widetilde{\mathcal{H}}^{\pi_1}(Q^{\pi_0})$. Since $\pi_0$ is proper, it follows from Lemma~\ref{le:proper} that $\pi_1$ is also proper. Next, the arguments in Appendix~\ref{pf:le:monotone_SSP} (2) ensure that $Q^{\pi_1} \leq \widetilde{\mathcal{H}}^{\pi_2}(Q^{\pi_1})$ and hence Lemma~\ref{le:proper} implies that $\pi_2$ is proper. Continuing inductively, applying the arguments in Appendix~\ref{pf:le:monotone_SSP} (2) and Lemma~\ref{le:proper}, we can conclude that $\pi_k$'s are proper and $Q^{\pi_k} \leq \mathcal{H}^{\pi_{k+1}}(Q^{\pi_k})$ for every $k\geq 0$.

\end{document}